%File: anonymous-submission-latex-2026.tex
\documentclass[letterpaper]{article} % DO NOT CHANGE THIS
\usepackage{aaai2026} 

\usepackage{times}  % DO NOT CHANGE THIS
\usepackage{helvet}  % DO NOT CHANGE THIS
\usepackage{courier}  % DO NOT CHANGE THIS
\usepackage[hyphens]{url}  % DO NOT CHANGE THIS
\usepackage{graphicx} % DO NOT CHANGE THIS
\urlstyle{rm} % DO NOT CHANGE THIS
  % DO NOT CHANGE THIS
\usepackage{natbib}  % DO NOT CHANGE THIS AND DO NOT ADD ANY OPTIONS TO IT
\usepackage{caption} % DO NOT CHANGE THIS AND DO NOT ADD ANY OPTIONS TO IT
\frenchspacing  % DO NOT CHANGE THIS
\setlength{\pdfpagewidth}{8.5in} % DO NOT CHANGE THIS
\setlength{\pdfpageheight}{11in} % DO NOT CHANGE THIS
%
% These are recommended to typeset algorithms but not required. See the subsubsection on algorithms. Remove them if you don't have algorithms in your paper.
\usepackage{algorithm}
\usepackage{algorithmic}

%
% These are are recommended to typeset listings but not required. See the subsubsection on listing. Remove this block if you don't have listings in your paper.
\usepackage{newfloat}
\usepackage{listings}
\DeclareCaptionStyle{ruled}{labelfont=normalfont,labelsep=colon,strut=off} % DO NOT CHANGE THIS
\lstset{%
	basicstyle={\footnotesize\ttfamily},% footnotesize acceptable for monospace
	numbers=left,numberstyle=\footnotesize,xleftmargin=2em,% show line numbers, remove this entire line if you don't want the numbers.
	aboveskip=0pt,belowskip=0pt,%
	showstringspaces=false,tabsize=2,breaklines=true}
\floatstyle{ruled}
\newfloat{listing}{tb}{lst}{}
\floatname{listing}{Listing}
%
% Keep the \pdfinfo as shown here. There's no need
% for you to add the /Title and /Author tags.
\pdfinfo{
/TemplateVersion (2026.1)
}

\usepackage{amsmath}
\usepackage{amsthm,thm-restate}
\usepackage{amsfonts}
\usepackage{amssymb}
\usepackage{subcaption}
\usepackage{xspace}
\usepackage{xcolor}
\usepackage{enumitem}
\usepackage{cleveref}
\usepackage{tikz}
\usetikzlibrary{positioning}
\usepackage{booktabs}
\usepackage{comment}

\newtheorem{theorem}{Theorem}
\newtheorem{lemma}{Lemma}

\newtheorem{observation}{Observation}

\newcommand{\tool}{{CoVerD}\xspace}
\newboolean{is_extended}
\setboolean{is_extended}{true}

\nocopyright

\setcounter{secnumdepth}{0} %May be changed to 1 or 2 if section numbers are desired.

% The file aaai2026.sty is the style file for AAAI Press
% proceedings, working notes, and technical reports.
%

% Title

% Your title must be in mixed case, not sentence case.
% That means all verbs (including short verbs like be, is, using,and go),
% nouns, adverbs, adjectives should be capitalized, including both words in hyphenated terms, while
% articles, conjunctions, and prepositions are lower case unless they
% directly follow a colon or long dash
\title{Tight Robustness Certification Through the Convex Hull of $\ell_0$ Attacks}
\author {
    % Authors
    Yuval Shapira,
    Dana Drachsler-Cohen
}
\affiliations {
    % Affiliations
    %\textsuperscript{\rm 1}
    Technion - Israel Institute of Technology\\
    shapirayuval@campus.technion.ac.il, ddana@ee.technion.ac.il
}

\begin{document}

\maketitle

\begin{abstract}
Few-pixel attacks mislead a classifier by modifying a few pixels of an image. Their perturbation space is an $\ell_0$-ball, which is not convex, unlike $\ell_p$-balls for $p\geq 1$. However, existing local robustness verifiers typically scale by relying on linear bound propagation, which captures convex perturbation spaces. We show that the convex hull of an $\ell_0$-ball is the intersection of its bounding box and an asymmetrically scaled $\ell_1$-like polytope. The volumes of the convex hull and this polytope are nearly equal as the input dimension increases. We then show a linear bound propagation that precisely computes bounds over the convex hull and is significantly tighter than bound propagations over the bounding box or our $\ell_1$-like polytope. This bound propagation scales the state-of-the-art  $\ell_0$ verifier on its most challenging robustness benchmarks by 1.24x-7.07x, with a geometric mean of 3.16.
\end{abstract}

% Uncomment the following to link to your code, datasets, an extended version or similar.
% You must keep this block between (not within) the abstract and the main body of the paper.
\begin{links}
    \link{Code}{https://github.com/YuvShap/top-t-CoVerD}
\end{links}
\section{Introduction}
Image network classifiers are a central part of many safety-critical systems in healthcare~\cite{Miller2023SelfSupervisedMedical}, autonomous driving~\cite{Waymo2025Verge}, and automated visual inspection~\cite{AmazonUVeye2023}. 
However, neural networks are susceptible to adversarial example attacks~\cite{INTRIGUING_PROP}.
An adversarial attack injects a small perturbation, often confined to the $\ell_p$-ball of a correctly classified input, to mislead the classifier. Local robustness~\cite{Bastani2016Measuring} is the main safety property for showing the resilience of networks to adversarial attacks. 

Many robustness verifiers analyze the local robustness of neural networks to perturbations in $\ell_\infty$-balls~\cite{TjengXT19,ZhangWCHD18,MullerS0PV21,GehrMDTCV18,KatzBDJK17}, $\ell_2$-balls~\cite{Leino21,Huang21}, and $\ell_1$-balls~\cite{NEURIPS2021_26657d5f,WuZ21}.
To scale, most incomplete verifiers rely on linear bound propagation that overapproximates the network's computations with convex polytopes.  
Since $\ell_p$-balls are convex for $p\geq 1$, such analysis does not introduce overapproximation error in the perturbation space.
However, $\ell_0$-balls are not convex. This raises the question: can linear bound propagation avoid introducing overapproximation error in the non-convex $\ell_0$ perturbation space?  

We mathematically characterize the convex hull of an $\ell_0$-ball around an input $x$ (single- or multi-channel).
We show that the convex hull is the intersection of the bounding box of the $\ell_0$-ball and an asymmetrically scaled $\ell_1$-like polytope.  
We show that the relative excess volume of this polytope 
compared to the convex hull converges exponentially to zero, suggesting it may be a good overapproximation for linear bound propagation. 
 
We then present a linear bound propagation that precisely computes the minimum and maximum of a linear function over an $\ell_0$-ball, which coincide with those over its convex hull.
We show that the minimum and maximum of a linear function $f$ over an $\ell_0$-ball of radius $t$ depend on the sum of the $t$ lowest (for the minimum) or highest (for the maximum) input entry contributions to~$f$.  
This bound propagation generalizes prior bound propagations for few-pixel attacks~\cite{Chiang2020Certified,NEURIPS2020_0cbc5671} to any box input domain and to multi-channel inputs. 
We also present a bound propagation for our $\ell_1$-like polytope and show that the minimum and maximum depend on the product of $t$ and the lowest or highest input entry contribution to $f$.  
That is, this polytope is a looser overapproximation than the convex hull, although  their volumes are very close. 

We integrate our bound propagation in GPUPoly~\cite{MullerS0PV21}, which is repeatedly called by~\tool~\cite{ShapiraWSD24}, the state-of-the-art complete (exact) $\ell_0$ robustness verifier.
We evaluate it over fully-connected and convolutional classifiers, for MNIST, Fashion-MNIST, and CIFAR-10.
Our bound propagation boosts~\tool on its most challenging robustness benchmarks by 1.24x-7.07x, with a geometric mean of 3.16.
 
In summary, our main contributions are:
\begin{itemize}[nosep,nolistsep]
   \item A characterization of the convex hull of $\ell_0$ perturbations.
   \item A linear bound propagation that precisely computes the minimum and maximum of a linear function over an $\ell_0$-ball, which is significantly tighter than bound propagations over its bounding box or our $\ell_1$-like polytope.
   \item An integration of our bound propagation in GPUPoly, for boosting $\ell_0$ robustness verification. 
\end{itemize} 
\section{Preliminaries}
In this section, we provide our notation and setting.

An input $x$ is in a bounded box domain $\mathcal{D}_v=\prod_{i=1}^v[a_i,b_i]$ if $x$ is 
single-channel, or $\mathcal{D}_v=\prod_{i=1}^v(\prod_{j=1}^d[a_i^{(j)},b_i^{(j)}])$ if $x$ is multi-channel.
For example, an RGB image $x$ is in $\mathcal{D}_v=([0,1]^3)^v$. 
We denote an entry by $x_i$ for $i\in [v]=\{1,\ldots,v\}$ and a channel by $x^{(j)}_i$ for $i\in [v]$ and $j\in [d]$.

We focus on the local robustness of a network classifier to few-pixel attacks~\cite{CroceASF022}.  
A classifier maps an input to a score vector over $c$ labels: $N:\mathbb{R}^v\to \mathbb{R}^c$. 
The classification of an input is the label with the maximal score: $\text{class}_N(x)=\text{argmax}(N(x))$.
An adversarial attack considers an input $\bar{x}\in \mathcal{D}_v$ and computes a small perturbation $r$ such that $\bar{x}+r\in \mathcal{D}_v$ and the perturbed input is classified differently:
$\text{class}_N(\bar{x})\neq  \text{class}_N(\bar{x}+r)$. In few-pixel attacks, the attacker is given a bound $t\in [v]$ on the number of entries that can be perturbed, which is often very small, and computes $r$ whose $\ell_0$ norm is at most $t$:
$|\{i\in[v]\mid r_i\neq 0\}|\leq t$.

We consider a generalized setting in which the attacker is limited to perturbing a subset of pixels, $\mathcal{K}\subseteq [v]$.
 Note that $\mathcal{K}$ can be arbitrarily large, and in particular it can be~$[v]$, as in our experiments.
Since the attacker cannot perturb the pixels in $[v]\setminus \mathcal{K}$, given $\bar{x}\in \mathcal{D}_v$, 
the effective input space is $\mathcal{D}=\prod_{i\in \mathcal{K}}[a_i,b_i]$ (and similarly for multi-channel inputs), i.e., pixels not in $\mathcal{K}$ are treated as constants. Formally, the extension of $y\in \mathcal{D}$ to $y'\in \mathcal{D}_v$ sets $y'_i=y_i$, for $i\in \mathcal{K}$, and $y'_i=\bar{x}_i$, for $i\in [v]\setminus \mathcal{K}$. We abuse notation: for $\bar{x}\in \mathcal{D}_v$, we also write $\bar{x}$ for its projection onto $\mathcal{K}$, and for $y\in \mathcal{D}$, we also write $y$ for its extension to $\mathcal{D}_v$.
The space of all perturbed inputs is the {\bf $\ell_0$-ball of $\bar{x}$ with radius~$t$}:
\begin{align*}
   \mathcal{B}_0^t(\bar{x})=\{y\in\mathcal{D}\mid|\{i\in\mathcal{K}\mid y_i\neq \bar{x}_i\}|\leq t\}
\end{align*}
A network $N$ is locally robust in $\mathcal{B}_0^t(\bar{x})$ if it classifies all inputs the same: $\forall y\in \mathcal{B}_0^t(\bar{x}).\ \text{class}_N(\bar{x})=  \text{class}_N(y)$.
Without loss of generality and to simplify notations,
we assume $\mathcal{K}=[k]$, for $ k\in [t, v]$ (though our formulation, theorems, and bound propagations hold for any $\mathcal{K}\subseteq [v]$). In particular, we write $\mathcal{D}=\prod_{i=1}^k[a_i,b_i]$.
\Cref{fig:all}~(left) shows the $\ell_0$-balls of three single-channel inputs, for $t=2$. 
The figure shows that the $\ell_0$-ball is the union of $\binom{3}{2}$ planes. Generally, an $\ell_0$-ball is a union of $\binom{k}{t}$ $t\cdot d$-dimensional flats. 

Although our focus is on robustness of image classifiers, our formulation, theorems, and bound propagations apply to other classifiers.
For example, to text classifiers, which are susceptible to word replacement attacks~\cite{AlzantotSEHSC18}.
Given an input sentence, the attacker replaces up to $t$ words. 
Technically, if the words are independently embedded (e.g., with Word2Vec~\cite{NIPS2013_word2vec}), the problem setting is similar:
given an input sentence $\bar{x}$ with $v$ words $\bar{x}_1,\dots,\bar{x}_v$, the input domain is $\mathcal{D}_v=\prod_{i=1}^v(\prod_{j=1}^d[a_i^{(j)},b_i^{(j)}])$, where $\prod_{j=1}^d[a_i^{(j)},b_i^{(j)}]$ is the bounding box of the embedding of $\bar{x}_i$ and its replacements. 
\section{The Convex Hull of $\ell_0$-Balls}
In this section, we characterize the convex hull of an $\ell_0$-ball.

\begin{figure*}[t]
\centering
\begin{subfigure}{0.3\textwidth}
    \includegraphics[width=\textwidth]{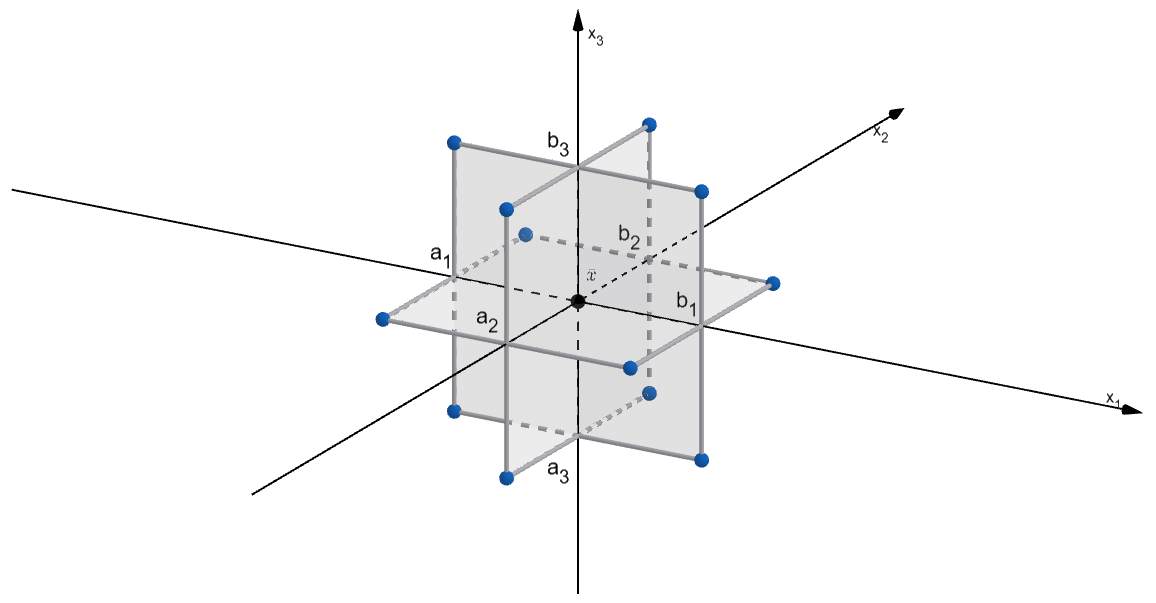} 
    \caption{The $\ell_0$-ball of $\bar{x}=(0,0,0)$.}
    \label{fig:pertubation_space}
\end{subfigure}\hfill
\begin{subfigure}{0.3\textwidth}
    \includegraphics[width=\textwidth]{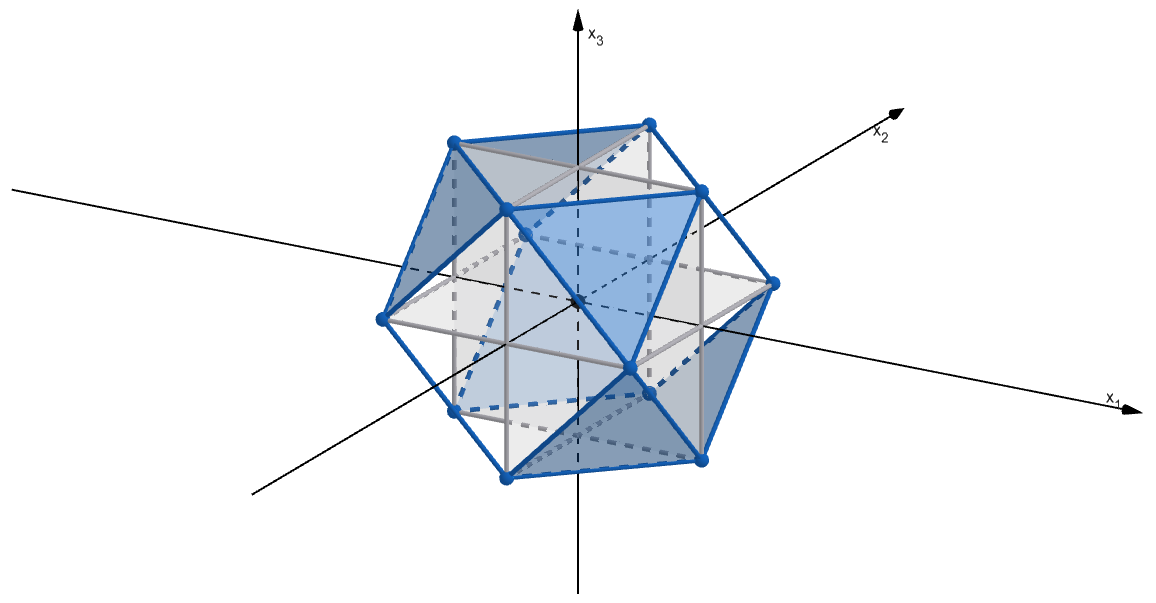} 
    \caption{The convex hull of the $\ell_0$-ball.}
    \label{fig:pertubation_space_triangles} 
\end{subfigure}\hfill
\begin{subfigure}{0.3\textwidth}
    \includegraphics[width=\textwidth]
    {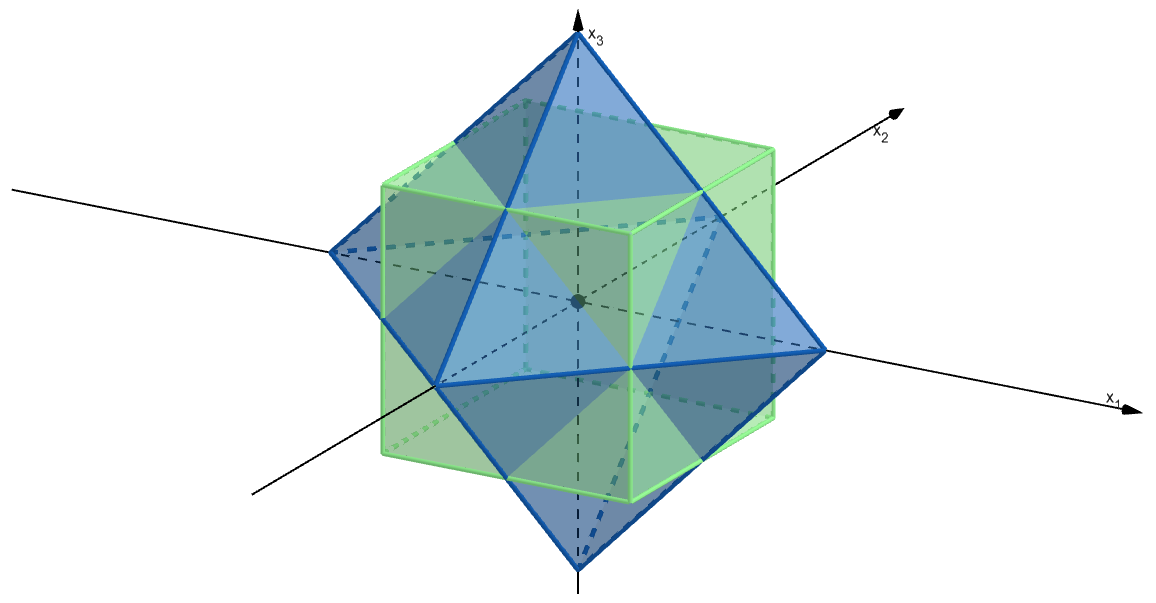}
    \caption{The intersection of $\mathcal{D}$ \& the $\ell_1$-ball.}
    \label{fig:intersection}
\end{subfigure}
\begin{subfigure}{0.3\textwidth}
    \includegraphics[width=\textwidth]{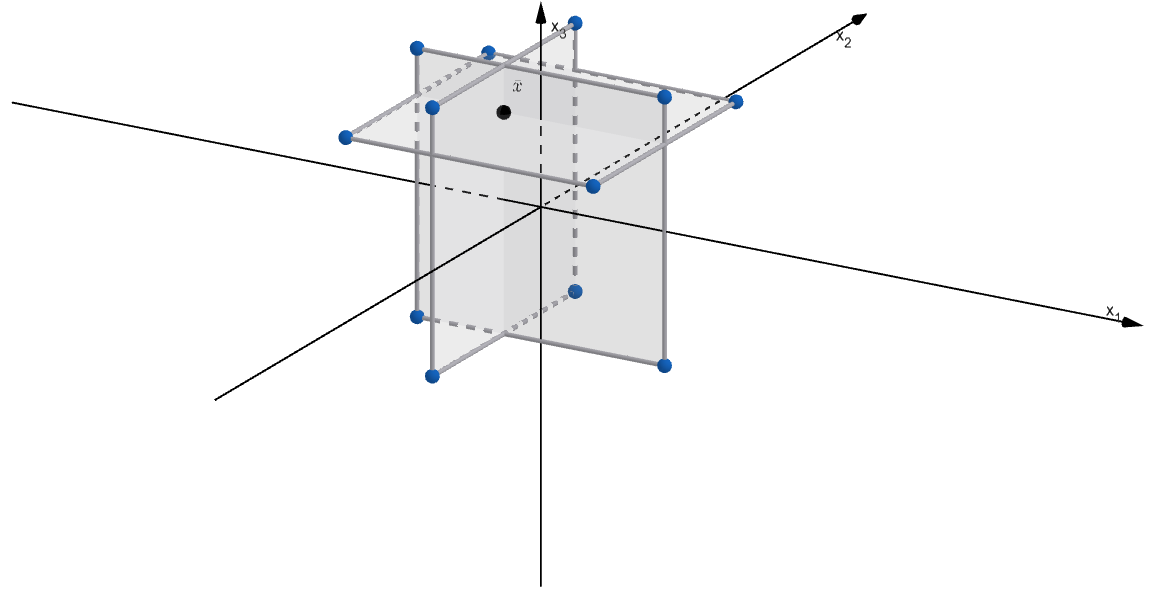} 
    \caption{The $\ell_0$-ball of $\bar{x}=(-0.3,0,0.65)$.}
    \label{fig:pertubation_space2}
\end{subfigure}\hfill
\begin{subfigure}{0.3\textwidth}
    \includegraphics[width=\textwidth]{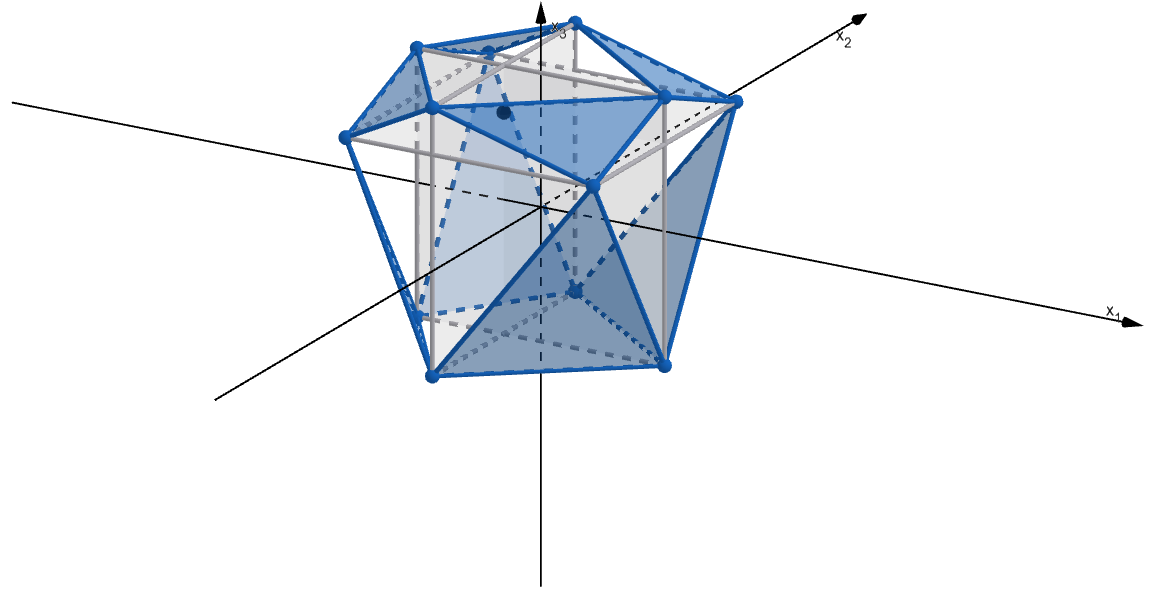} 
    \caption{The convex hull of the $\ell_0$-ball.}
    \label{fig:pertubation_space_triangles2} 
\end{subfigure}\hfill
\begin{subfigure}{0.3\textwidth}
    \includegraphics[width=\textwidth]
    {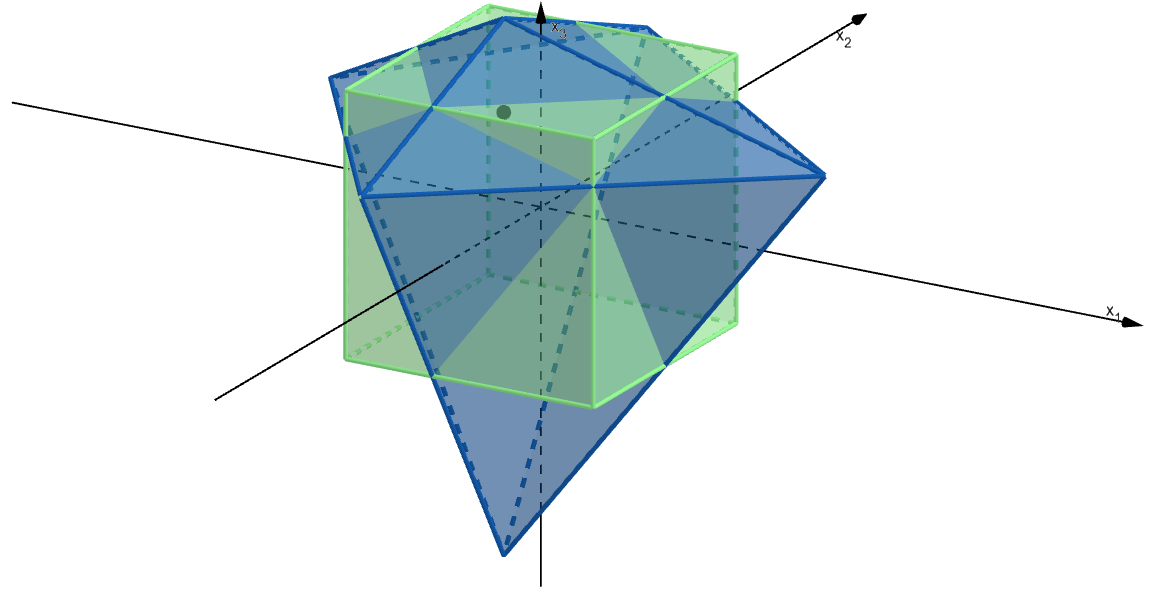}
    \caption{The intersection of $\mathcal{D}$ \& $ \widetilde{\mathcal{B}}_1^t(\bar{x})$.}
    \label{fig:intersection2}
\end{subfigure}
\begin{subfigure}{0.3\textwidth}
    \includegraphics[width=\textwidth]{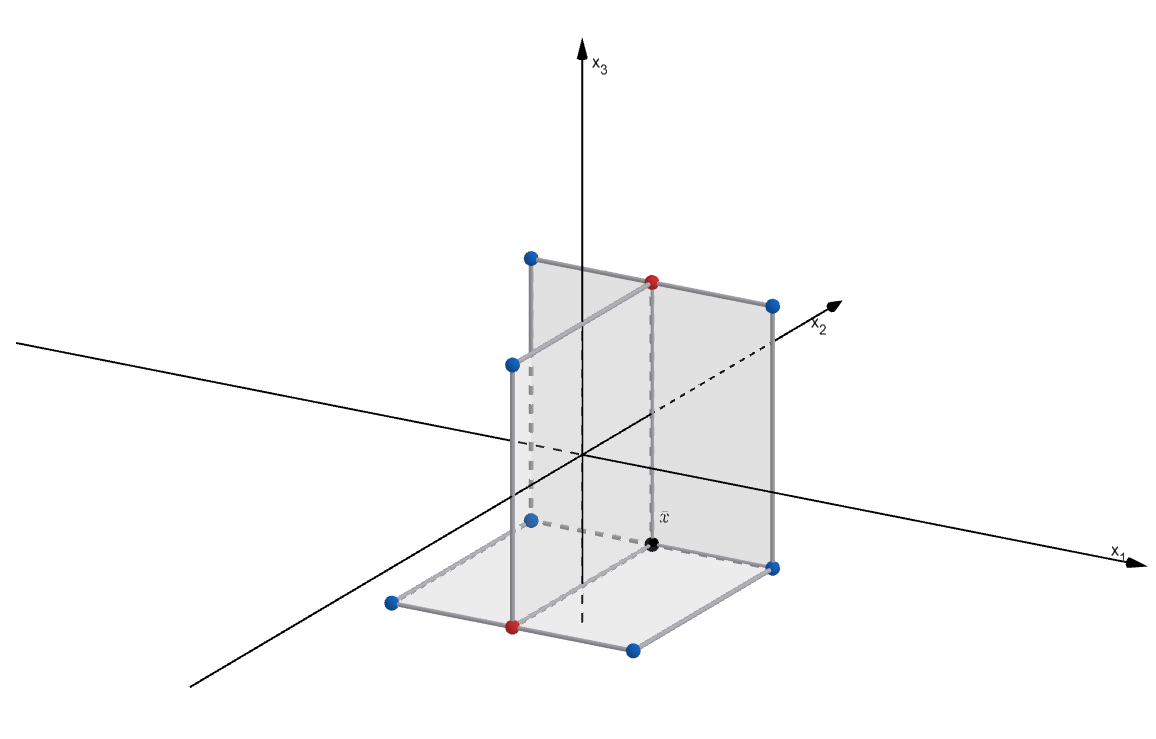} 
    \caption{The $\ell_0$-ball of $\bar{x}=(0,1,-1)$.}
    \label{fig:pertubation_space3}
\end{subfigure}\hfill
\begin{subfigure}{0.3\textwidth}
    \includegraphics[width=\textwidth]{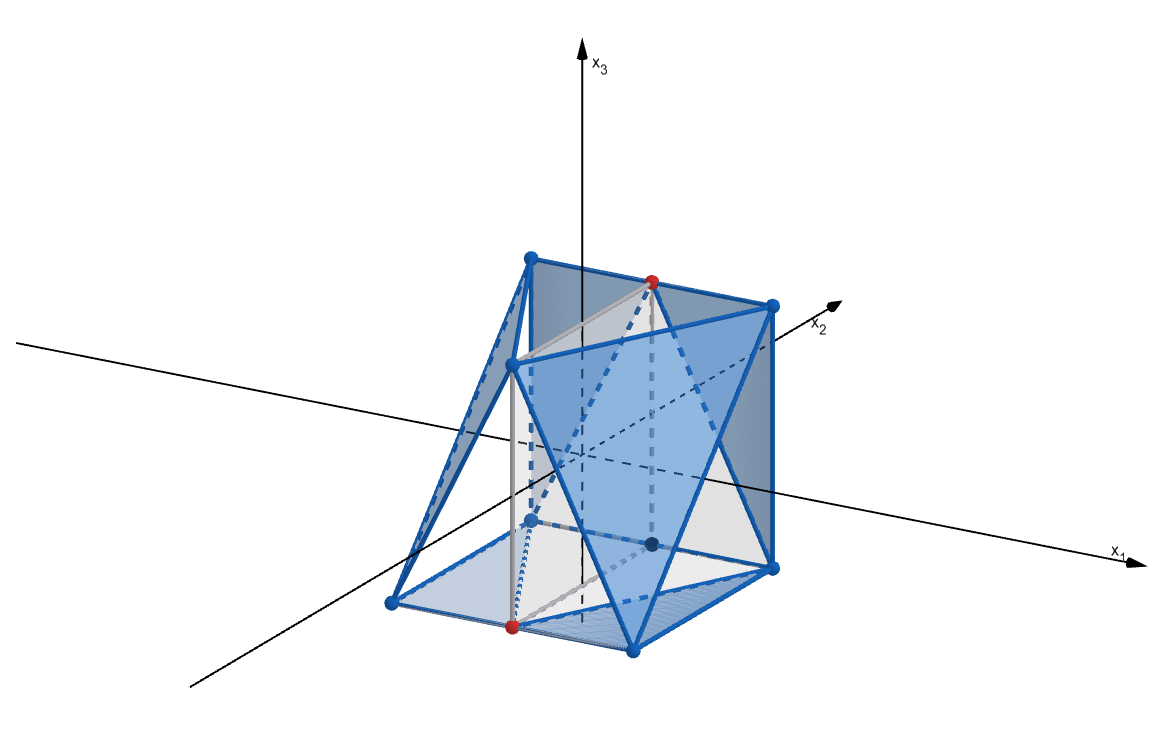} 
    \caption{The convex hull of the $\ell_0$-ball.}
    \label{fig:pertubation_space_triangles3} 
\end{subfigure}\hfill
\begin{subfigure}{0.3\textwidth}
    \includegraphics[width=\textwidth]
    {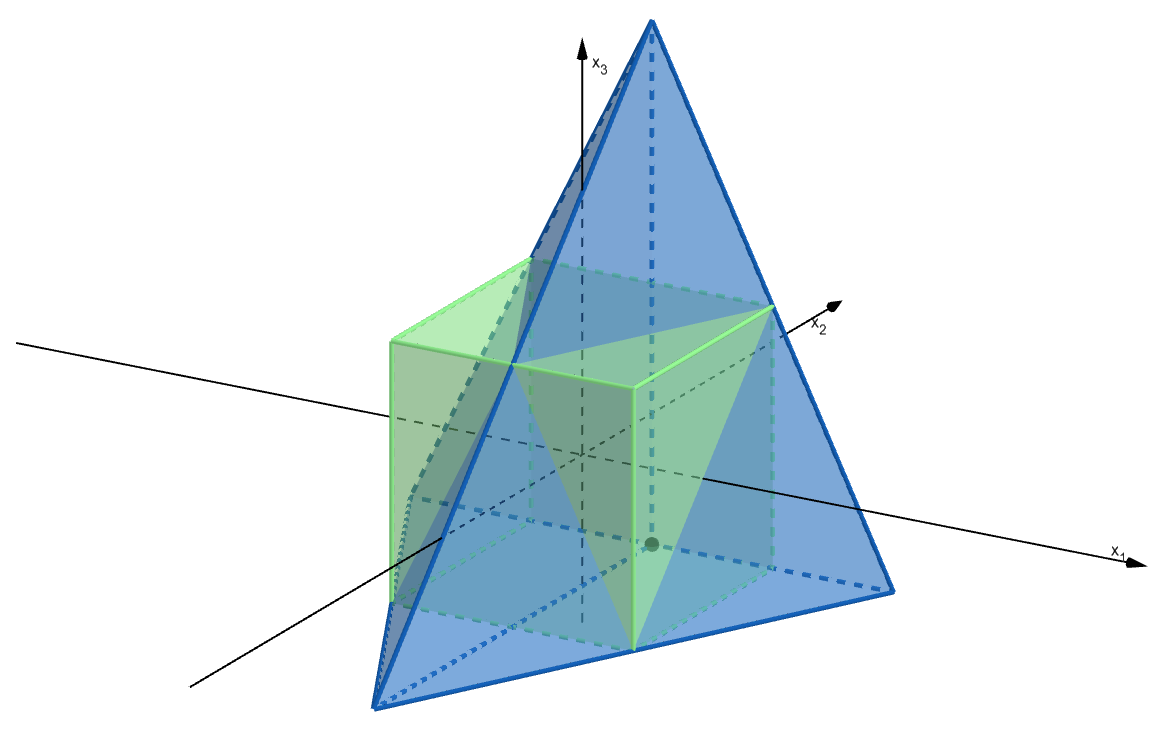}
    \caption{The intersection of $\mathcal{D}$ \& $ \widetilde{\mathcal{B}}_1^t(\bar{x})$.}
    \label{fig:intersection3}
\end{subfigure}
\caption{Illustration of the perturbation space, for $\mathcal{D}=\prod_{i=1}^k[a_i,b_i]=[-1,1]^3$ and $t=2$. Left: The $\ell_0$-ball of an input~$\bar{x}$, $\mathcal{B}_0^t(\bar{x})$. Middle: Its convex hull. Right: The convex hull as the intersection of $\mathcal{D}$ and an $\ell_1$-like polytope (\Cref{thm:single_channel_conv_hull}), $\widetilde{\mathcal{B}}_1^t(\bar{x})=\{y\in\mathbb{R}^k \mid \sum_{i=1}^k\delta_{\bar{x}}^i(y)\leq t\}$ (see~\Cref{eq:delta}).
The plots show $\bar{x}$ as a black dot and the corners of the $\ell_0$-ball as blue dots.
}\label{fig:all}
\end{figure*} 

\subsection{The Convex Hull for Single-Channel Inputs}
In this section, we focus on a single-channel input $\bar{x}$.

\paragraph{Illustration}
We illustrate by considering the $\ell_0$-ball of $\bar{x}=(0,0,0)$ that lies in the center of $ \mathcal{D}=[-1,1]^3$, shown in~\Cref{fig:pertubation_space}. 
The figure illustrates that the $\ell_0$-ball is not convex, unlike $\ell_p$-balls for $p\geq 1$. 
The corners $E_{\mathcal{B}_0^t(\bar{x})}$ of the $\ell_0$-ball are all inputs $y\in \mathbb{R}^k$ such that $k-t$ of their entries are zeros, and the other entries $y_i$ are one of the bounds $-1$ or $1$ (blue dots in~\Cref{fig:pertubation_space}).
The convex hull of the $\ell_0$-ball is the smallest convex set that contains it. It is obtained by connecting the corners $E_{\mathcal{B}_0^{t}(\bar{x})}$ (\Cref{fig:pertubation_space_triangles}).
We shortly provide a characterization of this convex hull.  
One may wonder whether it is crucial, or whether it suffices to overapproximate the $\ell_0$-ball by a containing convex set that can be analyzed by existing robustness verifiers.
For example, most verifiers are designed for box neighborhoods. Thus, one may attempt verifying local robustness in an $\ell_0$-ball by overapproximating it with its bounding box, whose boundary contains the corners of the $\ell_0$-ball, and checking robustness in this box. However, this box equals the input space~$\mathcal{D}$. For large $k$, the network is not locally robust in $\mathcal{D}$ (e.g., for $k=v$, it is not robust since the network does not classify all inputs the same). Thus, this approach usually fails for large $k$.
Instead, one may overapproximate the $\ell_0$-ball with the tightest convex $\ell_p$-ball, which is the $\ell_1$-ball: $\{y \in \mathbb{R}^k \mid \Vert y\Vert_1\leq t\}$~\cite{NIPS2006_75b9b6dc, pmlr-v235-baninajjar24a}, and submit it to an $\ell_1$-ball robustness verifier~\cite{NEURIPS2021_26657d5f}. 
However, the $\ell_1$-ball has sharp corners whose $\ell_\infty$ distance from $\mathcal{D}$ can be very high (\Cref{fig:intersection}). 
This overapproximation error can fail the verification. Our first theorem shows that, in the case where $\bar{x}$ is in the center of $\mathcal{D}$, the convex hull of the $\ell_0$-ball of $\bar{x}$ is the intersection of $\mathcal{D}$ and the $\ell_1$-ball. \Cref{fig:intersection} shows  $\mathcal{D}$ (in green) and the $\ell_1$-ball with radius $t=2$ (in blue).

\paragraph{General case}
We next present our characterization for the general case where $\bar{x}$ may not be in the center of $\mathcal{D}$, e.g., the $\bar{x}$ in~\Cref{fig:pertubation_space2,fig:pertubation_space3} (the black dot). 
In this case, the corners $E_{\mathcal{B}_0^t(\bar{x})}$ (the blue dots) are inputs $y\in \mathbb{R}^k$ such that $k-t$ of their entries $y_i$ are $\bar{x}_i$, and the other entries $y_i$ are one of the bounds $a_i$ or $b_i$ (which may be equal to $\bar{x}_i$, e.g.,~\Cref{fig:pertubation_space3}).
We note that, unlike the simpler case, it can happen that an input $y$ is not a corner even if it has $k-t$ entries $y_i=\bar{x}_i$ and its other entries $y_i$ are $a_i$ or $b_i$ (red dots and $\bar{x}$ itself in~\Cref{fig:pertubation_space3}).
The convex hull is obtained by connecting the corners $E_{\mathcal{B}_0^t(\bar{x})}$ (\Cref{fig:pertubation_space_triangles2,fig:pertubation_space_triangles3}).
We show that it equals the intersection of $\mathcal{D}$ and an \emph{asymmetrically scaled} $\ell_1$-ball around $\bar{x}$.
To define this polytope, we first define the \emph{asymmetrically scaled $i$ distance}, for $i\in [k]$. 
It measures how far $y_i$ is from $\bar{x}_i$: if $y_i=\bar{x}_i$, this distance is zero, otherwise if $y_i$ is one of the bounds $a_i$ or $b_i$, this distance is one, and otherwise this distance is proportional to the distance between ${\bar{x}}_{i}$ and the bound that is closer to $y_{i}$ (than to ${\bar{x}}_{i}$).
Formally, the asymmetrically scaled $i$ distance from input $y$ to $\bar{x}$ is the difference of their $i^{\text{th}}$ entries normalized by the distance of $\bar{x}_i$ and the bound $b_i$, if $y_i > \bar{x}_i$, or $a_i$, if $y_i < \bar{x}_i$:
\begin{equation} \label{eq:delta}
    \delta_{\bar{x},a_i,b_i}^{i}(y)=\frac{y_i-\bar{x}_i}{\mathbf{1}_{\{y_i>\bar{x}_i\}}(b_i-\bar{x}_i)+\mathbf{1}_{\{y_i<\bar{x}_i\}}(a_i-\bar{x}_i)}
\end{equation}
We omit $a_i$ and $b_i$ to simplify notation.
We define $\delta_{\bar{x}}^i(y)=0$, if $y_i=\bar{x}_i$ (when the numerator and denominator are $0$) and $\delta_{\bar{x}}^i(y)=\infty$, if $\bar{x}_i=b_i$ and $y_i>\bar{x}_i$ or if $\bar{x}_i=a_i$ and $y_i<\bar{x}_i$ (when the denominator is $0$).
Thus, $\delta_{\bar{x}}^{i}(y)\in [0,\infty]$. 
If $y\in \mathcal{D}$, then $\delta_{\bar{x}}^{i}(y)\in[0,1]$, since for $y_i=\bar{x}_i$,  $\delta_{\bar{x}}^{i}(y)=0$, and for $y_i\in\{a_i,b_i\}\setminus\{\bar{x}_i\}$,  $\delta_{\bar{x}}^{i}(y)=1$.
As an example, consider $[a_i,b_i]=[-1,1]$ and $y_i=0.3$. For $\bar{x}_i=0$ (the center),  $\delta_{\bar{x}}^{i}(y)=0.3$. For $\bar{x}_i=0.7$,  
$\delta_{\bar{x}}^{i}(y)=\frac{0.3-0.7}{0+1\cdot (-1-0.7)}=0.235$.

The \emph{asymmetrically scaled $\ell_1$-ball} $\widetilde{\mathcal{B}}_1^t(\bar{x})$ is the set of inputs whose sum over all asymmetrically scaled $i$ distances from $\bar{x}$ (\Cref{eq:delta})   
is at most $t$:
 $$\widetilde{\mathcal{B}}_1^t(\bar{x})=\{y\in\mathbb{R}^k \mid \sum_{i=1}^k\delta_{\bar{x}}^i(y)\leq t\}$$
 Note that $\widetilde{\mathcal{B}}_1^t(\bar{x})$ is defined over $y\in\mathbb{R}^k$ (unlike $\mathcal{B}_0^t(\bar{x})$, defined over $y\in \mathcal{D}$), because it generalizes the $\ell_1$-ball, defined over $y\in \mathbb{R}^k$.
\Cref{fig:intersection2,fig:intersection3} exemplify $\widetilde{\mathcal{B}}_1^t(\bar{x})$ (in blue). 

\paragraph{Theorem} We next state and prove our main theorem, which characterizes the convex hull of an $\ell_0$-ball. In the following, we denote by $Conv(S)$ the convex hull of a set $S$. 
\begin{theorem}\label{thm:single_channel_conv_hull} 
$Conv(\mathcal{B}_0^t(\bar{x}))=\mathcal{D}\cap \widetilde{\mathcal{B}}_1^t(\bar{x})$.
\end{theorem}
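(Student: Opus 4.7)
The plan is to prove the equality by showing two inclusions.

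First, for $Conv(\mathcal{B}_0^t(\bar{x})) \subseteq \mathcal{D} \cap \widetilde{\mathcal{B}}_1^t(\bar{x})$, I would begin by verifying that the right-hand side is convex: $\mathcal{D}$ is trivially convex, and $\widetilde{\mathcal{B}}_1^t(\bar{x})$ is a sub-level set of $y \mapsto \sum_i \delta_{\bar{x}}^i(y)$, where each summand is a piecewise-linear V-shape in $y_i$ with minimum $0$ attained at $\bar{x}_i$, hence convex; the sum and its sub-level set are therefore convex. Since the convex hull is the smallest convex set containing $\mathcal{B}_0^t(\bar{x})$, it then suffices to check the raw containment $\mathcal{B}_0^t(\bar{x}) \subseteq \mathcal{D} \cap \widetilde{\mathcal{B}}_1^t(\bar{x})$. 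The inclusion in $\mathcal{D}$ is by definition; for the other constraint, at most $t$ coordinates of $y$ differ from $\bar{x}$, on those $\delta_{\bar{x}}^i(y) \in [0,1]$ since $y_i \in [a_i, b_i]$, and on the rest $\delta_{\bar{x}}^i(y) = 0$, so $\sum_i \delta_{\bar{x}}^i(y) \leq t$.

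For the reverse inclusion, fix $y \in \mathcal{D} \cap \widetilde{\mathcal{B}}_1^t(\bar{x})$ and set $\delta_i = \delta_{\bar{x}}^i(y) \in [0,1]$, so that $\sum_i \delta_i \leq t$. For each $i$ with $\delta_i > 0$ let $z_i \in \{a_i, b_i\}$ be the endpoint on the same side of $\bar{x}_i$ as $y_i$; by definition of $\delta_{\bar{x}}^i$ this gives $y_i = (1-\delta_i)\bar{x}_i + \delta_i z_i$. For $\delta_i = 0$ set $z_i = \bar{x}_i$. The key reduction is to write the vector $\delta = (\delta_1,\ldots,\delta_k) \in [0,1]^k$ as a convex combination $\delta = \sum_J \mu_J \mathbf{1}_J$ of indicator vectors of subsets $J \subseteq [k]$ with $|J| \leq t$. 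Given such a decomposition, define $c_J$ by $(c_J)_i = z_i$ if $i \in J$ and $(c_J)_i = \bar{x}_i$ otherwise; then $c_J$ differs from $\bar{x}$ in at most $t$ coordinates and lies in $\mathcal{D}$, hence $c_J \in \mathcal{B}_0^t(\bar{x})$. A coordinatewise check then confirms $\sum_J \mu_J c_J = y$, exhibiting $y$ as a convex combination of points of $\mathcal{B}_0^t(\bar{x})$.

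The main obstacle is the decomposition lemma: every $\delta \in [0,1]^k$ with $\sum_i \delta_i \leq t$ lies in the convex hull of the $\{0,1\}$-valued vectors with at most $t$ ones. I would establish this via total unimodularity of the constraint matrix defining $\{z \in [0,1]^k : \sum_i z_i \leq t\}$, which forces the polytope to be integral, so its vertices are exactly those $0/1$ vectors; an elementary greedy extraction argument (repeatedly peeling off an indicator summand with maximal admissible weight) gives an alternative self-contained proof. One minor subtlety to address is the degenerate case $\bar{x}_i \in \{a_i, b_i\}$, where $\delta_{\bar{x}}^i$ can be $\infty$ on one side; the hypothesis $y \in \widetilde{\mathcal{B}}_1^t(\bar{x})$ rules out that side, so $z_i$ is unambiguous and the construction goes through uniformly.
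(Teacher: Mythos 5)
Your proof is correct, and while the easy inclusion $Conv(\mathcal{B}_0^t(\bar{x}))\subseteq\mathcal{D}\cap\widetilde{\mathcal{B}}_1^t(\bar{x})$ matches the paper's (convexity of the intersection plus raw containment; your observation that $\widetilde{\mathcal{B}}_1^t(\bar{x})$ is a sublevel set of a sum of convex V-shaped functions is a cleaner route to its convexity than the paper's case analysis in Lemma~1), your argument for the reverse inclusion is genuinely different. The paper invokes the Krein--Milman theorem, which requires also establishing compactness of $\widetilde{\mathcal{B}}_1^t(\bar{x})$ (Lemma~1), and then shows via a perturbation argument that every extreme point of the intersection has all coordinates in $\{a_i,\bar{x}_i,b_i\}$ with at most $t$ deviations (Lemma~2). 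You instead pull the point back to the standard polytope $\{z\in[0,1]^k:\sum_i z_i\leq t\}$ via the coordinatewise change of variables $\delta_i=\delta_{\bar{x}}^i(y)$, use integrality of that polytope (total unimodularity, or directly: a vertex with the sum constraint tight has at most one fractional coordinate, contradicting integrality of $t$) to write $\delta=\sum_J\mu_J\mathbf{1}_J$ over sets $|J|\leq t$, and then verify coordinatewise that $\sum_J\mu_J c_J=y$ using $\delta_i=\sum_{J\ni i}\mu_J$ and $y_i=(1-\delta_i)\bar{x}_i+\delta_i z_i$. This buys you a fully constructive decomposition of any point of $\mathcal{D}\cap\widetilde{\mathcal{B}}_1^t(\bar{x})$ as an explicit convex combination of corners, and it avoids both Krein--Milman and the compactness argument entirely; the core combinatorial fact (fractional coordinates force non-extremality because $t$ is an integer) is the same one driving the paper's Lemma~2, just packaged as a known integrality statement about a standard polytope. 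The trade-off is that the paper's extreme-point machinery transfers more directly to the multi-channel set $\widetilde{\mathcal{B}}_{1,\infty}^t(\bar{x})$ (Lemmas~3 and~4), where the constraint involves $\max_{j}\delta_{\bar{x}}^{i,j}(y)$ and a per-coordinate change of variables no longer linearizes the constraint so cleanly. Your handling of the degenerate case $\bar{x}_i\in\{a_i,b_i\}$ is also adequate: membership in $\mathcal{D}$ already rules out the side on which $\delta_{\bar{x}}^i$ is infinite.
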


\begin{proof}
\phantom{a}\\
$\bullet$ $Conv(\mathcal{B}_0^t(\bar{x}))\subseteq \mathcal{D}\cap \widetilde{\mathcal{B}}_1^t(\bar{x})$: 
  By definition, 
  $\mathcal{B}_0^t(\bar{x})\subseteq \mathcal{D}$ and $\mathcal{B}_0^t(\bar{x})\subseteq\widetilde{\mathcal{B}}_1^t(\bar{x})$. Hence, $\mathcal{B}_0^t(\bar{x})\subseteq \mathcal{D} \cap \widetilde{\mathcal{B}}_1^t(\bar{x})$. Since $Conv(\cdot)$ is monotone with respect to set inclusion,
    $Conv(\mathcal{B}_0^t(\bar{x}))\subseteq Conv(\mathcal{D}\cap \widetilde{\mathcal{B}}_1^t(\bar{x}))$.
  The set $\mathcal{D}$ is convex (as a box) and the set $\widetilde{\mathcal{B}}_1^t(\bar{x})$ is convex (by Lemma 1, appendix\ifthenelse{\boolean{is_extended}}{}{, in the extended version}) and thus so is $\mathcal{D}\cap \widetilde{\mathcal{B}}_1^t(\bar{x})$. Thus, $Conv(\mathcal{B}_0^t(\bar{x}))\subseteq \mathcal{D}\cap \widetilde{\mathcal{B}}_1^t(\bar{x})$.
  \\   
 $\bullet$   $ \mathcal{D}\cap\widetilde{\mathcal{B}}_1^t(\bar{x})\subseteq Conv(\mathcal{B}_0^t(\bar{x}))$: 
  The sets $\mathcal{D}$ and $\widetilde{\mathcal{B}}_1^t(\bar{x})$ are convex and compact (as a box or by Lemma 1, appendix), and thus so is
    $\mathcal{D}\cap\widetilde{\mathcal{B}}_1^t(\bar{x})$.
     By the Krein-Milman Theorem~\cite{rockafellar1997convex}, a compact convex set is the convex hull of its extreme points. Thus, $\mathcal{D}\cap\widetilde{\mathcal{B}}_1^t(\bar{x})=Conv(E_\cap)$, where $E_\cap$ is the set of extreme points of $\mathcal{D}\cap\widetilde{\mathcal{B}}_1^t(\bar{x})$. 
    Since $E_\cap \subseteq \mathcal{B}_0^t(\bar{x})$ (Lemma 2, appendix),
     $Conv(E_\cap) \subseteq Conv(\mathcal{B}_0^t(\bar{x}))$.
\end{proof}  

\paragraph{Volumes} We next study the volumes of the convex hull, $\mathcal{D}\cap \widetilde{\mathcal{B}}_1^t(\bar{x})$, and $\widetilde{\mathcal{B}}_1^t(\bar{x})$ (the volume of $\mathcal{D}$ is $\prod_{i=1}^k(b_i-a_i)$).
We show that the relative excess volume of $\widetilde{\mathcal{B}}_1^t(\bar{x})$ compared to $\mathcal{D}\cap \widetilde{\mathcal{B}}_1^t(\bar{x})$ converges exponentially to $0$ as $k$ increases.
We begin by providing closed-form expressions for these volumes, which depend only on $\mathcal{D}$ and~$t$ (not on the input $\bar{x}$).
 In particular, this implies that the volumes of the convex hulls shown in \Cref{fig:pertubation_space_triangles,fig:pertubation_space_triangles2,fig:pertubation_space_triangles3} are equal.
\begin{restatable}[]{theorem}{ftb}\label{thm2}
    \begin{enumerate}[label=\arabic*.]
        \item $\text{vol}(\widetilde{\mathcal{B}}_1^t(\bar{x}))=\text{vol}(\mathcal{D})\frac{t^k}{k!}$.
        \item $\text{vol}(\mathcal{D}\cap\widetilde{\mathcal{B}}_1^t(\bar{x}))=\text{vol}(\mathcal{D})\frac{t^k}{k!}\sum_{r=0}^{t-1}(-1)^r\binom{k}{r}(1-\frac{r}{t})^k$.
    \end{enumerate}
\end{restatable}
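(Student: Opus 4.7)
The plan is to decompose $\widetilde{\mathcal{B}}_1^t(\bar{x})$ into the $2^k$ quadrants around $\bar{x}$ and reduce each one to a standard simplex via a single linear change of variables. For each sign pattern $s\in\{-1,+1\}^k$, let $Q_s=\{y\in\mathbb{R}^k:s_i(y_i-\bar{x}_i)\geq 0\text{ for all }i\}$ and set $c_i^{(+)}=b_i-\bar{x}_i$, $c_i^{(-)}=\bar{x}_i-a_i$. On $Q_s$, the map $u_i=\delta_{\bar{x}}^i(y)$ from \Cref{eq:delta} becomes the linear bijection $u_i=s_i(y_i-\bar{x}_i)/c_i^{(s_i)}$, with Jacobian determinant $\prod_i c_i^{(s_i)}$. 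Quadrants whose Jacobian vanishes (i.e., $\bar{x}_i$ on the active boundary, where the other side of the corresponding axis already sits outside $\widetilde{\mathcal{B}}_1^t(\bar{x})$ because $\delta_{\bar{x}}^i=\infty$) and the shared quadrant boundaries have Lebesgue measure zero, so they can be ignored.

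For part 1, under this change of variables $Q_s\cap\widetilde{\mathcal{B}}_1^t(\bar{x})$ maps bijectively onto the simplex $T=\{u\in\mathbb{R}_{\geq 0}^k:\sum_i u_i\leq t\}$ of volume $t^k/k!$, so $Q_s$ contributes $(t^k/k!)\prod_i c_i^{(s_i)}$. Summing over sign patterns factorizes cleanly:
\[
\sum_{s\in\{\pm 1\}^k}\prod_{i=1}^k c_i^{(s_i)}=\prod_{i=1}^k\bigl(c_i^{(+)}+c_i^{(-)}\bigr)=\prod_{i=1}^k(b_i-a_i)=\text{vol}(\mathcal{D}),
\]
which yields part 1 immediately. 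For part 2, the same change of variables sends $Q_s\cap\mathcal{D}\cap\widetilde{\mathcal{B}}_1^t(\bar{x})$ to the truncated simplex $T_k^t=\{u\in[0,1]^k:\sum_i u_i\leq t\}$, whose volume is independent of $s$, so the identical factorization gives $\text{vol}(\mathcal{D}\cap\widetilde{\mathcal{B}}_1^t(\bar{x}))=\text{vol}(\mathcal{D})\cdot\text{vol}(T_k^t)$.

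The remaining step, and the main computational obstacle, is evaluating $\text{vol}(T_k^t)$ by inclusion-exclusion on the overflow events $C_j=\{u\in T:u_j>1\}$. For $J\subseteq[k]$ the shift $v_j=u_j-1$ for $j\in J$ shows $\text{vol}(\bigcap_{j\in J}C_j)=\max(t-|J|,0)^k/k!$, so inclusion-exclusion delivers
\[
\text{vol}(T_k^t)=\frac{1}{k!}\sum_{r=0}^{t-1}(-1)^r\binom{k}{r}(t-r)^k=\frac{t^k}{k!}\sum_{r=0}^{t-1}(-1)^r\binom{k}{r}\Bigl(1-\tfrac{r}{t}\Bigr)^k,
\]
which combined with the factorization completes part 2. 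The hard part is this inclusion-exclusion together with the bookkeeping for degenerate quadrants when $\bar{x}$ sits on a face of $\mathcal{D}$; everything else is a direct change of variables exploiting the piecewise-linear structure of $\delta_{\bar{x}}^i$.
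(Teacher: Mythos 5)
Your proof is correct and follows essentially the same route as the paper's: the same orthant decomposition, the same per-orthant affine change of variables onto the simplex $\Delta_{k,t}$ (resp.\ $\Delta_{k,t}\cap[0,1]^k$), and the same factorization of the summed Jacobians into $\mathrm{vol}(\mathcal{D})$. The only difference is that the paper obtains $\mathrm{vol}(\Delta_{k,t}\cap[0,1]^k)$ by citing the Irwin--Hall CDF, whereas you derive that formula directly by inclusion--exclusion on the overflow events; you also handle the degenerate orthants (when $\bar{x}$ lies on a face of $\mathcal{D}$) slightly more explicitly.
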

\begin{proof}[Proof outline (proof is in the appendix)]
We partition $\widetilde{\mathcal{B}}_1^t(\bar{x})$ and $\mathcal{D}\cap \widetilde{\mathcal{B}}_1^t(\bar{x})$ into $2^k$ parts, one for each orthant. 
Each part is a scaled version of $\Delta_{k,t}=\{z\in[0,\infty)^k\mid\sum_{i=1}^kz_i\leq t\}$ for $\widetilde{\mathcal{B}}_1^t(\bar{x})$ and  $[0,1]^k\cap \Delta_{k,t}$ for $\mathcal{D}\cap\widetilde{\mathcal{B}}_1^t(\bar{x})$.
It is known that $\text{vol}(\Delta_{k,t})=\frac{t^k}{k!}$.
Thus, summing over the scaled volumes yields $\text{vol}(\widetilde{\mathcal{B}}_1^t(\bar{x}))=\text{vol}(\mathcal{D})\frac{t^k}{k!}$ (proving bullet 1)
and $\text{vol}(\mathcal{D}\cap \widetilde{\mathcal{B}}_1^t(\bar{x}))=\text{vol}(\mathcal{D})\cdot \text{vol}(\Delta_{k,t}\cap[0,1]^k)$.
 We show that $\text{vol}(\Delta_{k,t}\cap[0,1]^k)$ is equal to the cumulative distribution function (CDF) of the Irwin–Hall distribution \cite{Irwin, Hall} evaluated at $t$, which is $\frac{t^k}{k!}\sum_{r=0}^{t-1}(-1)^r\binom{k}{r}(1-\frac{r}{t})^k$ (proving bullet 2).
 Technically, to compute the volume, we compute the probability that a uniformly sampled input from $[0,1]^k$ lies in $\Delta_{k,t}$, i.e., that the sum of its entries is at most $t$, and multiply it by $\text{vol}([0,1]^k)=1$. 
This probability is obtained by evaluating the Irwin-Hall CDF at $t$.
\end{proof}

\paragraph{Excess volumes}
We next compare the volume of the convex hull to $\mathcal{D}$ and to $\widetilde{\mathcal{B}}_1^t(\bar{x})$. To shorten notation, we 
write
$\text{vol}(\mathcal{D}\cap\widetilde{\mathcal{B}}_1^t(\bar{x}))=\text{vol}(\widetilde{\mathcal{B}}_1^t(\bar{x}))\sum_{r=0}^{t-1}c_{r,k,t}$, where $c_{r,k,t}=(-1)^r\binom{k}{r}(1-\frac{r}{t})^k$, for $r\in\{0,\dots,t-1\}$.  
Note that $c_{r,k,t}=1$, for $r=0$, and $c_{r,k,t}$ converges exponentially to $0$ as $k$ increases, for $r>0$. 
The relative excess volume of $\widetilde{\mathcal{B}}_1^t(\bar{x})$:
\begin{align*}
 \frac{\text{vol}(\widetilde{\mathcal{B}}_1^t(\bar{x})\setminus(\mathcal{D}\cap\widetilde{\mathcal{B}}_1^t(\bar{x}))))}{\text{vol}(\mathcal{D}\cap\widetilde{\mathcal{B}}_1^t(\bar{x}))}
=\frac{-\sum_{r=1}^{t-1}c_{r,k,t}}{1+\sum_{r=1}^{t-1}c_{r,k,t}} 
\end{align*}
This expression is asymptotically negligible because the numerator approaches $0$ and the denominator approaches $1$. The relative excess volume of $\mathcal{D}$ is:
\begin{align*}
 \frac{\text{vol}(\mathcal{D}\setminus(\mathcal{D}\cap\widetilde{\mathcal{B}}_1^t(\bar{x}))))}{\text{vol}(\mathcal{D}\cap\widetilde{\mathcal{B}}_1^t(\bar{x}))}=
\frac{\frac{k!}{t^k}-\sum_{r=0}^{t-1}c_{r,k,t}}{1+\sum_{r=1}^{t-1}c_{r,k,t}}
\end{align*}
This expression goes to infinity, since $\frac{k!}{t^k}$ diverges with $k$.
\Cref{fig:volumes} illustrates this behavior for different values of $t$. 
It shows that for very small values of $k$, the relative excess volume of $\mathcal{D}$ is very small, but for $k\geq 20$ it is very large (as context, for MNIST images $k=784$). In contrast, the relative excess volume of $\widetilde{\mathcal{B}}_1^t(\bar{x})$ converges to zero very quickly.

\subsection{The Convex Hull for Multi-Channel Inputs}
In this section, we focus on a multi-channel input $\bar{x}$.
In this setting, the attacker chooses $t$ entries $\bar{x}_i$ to perturb across all their channels (overall, $t\cdot d$ values).    
To characterize the convex hull, we extend our definitions to multi-channel inputs.
The \emph{asymmetrically scaled $(i,j)$ distance} from $y$ to $\bar{x}$ is:
\begin{align*}
\delta_{\bar{x}}^{i,j}(y)=\frac{y_i^{(j)}-\bar{x}_i^{(j)}}{\mathbf{1}_{y_i^{(j)}>\bar{x}_i^{(j)}}(b_i^{(j)}-\bar{x}_i^{(j)})+\mathbf{1}_{y_i^{(j)}<\bar{x}_i^{(j)}}(a_i^{(j)}-\bar{x}_i^{(j)})}
\end{align*}
This definition is identical to $\delta_{\bar{x}}^{i}(y)$, but over a channel $y_i^{(j)}$, for $i\in [k],j\in[d]$. We extend the asymmetrically scaled $\ell_1$-ball to the set of inputs whose sum, over all entries $i$, of
 the \emph{maximal} asymmetrically scaled $(i,j)$ distances across all channels $j$ is at most $t$. 
Since the sum is over maximal values, we denote it $\widetilde{\mathcal{B}}_{1, \infty}^t(\bar{x})$: $$\widetilde{\mathcal{B}}_{1, \infty}^t(\bar{x})=\{y\in(\mathbb{R}^{d})^{k} \mid \sum_{i=1}^k \max_{j\in[d]}\delta_{\bar{x}}^{i,j}(y)\leq t\}$$
 This definition is identical to $\widetilde{\mathcal{B}}_{1}^t(\bar{x})$, except that it is over the maximal $\delta_{{\bar{x}}}^{i,j}(y)$.
 Like~\Cref{thm:single_channel_conv_hull}, we show that the convex hull of $\mathcal{B}_0^t(\bar{x})$ is the intersection of $\mathcal{D}$ and  $ \widetilde{\mathcal{B}}_{1,\infty}^t(\bar{x})$:
\begin{theorem}
$Conv(\mathcal{B}_0^t(\bar{x}))=\mathcal{D}\cap \widetilde{\mathcal{B}}_{1,\infty}^t(\bar{x})$
\end{theorem}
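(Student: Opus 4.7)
My proof mirrors the two-inclusion structure of Theorem 1, with the single-channel distance $\delta_{\bar{x}}^i$ systematically replaced by the channel-wise maximum $\max_{j}\delta_{\bar{x}}^{i,j}$. For the forward inclusion $Conv(\mathcal{B}_0^t(\bar{x})) \subseteq \mathcal{D} \cap \widetilde{\mathcal{B}}_{1,\infty}^t(\bar{x})$, I first check that $\mathcal{B}_0^t(\bar{x}) \subseteq \mathcal{D}$ (by definition) and $\mathcal{B}_0^t(\bar{x}) \subseteq \widetilde{\mathcal{B}}_{1,\infty}^t(\bar{x})$ (any $y \in \mathcal{B}_0^t(\bar{x})$ has at most $t$ indices $i$ with $y_i \neq \bar{x}_i$, and for each such $i$, $\max_j \delta_{\bar{x}}^{i,j}(y) \leq 1$ because $y \in \mathcal{D}$). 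I then establish convexity of $\widetilde{\mathcal{B}}_{1,\infty}^t(\bar{x})$: each $\delta_{\bar{x}}^{i,j}$ is a pointwise maximum of two affine forms in $y_i^{(j)}$ and hence convex; the pointwise max over $j$ preserves convexity; summing over $i$ preserves it; and the sublevel set at level $t$ is therefore convex. Intersecting with the convex box $\mathcal{D}$ and applying monotonicity of $Conv(\cdot)$ to the inclusion $\mathcal{B}_0^t(\bar{x}) \subseteq \mathcal{D} \cap \widetilde{\mathcal{B}}_{1,\infty}^t(\bar{x})$ gives the first direction.

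\textbf{Reverse inclusion.} The set $\mathcal{D} \cap \widetilde{\mathcal{B}}_{1,\infty}^t(\bar{x})$ is convex (just shown) and compact (closed as the intersection of sublevel sets of continuous functions, bounded since contained in $\mathcal{D}$). By Krein--Milman it equals $Conv(E_{\cap})$, where $E_{\cap}$ denotes its extreme points, so it suffices to prove the multi-channel analog of Lemma 2: $E_{\cap} \subseteq \mathcal{B}_0^t(\bar{x})$. My plan is to fix an extreme point $y^*$ and argue by contradiction: if either (a) $|\{i : y_i^* \neq \bar{x}_i\}| > t$, or (b) some perturbed index $i$ has a channel with $y_i^{*(j)} \in (a_i^{(j)}, b_i^{(j)}) \setminus \{\bar{x}_i^{(j)}\}$, I exhibit a nonzero direction $d$ with $y^* \pm \epsilon d \in \mathcal{D} \cap \widetilde{\mathcal{B}}_{1,\infty}^t(\bar{x})$ for small $\epsilon > 0$, contradicting extremality.

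\textbf{Main obstacle.} The delicate case arises when the polytope constraint $\sum_i \max_j \delta_{\bar{x}}^{i,j}(y) \leq t$ is tight and the offending coordinate $y_i^{*(j)}$ is strictly interior to its box. Moving it toward $\bar{x}_i^{(j)}$ decreases the left-hand side, giving one feasible direction; to refute extremity I must find a simultaneously feasible \emph{opposite} direction, which requires compensating elsewhere. Writing $\alpha_i := \max_j \delta_{\bar{x}}^{i,j}(y^*)$, I expect a case split on whether each $\alpha_i \in \{0\}$, $(0,1)$, or $\{1\}$, combined with the tightness $\sum_i \alpha_i = t$, to force either $y^* \in \mathcal{B}_0^t(\bar{x})$ directly, or the existence of a second index $i'$ with $\alpha_{i'} < 1$ providing slack in the channel-max to absorb a compensating perturbation. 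Ties in the $\arg\max$ over $j$ are a technicality I expect to handle by passing to directional derivatives, which are well defined since each $\max_j \delta_{\bar{x}}^{i,j}$ is piecewise affine.
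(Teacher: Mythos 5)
Your overall route is the same as the paper's: both inclusions, convexity of $\widetilde{\mathcal{B}}_{1,\infty}^t(\bar{x})$ (your observation that $\delta_{\bar{x}}^{i,j}$ is a pointwise maximum of two affine forms is a cleaner way to get what the paper's Lemma~3 proves by case analysis), Krein--Milman on the compact convex intersection, and a multi-channel analog of Lemma~2 showing the extreme points lie in $\mathcal{B}_0^t(\bar{x})$. The forward inclusion and the easy cases of the extreme-point argument are fine as you describe them.

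The gap is precisely the item you defer as ``a technicality'': handling ties in $\arg\max_j \delta_{\bar{x}}^{i,j}$ when the constraint $\sum_i \max_j \delta_{\bar{x}}^{i,j}(y^*)=t$ is tight. Passing to directional derivatives does not resolve it, because $\max_j \delta_{\bar{x}}^{i,j}$ is convex but not affine at a tie: for a symmetric perturbation $y^*\pm\epsilon v$ one has $\max_j \delta_{\bar{x}}^{i,j}(y^*+\epsilon v)+\max_j \delta_{\bar{x}}^{i,j}(y^*-\epsilon v)\ge 2\max_j \delta_{\bar{x}}^{i,j}(y^*)$, possibly strictly, so on a tight constraint one of the two points can leave $\widetilde{\mathcal{B}}_{1,\infty}^t(\bar{x})$ and extremality is not refuted. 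The paper's Lemma~4 closes this with two ingredients you would need to supply. First, an auxiliary claim: at an extreme point, for each entry $i$ all \emph{nonzero} values among $\delta_{\bar{x}}^{i,j}(y^*)$, $j\in[d]$, are equal (otherwise a strictly sub-maximal nonzero channel is strictly interior and can be wiggled without changing the channel-max, already contradicting extremality). Second, the compensating direction must move \emph{all} channels in $\mathcal{J}_l=\{j:\delta_{\bar{x}}^{l,j}(y^*)\neq 0\}$ simultaneously, each scaled by its own denominator, so that $\max_{j}\delta_{\bar{x}}^{l,j}$ changes by exactly $\pm\epsilon$ in both directions; only then does pairing entry $l$ with a second entry $m$ having $\max_j\delta_{\bar{x}}^{m,j}(y^*)\in(0,1)$ (which exists because $t$ is an integer) keep both $y'$ and $y''$ on the boundary. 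Also note your case~(a) is vacuous once case~(b) is excluded: a point with all channels in $\{a_i^{(j)},\bar{x}_i^{(j)},b_i^{(j)}\}$ and more than $t$ perturbed entries violates $\sum_i\max_j\delta_{\bar{x}}^{i,j}\le t$ outright, so no perturbation argument is needed there.
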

The proof is identical to the proof of~\Cref{thm:single_channel_conv_hull}, except that it relies on Lemmas 3 and 4 (provided in the appendix), which extend Lemmas 1 and 2 to multi-channel inputs.
The appendix also shows Theorem 4, which extends \Cref{thm2} to multi-channel inputs and implies a a similar trend for the relative excess volumes.

\begin{figure}[t]
\centering
\includegraphics[width=0.95\columnwidth]{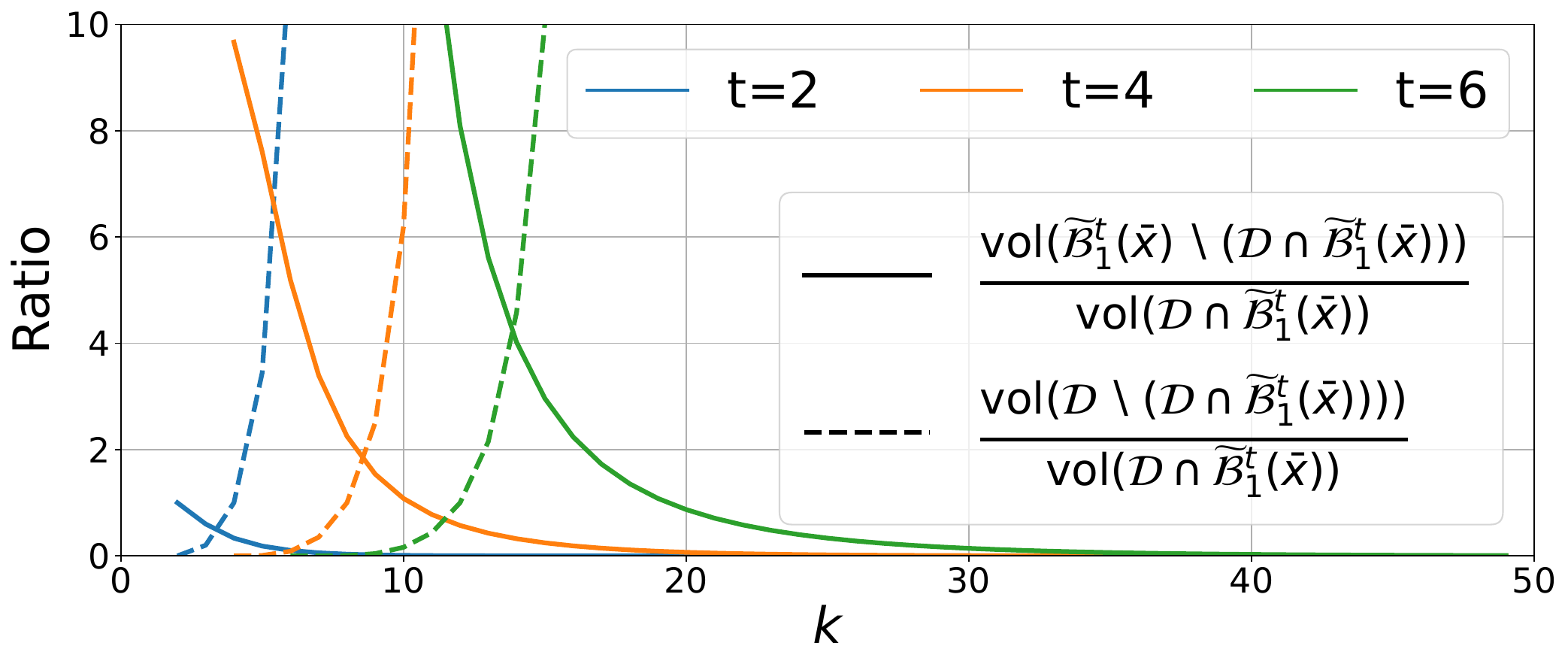}
\caption{The relative excess volumes.}
\label{fig:volumes}
\end{figure} 
\section{Bound Propagations}\label{sec:bound}
In this section, we introduce linear bound propagation methods for local robustness verification over an $\ell_0$-ball.
We begin with background, including bound propagation over a box domain. 
Then, we present an equivalent formulation for the box domain, which we build on in our subsequent bound propagations.
Next, we present bound propagations over an $\ell_0$-ball and over an asymmetrically scaled $\ell_1$-ball.
Lastly, we extend them to multi-channel inputs and compare them. 

\subsection{Background}
Bound propagation is a central component in many neural network verifiers (e.g.,~\citet{ZhangWCHD18,MullerS0PV21,SinghGPV19}).
This technique bounds every neuron $n$ within a real-valued interval $[l,u]\in\mathbb{R}^2$. 
These bounds allow the verifier to efficiently check properties (i.e., constraints over the network's inputs and outputs) as well as to tighten linear relaxations of the network's computations, which reduce the verification time.

To obtain a lower bound $l$ for a neuron $n$, a verifier computes an affine function that lower bounds the computation of $n$ and computes its minimum.
To shorten notations, we assume that the affine function is linear. 
Formally, for single-channel inputs, a verifier computes a linear function over the network's inputs $y\mapsto\sum_{i=1}^kw_iy_i$, for  $w_i\in \mathbb{R}$, lower bounding $n$ in the property's input subspace $\mathcal{I}\subseteq \mathbb{R}^k$: $\forall y\in \mathcal{I}.\ n(y)\geq \sum_{i=1}^kw_iy_i$. It then computes the minimum:  
$l=\min_{y\in \mathcal{I}}{\sum_{i=1}^kw_iy_i}$, based on the shape of $\mathcal{I}$.
Commonly, $\mathcal{I}$ is a box domain (shortly described). 
Similarly, an upper bound $u$ of $n$ is obtained by computing a linear function that upper bounds the computation of $n$ and  computing its maximum. 
The linear functions are computed by linear relaxation to the nonlinear activation functions (e.g., ReLU).
Our bound propagation is orthogonal to the linear relaxation computation. In particular, it is applicable to relaxations computed by a backward pass (e.g.,~\citet{ZhangWCHD18,SinghGPV19}), a forward pass (e.g.,~\citet{NEURIPS2018_2ecd2bd9,USENIX18_217595}), or a hybrid pass (e.g.,~\citet{KouvarosBHL25}).
 
\paragraph{Bound propagation for a box domain}
Consider a neuron $n$ whose computation is lower bounded by $y\mapsto\sum_{i=1}^kw_iy_i$ for $y\in \mathcal{I}= \mathcal{D}=\prod_{i=1}^k[a_i,b_i]$. 
In the box domain, it is easy to define an input $y$ obtaining the minimum: every entry $y_i$ is on the boundaries, $a_i$ or $b_i$, depending on the sign of $w_i$. Formally, $y_i=a_i$ if $w_i\geq 0$, or $y_i=b_i$ otherwise. 

\subsection{Equivalent Bound Propagation for a Box Domain}
We begin with an equivalent formulation of bound propagation over the box domain, to allow a unified formulation for all bound propagations and simplify their extension to multi-channel inputs (shown later). Our formulations are defined over the \emph{minimum and maximum input entry contributions}.

Given $\bar{x}\in \mathcal{D}=\prod_{i=1}^k[a_i,b_i]$ and $y\mapsto\sum_{i=1}^kw_iy_i$, we write:
$\sum_{i=1}^kw_iy_i=\sum_{i=1}^kw_i\bar{x}_i+\sum_{i=1}^kw_i(y_i-\bar{x}_i)$.
Note that the minimum and maximum of this linear function over $y\in \mathcal{I}$ depend only on the second sum (the first sum is fixed). 
We define the \emph{minimum input entry contribution} of $i\in [k]$: $$d^-_i=\min (w_i(b_i-\bar{x}_i),w_i(a_i-\bar{x}_i)),$$ 
which lower bounds the term $w_i\cdot (y_i-\bar{x}_i)$, for $y_i \in [a_i,b_i]$.  
We define the \emph{maximum input entry contribution} of $i\in [k]$: $$d^+_i=\max (w_i(b_i-\bar{x}_i),w_i(a_i-\bar{x}_i)),$$
which upper bounds $w_i\cdot (y_i-\bar{x}_i)$, for $y_i \in [a_i,b_i]$.
Note that $d^-_i\leq 0$ and $d^+_i\geq 0$.

Given this notation, if
 $n$'s function is lower bounded by $y\mapsto\sum_{i=1}^kw_iy_i$ for $y\in   \mathcal{I}=\mathcal{D}$, 
the lower bound depends on the sum of all $d_i^-$:  
$$l=\min_{y\in \mathcal{D}}\sum_{i=1}^kw_iy_i=\sum_{i=1}^kw_i\bar{x}_i+\sum_{i=1}^kd^-_{i}$$
If $n$'s function is upper bounded by $y\mapsto\sum_{i=1}^kw_iy_i$ for $y\in \mathcal{D}$,
the upper bound depends on the sum of all $d_i^+$:
$$u=\max_{y\in \mathcal{D}}\sum_{i=1}^kw_iy_i=\sum_{i=1}^kw_i\bar{x}_i+\sum_{i=1}^kd^+_{i}$$

\paragraph{Example} 
\Cref{fig:bound} shows a network with inputs $y_1,y_2,y_3$ and an output $o_1$. The weights are on the edges and the bias of $o_1$ is above it. The neurons $n_1,n_2,o_1$ compute the sum of their bias and the products of their inputs and respective weights. The neurons $\text{ReLU}_1,\text{ReLU}_2$ compute the activation function \texttt{ReLU}$_i(n_i)=\max(0,n_i)$. Our goal is to  prove $o_1\geq 0$ for every input in the $\ell_0$-ball of $\bar{x}=(-0.3,0,0.65)$, for $\mathcal{D}= [-1,1]^3$ and $t=2$ (\Cref{fig:pertubation_space2}).  
The box bound propagation overapproximates the $\ell_0$-ball by its bounding box $\mathcal{D}$ (although $\mathcal{B}^0_t(\bar{x})$ is significantly smaller). That is, $y_i\in [-1,1]$ for $i\in[3]$. 
Then, a linear relaxation lower and upper bounds every neuron by linear functions.
In this example, we use the linear relaxations of ReLU proposed by~\citet{ZhangWCHD18,SinghGPV19}.
Then, the box bound propagation computes the minimum and maximum of these linear functions, as described.
This results in the intervals marked by $\mathcal{D}$ in \Cref{fig:bound}. 
For example, the linear functions bounding $n_1$ are exactly its function (since it is linear):
$2\cdot y_1-3\cdot y_2+7\cdot y_3$. Since  $\bar{x}=(-0.3,0,0.65)\in [-1,1]^3$, we get
  $d_1^-=\min (2\cdot (1+0.3),2\cdot(-1+0.3))=-1.4$ and similarly
 $d_2^-=-3$, $d^-_3=-11.55$. Thus, its lower bound is: 
 \begin{align*}
 \min_{(y_1,y_2,y_3)\in [-1,1]^3}2\cdot y_1-3\cdot y_2+7\cdot y_3=\hspace{2cm}\\
 \overbrace{(2\cdot (-0.3)-3\cdot 0+7\cdot 0.65)}^{3.95}-1.4-3-11.55=-12
 \end{align*}
Since the box bound propagation computes $o_1\in [-1,32]$, it cannot show that the property $o_1\geq 0$ holds.

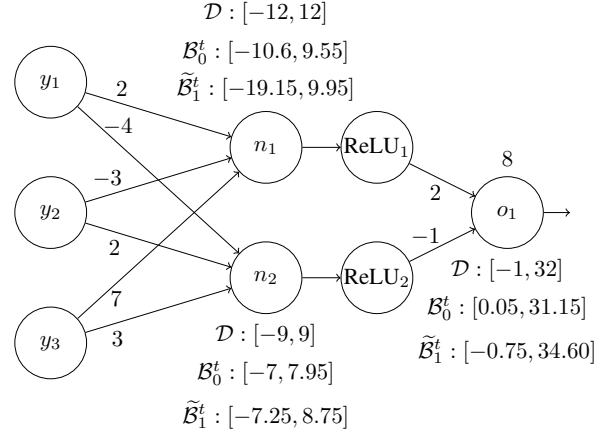
\begin{figure}[t]
\centering
\resizebox{0.95\columnwidth}{!}
{
\begin{tikzpicture}[
    node/.style={circle, draw, minimum size=1.1cm, inner sep=0pt},
]

% Nodes
\node[node] (X1) at (0,2) {$y_1$};
\node[node] (X2) at (0,0) {$y_2$};
\node[node] (X3) at (0,-2) {$y_3$};

\node[node] (N1) at (3.3,1) {$n_1$};
\node[node] (N2) at (3.3,-1) {$n_2$};

\node[node] (R1) at (5,1) {$\text{ReLU}_1$};
\node[node] (R2) at (5,-1) {$\text{ReLU}_2$};

\node[node] (O1) at (7,0) {$o_1$};

% Connections
\draw[->] (X1) -- node[pos=0.25, above] {$2$} (N1);
\draw[->] (X1) -- node[pos=0.25, above] {$-4$} (N2);

\draw[->] (X2) -- node[pos=0.15, above] {$-3$} (N1);
\draw[->] (X2) -- node[pos=0.20, below] {$2$} (N2);

\draw[->] (X3) -- node[pos=0.24, below] {$7$} (N1);
\draw[->] (X3) -- node[pos=0.22, below] {$3$} (N2);

\draw[->] (N1) -- (R1);
\draw[->] (N2) -- (R2);

\draw[->] (R1) -- node[pos=0.4, below] {$2$} (O1);
\draw[->] (R2) -- node[pos=0.25, above] {$-1$} (O1);

\draw[->] (O1.east) -- ++(0.4,0);

\node[above=0.03 of N1] {$\widetilde{\mathcal{B}}_1^t:[-19.15,9.95]$};
\node[above=0.6 of N1] {$\mathcal{B}_0^t:[-10.6,9.55]$};
\node[above=1.2 of N1] {$\mathcal{D}:[-12,12]$};

\node[below=0.03 of N2] {$\mathcal{D}:[-9,9]$};
\node[below=0.6 of N2] {$\mathcal{B}_0^t:[-7,7.95]$};
\node[below=1.2 of N2] {$\widetilde{\mathcal{B}}_1^t:[-7.25,8.75]$};

\node[above=0.03 of O1] {$8$};
\node[below=0.03 of O1] {$\mathcal{D}:[-1,32]$};
\node[below=0.6 of O1] {$\mathcal{B}_0^t:[0.05,31.15]$};
\node[below=1.2 of O1] {$\widetilde{\mathcal{B}}_1^t:[-0.75,34.60]$};

\end{tikzpicture}
}
\caption{The three approaches for bound propagation over $\mathcal{B}_0^{t}(\bar{x})$, where $\bar{x}=(-0.3,0,0.65)$,  $\mathcal{D}=[-1,1]^3$, and $t=2$.}\label{fig:bound}
\label{fig:bound_prop_example}
\end{figure}

\subsection{Bound Propagation for an $\ell_0$-Ball}
We next show a bound propagation that precisely captures an $\ell_0$-ball input subspace. 
That is, it computes the minimum and maximum of a linear function over an $\ell_0$-ball, which coincide with those over its convex hull. 
We call it \emph{top-$t$}, since it relies on the \emph{top-$t$} input entry contributions. This idea has been proposed by~\citet{Chiang2020Certified,NEURIPS2020_0cbc5671}, however they assume $\mathcal{D}=[0,1]^v$ and single-channel inputs. 

As before, the goal is to compute the minimum of a neuron $n$ whose function is lower bounded by $y\mapsto\sum_{i=1}^kw_iy_i$.
Again, we write 
 $\sum_{i=1}^kw_i\bar{x}_i+\sum_{i=1}^kw_i(y_i-\bar{x}_i)$,  however at least $k-t$ of the terms in the second sum are $0$ since $y\in \mathcal{B}_0^t(\bar{x})$.
 Recall that $w_i(y_i-\bar{x}_i)\geq d_i^-$, for $y_i \in [a_i,b_i]$.
 Since at most $t$ entries are not $0$,
 the second sum is lower bounded by the sum of the $t$ lowest $d_i^-$. 
Thus, we sort them in a non-decreasing order, $d^-_{l_1}\leq d^-_{l_2}\leq\dots\leq d^-_{l_k}$. 
The minimum is obtained by $y$ where $y_{l_i}$ is:
 (1)~$a_{l_i}$ if $i\in[t]$ and $w_{l_i}\geq 0$, (2)~$b_{l_i}$ if $i\in[t]$ and $w_{l_i}< 0$,  and (3)~$\bar{x}_{l_i}$ if $i\notin[t]$.    
The lower bound is: 
    $$l=\min_{y\in \mathcal{B}_0^t(\bar{x})}\sum_{i=1}^kw_iy_i=\left(\sum_{i=1}^kw_i\bar{x}_i\right)+(d^-_{l_1}+\dots+ d^-_{l_t})$$

Similarly, the upper bound depends on the $t$ highest $d_i^+$:
 $$u=\max_{y\in \mathcal{B}_0^t(\bar{x})}\sum_{i=1}^kw_iy_i=\left(\sum_{i=1}^kw_i\bar{x}_i\right)+(d^+_{u_1}+\dots+ d^+_{u_t})$$ for 
 the non-increasing series $d^+_{u_1}\geq d^+_{u_2}\geq \dots\geq d^+_{u_k}$  of $d^+_i$.

\paragraph{Example}
The intervals marked by $\mathcal{B}_0^t$ in~\Cref{fig:bound} show the top-$t$ bound propagation.
For $n_1$, we showed $d_1^-=-1.4$, $d_2^-=-3$, $d_3^-=-11.55$. Thus, its $t=2$ lowest values are 
$d_2^-$ and $d_3^-$ and its lower bound is: $\min_{y\in \mathcal{B}_0^t(\bar{x})}\sum_{i=1}^kw_iy_i=3.95-3-11.55=-10.6$.
In this example, $o_1\in [0.05,31.15]$, thus top-$t$ successfully proves $o_1\geq 0$.

\begin{observation}
  The top-$t$ bound propagation computes the minimum and maximum over $\mathcal{B}_0^t(\bar{x})$ and over its convex hull.
\end{observation}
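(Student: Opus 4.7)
The plan is to prove the observation by establishing two statements, handling the minimum (the maximum being symmetric): (a) the closed-form value $\sum_{i=1}^k w_i\bar{x}_i + d^-_{l_1}+\cdots+ d^-_{l_t}$ equals $\min_{y\in\mathcal{B}_0^t(\bar{x})}\sum_{i=1}^k w_i y_i$; and (b) this minimum coincides with the minimum over $\mathrm{Conv}(\mathcal{B}_0^t(\bar{x}))$.

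For (a), I would prove two inequalities. The upper bound follows by exhibiting the explicit witness $y^\star$ already described in the text: set $y^\star_{l_i}$ to $a_{l_i}$ or $b_{l_i}$ (according to the sign of $w_{l_i}$) for $i\in[t]$, and $y^\star_{l_i}=\bar{x}_{l_i}$ for $i\notin[t]$. Then $y^\star\in\mathcal{B}_0^t(\bar{x})$ (at most $t$ entries differ from $\bar{x}$) and a direct substitution yields $\sum_{i=1}^k w_i y^\star_i = \sum_{i=1}^k w_i\bar{x}_i+\sum_{i=1}^t d^-_{l_i}$. For the lower bound, given any $y\in\mathcal{B}_0^t(\bar{x})$, decompose $\sum_{i=1}^k w_i y_i=\sum_{i=1}^k w_i\bar{x}_i+\sum_{i=1}^k w_i(y_i-\bar{x}_i)$ and let $S=\{i:y_i\neq\bar{x}_i\}$, so $|S|\leq t$. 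For $i\in S$, $w_i(y_i-\bar{x}_i)\geq d^-_i$, while terms with $i\notin S$ vanish. Since every $d^-_i\leq 0$, $\sum_{i\in S}d^-_i\geq \sum_{i=1}^t d^-_{l_i}$ (adding more non-positive terms can only decrease the sum), giving the matching lower bound.

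For (b), the inclusion $\mathcal{B}_0^t(\bar{x})\subseteq \mathrm{Conv}(\mathcal{B}_0^t(\bar{x}))$ immediately yields $\min_{\mathrm{Conv}}\leq \min_{\mathcal{B}_0^t}$. For the reverse, any $y\in \mathrm{Conv}(\mathcal{B}_0^t(\bar{x}))$ admits a finite convex representation $y=\sum_j\lambda_j y^{(j)}$ with $y^{(j)}\in\mathcal{B}_0^t(\bar{x})$ (Carath\'eodory). Linearity of $y\mapsto\sum_i w_iy_i$ then gives $\sum_i w_iy_i=\sum_j\lambda_j \sum_i w_iy^{(j)}_i\geq \min_{z\in\mathcal{B}_0^t(\bar{x})}\sum_i w_iz_i$, so the minimum over the convex hull cannot be strictly smaller. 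Combining (a) and (b) proves the claim for the minimum; the maximum follows by the symmetric argument using the top-$t$ values of $d^+_i\geq 0$.

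The reasoning is essentially routine once the notation is set up. The only mildly delicate point is the case $|S|<t$ in the lower-bound half of (a), which is cleanly handled by the sign property $d^-_i\leq 0$ (and dually $d^+_i\geq 0$). I do not anticipate a substantive obstacle beyond this bookkeeping.
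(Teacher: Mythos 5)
Your proposal is correct and follows essentially the same route as the paper: part (a) reproduces the paper's own derivation of the top-$t$ bounds (witness point plus the observation that a sum of at most $t$ nonpositive terms $d_i^-$ is bounded below by the sum of the $t$ smallest), and part (b) is exactly the standard fact the paper invokes, namely that a linear function attains the same extrema over a set and over its convex hull, which you justify via Carath\'eodory and linearity. No gaps.
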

This follows since the minimum and maximum of a linear function over a compact set (in our setting, $\mathcal{B}_0^t(\bar{x})$) are equal to those over its convex hull. Namely, linear bound propagation over a non-convex perturbation space or over its convex hull are the same.
Thus, the top-$t$ bound propagation over $\mathcal{B}_0^t(\bar{x})$ coincides with the same propagation over $\mathcal{D}\cap \widetilde{\mathcal{B}}_1^t(\bar{x})$.

\subsection{Bound Propagation for $\widetilde{\mathcal{B}}_1^t(\bar{x})$}
We next define the \emph{$t$-times-top} bound propagation over a $\widetilde{\mathcal{B}}_1^t(\bar{x})$ input subspace.
The goal is to contrast it with the top-$t$ bound propagation, since 
\Cref{thm2} shows that the volumes of $\widetilde{\mathcal{B}}_1^t(\bar{x})$ and $\mathcal{D}\cap \widetilde{\mathcal{B}}_1^t(\bar{x})$ are almost the same, for large $k$.

As before, we write 
 $\sum_{i=1}^kw_i\bar{x}_i+\sum_{i=1}^kw_i(y_i-\bar{x}_i)$, for $y\in \widetilde{\mathcal{B}}_1^t(\bar{x})$, and define its minimum. By definition of $\delta_{\bar{x}}^i(y)$: 
 \begin{multline*}
   w_i(y_i-\bar{x}_i)=\\
   \underbrace{w_i(\mathbf{1}_{\{y_i>\bar{x}_i\}}(b_i-\bar{x}_i)+\mathbf{1}_{\{y_i<\bar{x}_i\}}(a_i-\bar{x}_i))}_{\geq d_i^-}\delta_{\bar{x}}^i(y)  
 \end{multline*}
 Thus, 
 $\sum_{i=1}^kw_i(y_i-\bar{x}_i)\geq {\sum_{i=1}^k d_i^- \delta_{\bar{x}}^i(y) }$. Since $\delta^i_{\bar{x}}(y)\geq 0$, $ {\sum_{i=1}^k d_i^- \delta_{\bar{x}}^i(y) }\geq d_{l_1}^- \sum_{i=1}^k  \delta_{\bar{x}}^i(y) $, for $d_{l_1}^-=\min_{i\in[k]}d_i^-$.
Since $d^-_{l_1}\leq 0$ and $y\in \widetilde{\mathcal{B}}_1^t(\bar{x})$: 
 $  d_{l_1}^- \sum_{i=1}^k  \delta_{\bar{x}}^i(y) \geq d_{l_1}^- \cdot t$.
This minimum is obtained by $y$ where $y_{l_i}=\bar{x}_{l_i}$, for
$i\neq 1$, and 
$y_{l_1}=\bar{x}_{l_1}+t\cdot (c - \bar{x}_{l_1})$, for $c=a_{l_1}$ if $w_{l_1}\geq 0$, and 
$c=b_{l_1}$ otherwise.
Thus, the lower bound depends on the product of $t$ and the lowest $d_i^-$ and is equal to:  
 $$l=\min_{y\in \widetilde{\mathcal{B}}_1^t(\bar{x})}\sum_{i=1}^kw_iy_i=\left(\sum_{i=1}^kw_i\bar{x}_i\right)+t\cdot d^-_{l_1}$$

Similarly, the upper bound depends on the product of $t$ and the highest $d_i^+$: $$u=\max_{y\in \widetilde{\mathcal{B}}_1^t(\bar{x})}\sum_{i=1}^kw_iy_i=\left(\sum_{i=1}^kw_i\bar{x}_i\right)+t\cdot d^+_{u_1}$$

 \paragraph{Example}
The intervals marked by $\widetilde{\mathcal{B}}_1^t$ in~\Cref{fig:bound} show the $t$-times-top bound propagation.
 For $n_1$, the minimum of $d_1^-=-1.4$, $d_2^-=-3$, $d_3^-=-11.55$ is $d_3^-$.
 Thus, its lower bound is $\min_{y\in \widetilde{\mathcal{B}}_1^t(\bar{x})}\sum_{i=1}^kw_iy_i=3.95-2\cdot 11.55=-19.15$.
 This bound propagation cannot prove $o_1\geq 0$.
 
 \subsection{Multi-Channel Inputs}
 Our formulation of the bound propagations as functions of the input entry contributions $d_i^-$ and $d_i^+$ naturally extends to multi-channel inputs.
In this setting, given $i\in[k]$, $d_i^-$ and $d_i^+$ sum over all channels' contributions.
Given $\bar{x}\in \mathcal{D}$ and a linear function  $y\mapsto\sum_{i=1}^k\sum_{j=1}^d w_i^{(j)}{y}_i^{(j)}$,
we define $d^-_i=\sum_{j=1}^d\min (w_{i}^{(j)}(b_{i}^{(j)}-\bar{x}_{i}^{(j)}),w_{i}^{(j)}(a_{i}^{(j)}-\bar{x}_{i}^{(j)}))$
and
$d^+_i=\sum_{j=1}^d\max (w_{i}^{(j)}(b_{i}^{(j)}-\bar{x}_{i}^{(j)}),w_{i}^{(j)}(a_{i}^{(j)}-\bar{x}_{i}^{(j)}))$.
The lower and upper bounds of all bound propagations are the same as for single-channel inputs, except that the first sum is $\sum_{i=1}^k\sum_{j=1}^d w_i^{(j)}{\bar{x}}_i^{(j)}$. 
 
\subsection{Comparison }
 Our formulation explains why a bound propagation over the $\ell_0$-ball is tighter than over our $\ell_1$-like polytope or $\mathcal{D}$. All bound propagations have a shared term (i.e., $\sum_{i=1}^kw_i\bar{x}_i$, for single-channel inputs) and they differ in the term they add: for the lower bound,
top-$t$ adds the sum of the $t$ lowest minimum input entry contributions, while $t$-times-top adds the product of $t$ and the lowest minimum contribution, and the box bound propagation adds the sum of all minimum contributions (and similarly for the upper bound).
The $t$-times-top and box bound propagations are incomparable for $t>1$, because $t$ times the top-1 value can be smaller or greater than the sum of all values. For example, in~\Cref{fig:bound_prop_example}, the box bound propagation has a tighter lower bound of $n_1$ than $t$-times-top, but a looser upper bound.  
The theoretical time complexity of our bound propagations is linear in $k$, for single-channel inputs, or $k \cdot d $, for multi-channel inputs, like the common box bound propagation.
In practice, the bound propagations run on a GPU, which affects the complexity (described in the appendix). 

\section{Evaluation}\label{sec:eval}
In this section, we evaluate the top-$t$ bound propagation. We empirically show that top-$t$ is more precise than the other bound propagations for $\ell_0$-balls and that it significantly boosts \tool, the state-of-the-art $\ell_0$ robustness verifier.

\newcommand{\topt}{\texttt{top-$t$-GP}\xspace}
\newcommand{\topo}{\texttt{$t$-times-top-GP}\xspace}

\paragraph{Setup}
We extended GPUPoly~\cite{MullerS0PV21}, which \tool calls a large number of times (sometimes $10^6$).
GPUPoly verifies using box bound propagation with the linear relaxation of~\citet{SinghGPV19}.
We extended it with our bound propagations: top-$t$ (called \topt) and $t$-times-top (\topo).
The implementation is described in the appendix, which also shows that 
the running time of \topt is similar to GPUPoly. 
We ran experiments on a dual AMD EPYC 7713 server with 2TB RAM and eight A100 GPUs. 
We consider the same image classifiers as \tool, for MNIST~\cite{LeCunBBH98}, Fashion-MNIST~\cite{abs-1708-07747}, and CIFAR-10~\cite{krizhevsky09}.
MNIST and Fashion-MNIST consist of single-channel images (grayscale) and CIFAR-10 consists of three-channel images (RGB).
The network architectures are provided in ERAN~\cite{ethsri-eran}.

\paragraph{Bound propagation only}
GPUPoly cannot directly prove robustness for $\ell_0$-balls, since it runs box bound propagation and thus requires to overapproximate an $\ell_0$-ball with its bounding box. However, the bounding box is $\mathcal{D}_v$ and networks are not robust in $\mathcal{D}_v$ since not all inputs are classified the same. 
Although \topt improves GPUPoly's precision for $\ell_0$-balls, we next show that it is still typically not precise enough to prove robustness in an $\ell_0$-ball where all pixels can be perturbed (i.e., $\mathcal{K}=[v]$).
We show that even for $t=1$ (the smallest $\ell_0$-ball), it often loses too much precision to prove robustness. 
In this experiment, for each network, we consider the first 100 images from its test set. 
For each image, we check whether the network classifies it correctly. If so,
we run \topt for its $\ell_0$-ball with $t=1$. 
\Cref{table:fast_cert} shows the number of images that are correctly classified (\textbf{\# CC}), the number of robust $\ell_0$-balls (\textbf{\# R}), and the number of $\ell_0$-balls that \topt verifies. 
We obtain the number of robust $\ell_0$-balls from Calzone~\cite{Shapira23}, a complete (exact) verifier for $\ell_0$-balls.
The table shows that except for MNIST classifiers,
\topt cannot verify even a single $\ell_0$-ball.
These results imply that \topt cannot, on its own, prove robustness in $\ell_0$-balls with $\mathcal{K} = [v]$.
However, we later show that it boosts \tool, which does prove robustness in $\ell_0$-balls with $\mathcal{K} = [v]$. \tool decomposes the verification task into multiple robustness verification tasks over
$\ell_0$-balls defined over subsets of pixels ($\mathcal{K}\subseteq [v]$) whose size is at most $200$ (a more detailed description is provided later).
The next experiment explains why \topt boosts \tool. 
 
\begin{table}[t]
\centering
{\fontsize{9}{11}\selectfont
\begin{tabular}{llcccc}
\toprule
\cmidrule(lr){3-5}
\textbf{Dataset} & \textbf{Network} &  \textbf{\# CC} & \textbf{\# R} & \topt \\
\midrule
MNIST & $6\times200$\_PGD &  99 & 97 & 94 \\
\cmidrule(lr){2-5}
      & ConvSmall &  98 & 95 & 36 \\
\cmidrule(lr){2-5}
      & ConvSmallPGD &  94 & 93 & 57 \\
\cmidrule(l){2-5}
      & ConvMedPGD &  99 & 94 & 43 \\
\cmidrule(lr){2-5}
      & ConvBig &  97 & 94 & 0 \\
\midrule
Fashion & ConvSmallPGD &  83 & 75 & 0 \\
\cmidrule(lr){2-5}
        & ConvMedPGD &  86 & 83 & 0 \\
\midrule
CIFAR-10 & ConvSmallPGD &  70 & 61 & 0 \\
\bottomrule
\end{tabular}
}
\caption{The number of verified $\ell_0$-balls for $t=1$. }
\label{table:fast_cert}
\end{table}

\begin{figure*}[t]
\centering
\includegraphics[width=0.95\textwidth]{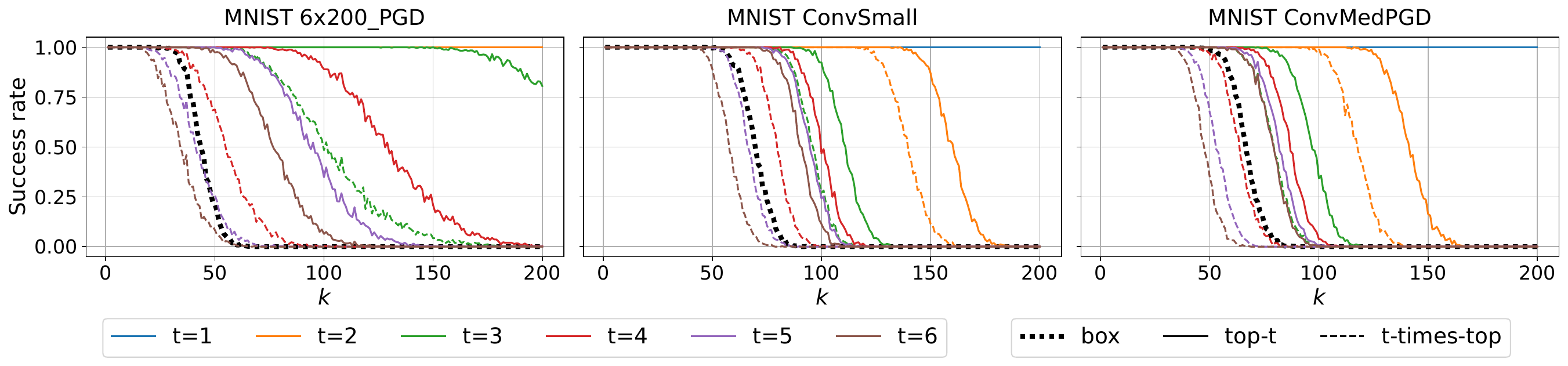}
\caption{The success rate of the three bound propagations over $\mathcal{B}_0^t(\bar{x})$ for different choices of $\mathcal{K}\subseteq [v]$, as a function of $k=|\mathcal{K}|$.}
\label{fig:bound_prop}
\end{figure*}

\paragraph{Bound propagation comparison}
We next show that \topt is more precise than the other bound propagations for verifying robustness in $\ell_0$-balls over subsets of pixels ($|\mathcal{K}|<v$).  
We consider three networks, three respective images, $k\in [200]$, and $t\in [6]$. For each network, image, and $k$, we randomly sample $500$ subsets $S\subseteq [v]$ of size $k$ and let each bound propagation verify robustness in the neighborhood
$\mathcal{B}_0^t(\bar{x})$ where $\mathcal{K}=S$, for each $t$.
\Cref{fig:bound_prop} shows the success rate (the ratio of $\mathcal{B}_0^t(\bar{x})$ proven robust out of all 500 analyzed) as a function of $k$, for each approach and each~$t$.
The results show that \topt's success rate is never lower than GPUPoly (with the box bound propagation) and often improves it, especially for small $t$ or large $k$. It is also significantly better than \topo for $t>1$, despite the negligible volume difference shown in \Cref{thm2}. 
In fact, for $t>4$, even the box bound propagation is better than \topo, although the volume of $\widetilde{\mathcal{B}}_1^t(\bar{x})$ is significantly smaller than $\mathcal{D}$ (for large $k$).
This shows that bound propagation highly depends on the shape of the property’s input subspace and not only on its volume.

\begin{figure*}[t]
\centering
\includegraphics[width=0.95\textwidth]{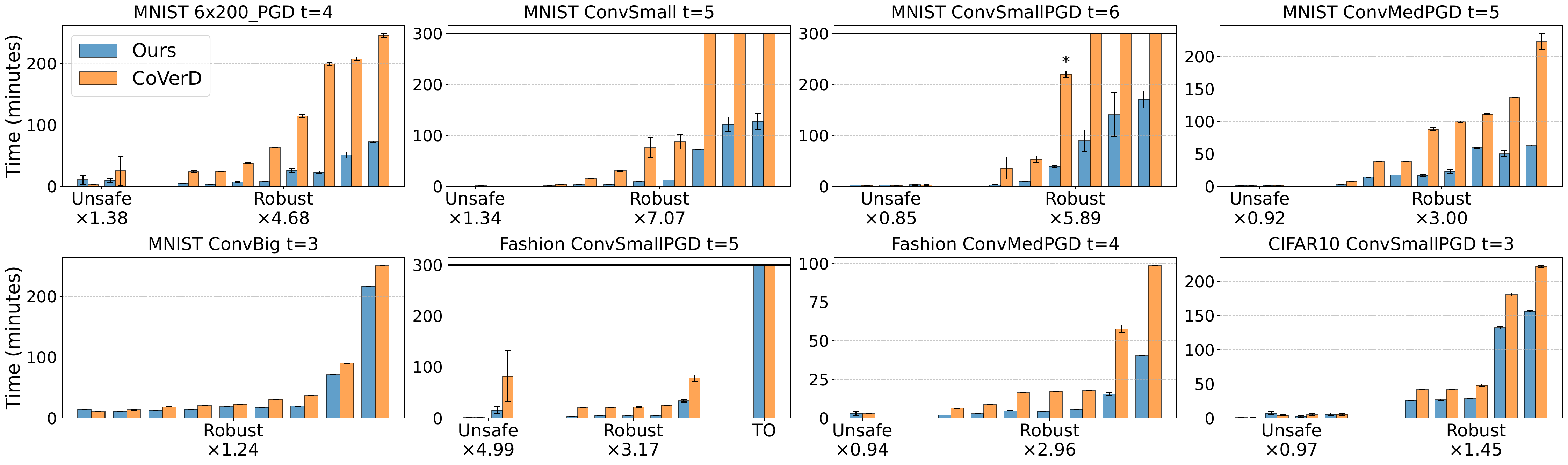}
\caption{{\tool} + \topt compared to \tool on its most challenging benchmarks.}\label{fig:covev}
\end{figure*}

\paragraph{Boosting $\ell_0$ certification}
Lastly, we show that \topt boosts the performance of \tool~\cite{ShapiraWSD24}, which verifies robustness in $\ell_0$-balls where all pixels can be perturbed ($\mathcal{K}=[v]$). 
\tool is a sound and complete local robustness verifier for $\ell_0$-balls. It is the state-of-the-art, improving Calzone~\cite{Shapira23}, which in turn outperforms a MILP-based verification.  
A naive verifier would submit $\binom{v}{t}$ box neighborhoods to a verifier (e.g., GPUPoly) and determine robustness if all are robust.
However, verifying $\binom{v}{t}$ neighborhoods is infeasible for $t>2$.
Instead, \tool defines sets $S_1,\ldots,S_R$ of size $k$ that cover all subsets of $t$ pixels over $[v]$.
For each $S_r$, it submits the box neighborhood $\{y\in\mathcal{D}_v\mid \forall i\notin S_r :y_i= \bar{x}_i\}$ to GPUPoly.
If it is robust, \tool continues to $S_{r+1}$; otherwise, it refines $S_r$ to smaller subsets and submits their neighborhoods to GPUPoly. 
Generally, the larger the sets $S_r$ whose neighborhoods are proven robust, the fewer calls to GPUPoly.
With \topt, \tool can submit $\mathcal{B}_0^t(\bar{x})$ where $\mathcal{K}=S_r$.
Also, since \topt has higher success rate than GPUPoly for large $k$ (\Cref{fig:bound_prop}), \tool can verify significantly larger sets $S_r$, which boosts its performance. 

We evaluate on the most challenging benchmarks of \tool: for each network, we focus on the largest evaluated $t$.
The appendix includes experiments over less challenging benchmarks of \tool.
We ran the latest version of \tool, with the original hyper-parameters, except that we ran with five MILP instances.  
We used eight GPUs, as in \tool's experiments (the running time decreases roughly linearly with the number of GPUs).  
The timeout of each verifier is five hours for each $\ell_0$-ball.
Since \tool relies on random choices, we repeat each experiment three times with different seeds to show that our speedup is independent of the choice of seed.
\Cref{fig:covev} reports the average execution time and standard deviation. Each plot corresponds to a network.
A pair of bars corresponds to an $\ell_0$-ball where each bar shows the execution time in minutes (the y-axis), for \tool and ours (\tool+ \topt).
The horizontal line marks the timeout. Bars reaching this line indicate timeouts, in all three runs.
For a single $\ell_0$-ball, one run of \tool out of three reached the timeout (denoted by *) and is excluded from its average and standard deviation.
The x-axis begins with the \emph{unsafe} $\ell_0$-balls (non-robust), if they exist, followed by the \emph{robust} $\ell_0$-balls. One plot also has a timeout instance (\emph{TO}). 
Within each group,
 the $\ell_0$-balls are sorted by the execution time of {\tool} to better visualize the time differences.  
 Below each group, we show the ratio between the sums of the average execution times of {\tool} and ours, excluding $\ell_0$-balls where \tool timed out in all three runs.
 The results show that \topt boosts \tool's performance on its robustness benchmarks by 1.24x-7.07x, with a geometric mean of 3.16. 
It is sometimes slower than \tool, but only when both verifiers terminate within minutes (as opposed to hours in the more challenging $\ell_0$-balls).
In the appendix, we demonstrate that \tool+ \topo is substantially slower than our approach, consistent with the results  in~\Cref{fig:bound_prop}.

\section{Related Work}
Many verifiers analyze the local robustness of neural networks over convex perturbation spaces, e.g., $\ell_p$-balls for $p\geq 1$~\cite{TjengXT19,ZhangWCHD18,MullerS0PV21,GehrMDTCV18,KatzBDJK17,Leino21,Huang21,NEURIPS2021_26657d5f,WuZ21}, where many rely on bound propagation and convex relaxations to scale.
\citet{Chiang2020Certified, NEURIPS2020_0cbc5671} propose bound propagation for $\ell_0$-balls, which is similar to ours but is limited to single-channel inputs over $[0,1]^v$.
Our experiments show that this bound propagation often fails to prove robustness  even when only a single pixel can be perturbed (i.e., $t=1$).
\citet{NEURIPS2021_26657d5f} propose a tight bound propagation for simplex input spaces.  
Though it can precisely capture our asymmetrically scaled $\ell_1$-ball, it currently does not support the intersection with a box, required to precisely capture the convex hull.
\citet{jia-etal-2019-certified} verify robustness to word substitutions using interval bound propagation (IBP), where $t=v$.
\citet{huang-etal-2019-achieving} propose IBP for $t<v$. 
They overapproximate the convex hull of the $\ell_0$-ball with a simplex 
and say that manipulating their convex hull is infeasible (whereas we explicitly characterize ours). 
For a similar setting, \citet{NEURIPS2020_0cbc5671} verify using dynamic programming. 
\citet{Chiang2020Certified, Levine2020Patch} propose certified defenses against patch attacks, where the attacker perturbs a subset of pixels within a contiguous region. In contrast, we focus on few-pixel attacks, where the attacker can perturb any subset of pixels, and the allowed number of perturbed pixels is typically smaller than that in patch attacks.
\section{Conclusion}
We show that the convex hull of an $\ell_0$-ball is the intersection of its bounding box and an $\ell_1$-like polytope.
We present a bound propagation over an $\ell_0$-ball, tighter than bound propagations over its bounding box or the $\ell_1$-like polytope.
It boosts the analysis of a neural network robustness verifier for $\ell_0$-balls by 1.24x-7.07x, with a geometric mean of 3.16, on its most challenging robustness benchmarks.

\bibliography{aaai2026}

\clearpage
\appendix
\section{Proofs}\label{proofs}
\begin{lemma}\label{lem:1}
    $\widetilde{\mathcal{B}}_1^t(\bar{x})$ is convex and compact.
\end{lemma}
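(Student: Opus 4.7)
The plan is to prove both properties by showing that $y\mapsto\sum_{i=1}^k\delta_{\bar{x}}^i(y)$ is a continuous (possibly extended-real-valued) convex function, so that $\widetilde{\mathcal{B}}_1^t(\bar{x})$ is simply its sublevel set at height $t$. Convexity of the sum follows from convexity of each summand, and each summand depends only on the single coordinate $y_i$, so the whole argument reduces to a one-dimensional analysis.

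The key step is to rewrite $\delta_{\bar{x}}^i(y)$ as a pointwise maximum of two affine functions of $y_i$, namely
\begin{equation*}
\delta_{\bar{x}}^i(y)=\max\!\left(\frac{y_i-\bar{x}_i}{b_i-\bar{x}_i},\ \frac{\bar{x}_i-y_i}{\bar{x}_i-a_i}\right),
\end{equation*}
with the convention that a positive numerator over a zero denominator equals $+\infty$ (so the degenerate cases $\bar{x}_i=a_i$ or $\bar{x}_i=b_i$ are absorbed into this formula, consistent with \Cref{eq:delta}). A short case split verifies equivalence with \Cref{eq:delta}: for $y_i>\bar{x}_i$ the first term is nonnegative and equals the value in \Cref{eq:delta} while the second is nonpositive; for $y_i<\bar{x}_i$ the roles swap; and both terms vanish at $y_i=\bar{x}_i$. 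As a maximum of affine functions, each $\delta_{\bar{x}}^i$ is convex, and continuous on its effective domain. Hence the sum is convex and continuous where finite, and $\widetilde{\mathcal{B}}_1^t(\bar{x})$ is convex and closed as a sublevel set of a lower semicontinuous convex function.

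For boundedness, I would use that each $\delta_{\bar{x}}^i(y)\ge 0$, so membership in $\widetilde{\mathcal{B}}_1^t(\bar{x})$ forces $\delta_{\bar{x}}^i(y)\le t$ for every $i$. Unpacking the formula for $\delta_{\bar{x}}^i$ then yields the interval bound $y_i\in[\bar{x}_i+t(a_i-\bar{x}_i),\ \bar{x}_i+t(b_i-\bar{x}_i)]$, which is a bounded set; hence $\widetilde{\mathcal{B}}_1^t(\bar{x})$ is bounded. Combined with closedness, this gives compactness.

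The main obstacle is bookkeeping the degenerate cases where $\bar{x}_i$ sits on the boundary of $[a_i,b_i]$, because then $\delta_{\bar{x}}^i$ takes the value $+\infty$ on one side of $\bar{x}_i$, and one must ensure that the maximum-of-affine rewrite, the convexity argument, and the closedness of the sublevel set still go through. This is handled uniformly by adopting the $+\infty$ convention above and working within the framework of extended-real-valued convex functions, whose sublevel sets are closed whenever the functions are lower semicontinuous.
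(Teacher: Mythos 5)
Your proof is correct, but it takes a genuinely different route from the paper's. For convexity, the paper verifies $\delta_{\bar{x}}^i(\lambda y+(1-\lambda)y')\le\lambda\delta_{\bar{x}}^i(y)+(1-\lambda)\delta_{\bar{x}}^i(y')$ coordinate by coordinate via an explicit case split on which side of $\bar{x}_i$ the points lie; your max-of-two-affine-functions representation packages the same one-dimensional fact more cleanly and gives convexity of each $\delta_{\bar{x}}^i$ essentially for free. The real divergence is in compactness: the paper decomposes $\widetilde{\mathcal{B}}_1^t(\bar{x})$ into $2^k$ orthants and exhibits each as the continuous affine image of the compact simplex $\Delta_{k,t}=\{z\in[0,\infty)^k\mid\sum_{i=1}^k z_i\le t\}$, whereas you obtain closedness from lower semicontinuity of the defining function (sublevel sets of lsc functions are closed) and boundedness from the coordinate-wise bound $\delta_{\bar{x}}^i(y)\le t$. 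Your argument is shorter and avoids the exponential decomposition; the paper's orthant machinery is not wasted, though, since the same affine change of variables is reused verbatim in the volume computation of Theorem~2. One small point to nail down: in the degenerate case $\bar{x}_i=a_i$ with $y_i>\bar{x}_i$, the branch $\frac{\bar{x}_i-y_i}{\bar{x}_i-a_i}$ has a \emph{negative} numerator over a zero denominator, which your stated convention (positive over zero is $+\infty$) does not cover; you must declare this $-\infty$ (or drop that branch when $\bar{x}_i=a_i$, and symmetrically for $\bar{x}_i=b_i$) so that the maximum selects the finite branch and agrees with \Cref{eq:delta}. With that convention fixed, lower semicontinuity and convexity of each extended-real-valued $\delta_{\bar{x}}^i$ hold as you claim, and the proof goes through.
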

\begin{proof}
    
      {\bf Convexity:} 
      Let $y,y'\in \widetilde{\mathcal{B}}_1^t(\bar{x})$ and $\lambda\in(0,1)$. We prove that for $z=\lambda y +(1-\lambda)y'$, we have 
       $z\in \widetilde{\mathcal{B}}_1^t(\bar{x})$. 
       We show it by proving: $$\sum_{i=1}^k\delta_{\bar{x}}^i(z)\leq \lambda \sum_{i=1}^k\delta_{\bar{x}}^i(y) + (1-\lambda)\sum_{i=1}^k\delta_{\bar{x}}^i(y')\leq t$$
       To this end, 
      we prove that $\delta_{\bar{x}}^i(z)\leq\lambda \delta_{\bar{x}}^i(y)+(1-\lambda)\delta_{\bar{x}}^i(y')$ for all $i\in [k]$. 
     We split to cases.
     If $z_i={\bar{x}}_i$, then $\delta_{\bar{x}}^i(z)=0$ and the claim holds. Otherwise, $z_i\neq {\bar{x}}_i$.  If $y_i=y'_i$, then $z_i=y_i=y'_i$ and so $\delta_{\bar{x}}^i(z)=\lambda \delta_{\bar{x}}^i(y)+(1-\lambda)\delta_{\bar{x}}^i(y')$, and the claim holds. Otherwise, $y_i\neq y'_i$. Assume without loss of generality that $z_i>{\bar{x}}_i$ and $y_i>y'_i$ (and thus  $y_i>z_i>y'_i$, since $\lambda\in (0,1)$). 
    If $y'_i>{\bar{x}}_i$, we have: $$z_i-{\bar{x}}_i=\lambda y_i+(1-\lambda)y'_i-{\bar{x}}_i=\lambda(y_i-{\bar{x}}_i)+(1-\lambda)(y'_i-{\bar{x}}_i)$$ 
    By dividing by $b_i-{\bar{x}}_i$, we get: $$\delta_{\bar{x}}^i(z)\leq\lambda \delta_{\bar{x}}^i(y)+(1-\lambda)\delta_{\bar{x}}^i(y')$$ because $y_i,z_i,y'_i>{\bar{x}}_i$ and their denominators in $\delta_{\bar{x}}^i(y)$, $\delta_{\bar{x}}^i(z)$, $\delta_{\bar{x}}^i(y')$ are all $b_i-{\bar{x}}_i$. 
    This proves the claim.
         Otherwise, in case $y'_i\leq {\bar{x}}_i$, we have:
         \begin{multline*}
         z_i-{\bar{x}}_i=\lambda y_i+(1-\lambda)y'_i-{\bar{x}}_i\\
         =\lambda(y_i-{\bar{x}}_i)+(1-\lambda)(y'_i-{\bar{x}}_i)\leq \lambda(y_i-{\bar{x}}_i)
         \end{multline*}
          By dividing by $b_i-{\bar{x}}_i$, we get: $\delta_{\bar{x}}^i(z)\leq\lambda \delta_{\bar{x}}^i(y)$, because
           $y_i,z_i>{\bar{x}}_i$ and their denominators in $\delta_{\bar{x}}^i(y)$ and $\delta_{\bar{x}}^i(z)$ are both $b_i-{\bar{x}}_i$.
           This proves the claim.
      
{\bf Compactness:}
     We decompose $\widetilde{\mathcal{B}}_1^t({\bar{x}})$ into $2^k$ orthants:
    \begin{multline*}
        \bigcup_{s_1,\dots,s_k\in\{\pm1\}}{\{y\in \widetilde{\mathcal{B}}_1^t({\bar{x}})\mid \forall i\in[k]:s_i(y_i-{\bar{x}}_i)\geq 0\}}
    \end{multline*}
    Since a finite union of compact sets is compact, we prove that every orthant is compact.
     We denote the orthant of fixed $s_1,\dots,s_k$ by: 
    $$\widetilde{\mathcal{B}}_1^t({\bar{x}})_{s_1,\dots,s_k}\triangleq {\{y\in \widetilde{\mathcal{B}}_1^t({\bar{x}})\mid \forall i\in[k]:s_i(y_i-{\bar{x}}_i)\geq 0\}}$$
    We remind that $\widetilde{\mathcal{B}}_1^t({\bar{x}})=\{y\in\mathbb{R}^k \mid \sum_{i=1}^k\delta_{\bar{x}}^i(y)\leq t\}$ and $\delta_{\bar{x}}^{i}(y)=\frac{y_i-{\bar{x}}_i}{\mathbf{1}_{\{y_i>{\bar{x}}_i\}}(b_i-{\bar{x}}_i)+\mathbf{1}_{\{y_i<{\bar{x}}_i\}}({a_i-\bar{x}}_i)}$.
    Consider the set of inputs within the first orthant of the $\ell_1$-ball with radius $t$:
    $\Delta_{k,t}=\{z\in[0,\infty)^k\mid\sum_{i=1}^kz_i\leq t\}$, which is compact.
    We show that $\Delta_{k,t}$ can be transformed to $\widetilde{\mathcal{B}}_1^t({\bar{x}})_{s_1,\dots,s_k}$
    with an affine transformation (which in particular is continuous) and thus  $\widetilde{\mathcal{B}}_1^t({\bar{x}})_{s_1,\dots,s_k}$ is compact as a continuous image of a compact set.  
    The affine transformation from $\Delta_{k,t}$ to $\widetilde{\mathcal{B}}_1^t({\bar{x}})_{s_1,\dots,s_k}$ is: $y_i=(b_i-{\bar{x}}_i)\cdot z_i + {\bar{x}}_i$, if $s_i=1$, and $y_i=(a_i-{\bar{x}}_i)\cdot z_i + {\bar{x}}_i$, if $s_i=-1$. This concludes the proof. 

\end{proof}

\begin{lemma}\label{lem:2}
$E_\cap \subseteq \mathcal{B}_0^t({\bar{x}})$, where $E_\cap$ is the extreme points of $\mathcal{D}\cap\widetilde{\mathcal{B}}_1^t({\bar{x}})$. 

\end{lemma}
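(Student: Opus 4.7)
The plan is to classify the coordinates of an extreme point $y \in E_\cap$ and show that only coordinates sitting at a box bound can differ from $\bar{x}$, giving at most $t$ such coordinates via the ball constraint. Specifically, I would partition $[k]$ as $S_0 = \{i : y_i = \bar{x}_i\}$, $S_{\mathrm{bd}} = \{i : y_i \in \{a_i,b_i\} \setminus \{\bar{x}_i\}\}$, and $S_* = \{i : y_i \in (a_i,b_i) \setminus \{\bar{x}_i\}\}$, and observe that $\delta_{\bar{x}}^i(y)$ equals $0$ on $S_0$, equals $1$ on $S_{\mathrm{bd}}$, and lies in $(0,1)$ on $S_*$. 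The whole argument then reduces to proving $S_* = \emptyset$: once this is established, the ball constraint gives $|S_{\mathrm{bd}}| = \sum_i \delta_{\bar{x}}^i(y) \le t$, so $|\{i : y_i \ne \bar{x}_i\}| \le t$ and $y \in \mathcal{B}_0^t(\bar{x})$.

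For $S_* = \emptyset$ I would argue by contradiction with extremeness. Suppose first $|S_*| \ge 2$ and pick distinct $i,j \in S_*$. Because $y_i, y_j$ are strictly interior in their boxes and differ from $\bar{x}_i, \bar{x}_j$, each $\delta_{\bar{x}}^l$ is linear in a neighborhood of $y$, so I can choose a direction $v$ supported on $\{i,j\}$ whose components are inversely proportional to the respective slopes coming from \Cref{eq:delta}, thereby cancelling the first-order change of $\sum_l \delta_{\bar{x}}^l$. For small $\epsilon > 0$, both $y + \epsilon v$ and $y - \epsilon v$ stay in $\mathcal{D}$ and in $\widetilde{\mathcal{B}}_1^t(\bar{x})$, exhibiting $y$ as a nontrivial midpoint and contradicting extremeness.

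It remains to rule out $|S_*| = 1$, say $S_* = \{i\}$. I would split on whether the ball constraint is slack or tight. If $\sum_l \delta_{\bar{x}}^l(y) < t$, the single-coordinate perturbation $v = e_i$ keeps both $y \pm \epsilon v$ feasible for sufficiently small $\epsilon$, again contradicting extremeness. If the ball constraint is tight, I would exploit an integer structure: the equality $|S_{\mathrm{bd}}| + \delta_{\bar{x}}^i(y) = t$ forces $\delta_{\bar{x}}^i(y) = t - |S_{\mathrm{bd}}|$ to be a nonnegative integer, which contradicts $\delta_{\bar{x}}^i(y) \in (0,1)$.

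The main obstacle is the two-index perturbation: one must verify across the four sign combinations of $(y_i - \bar{x}_i,\, y_j - \bar{x}_j)$ that the linearization of $\delta_{\bar{x}}^l$ is valid and that the chosen $v$ keeps both perturbed points in $\mathcal{D} \cap \widetilde{\mathcal{B}}_1^t(\bar{x})$. Restricting the perturbation's support to $S_*$ avoids the kinks of $\delta_{\bar{x}}^l$ at $y_l = \bar{x}_l$, so each sign pattern collapses to the same linear-algebra choice of $v$ and the bookkeeping stays uniform.
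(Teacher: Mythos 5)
Your proposal is correct and follows essentially the same route as the paper's proof: reduce to showing no coordinate lies strictly between its bounds and away from $\bar{x}_i$, then contradict extremeness via a symmetric one-coordinate perturbation when the constraint $\sum_i\delta_{\bar{x}}^i(y)\leq t$ is slack and a two-coordinate complementary perturbation otherwise, using the integrality of $t$. The only (cosmetic) difference is that you case-split on $|S_*|$ and use integrality to kill the $|S_*|=1$ tight case outright, whereas the paper splits on tightness and uses integrality to produce the second fractional coordinate $m$ for the paired perturbation.
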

\begin{proof}
We show that $E_\cap$ is contained in a subset of $\mathcal{B}_0^t({\bar{x}})$ consisting of all inputs $z$ with at least $k-t$ entries $z_i={\bar{x}}_i$ and the other entries are the bounds $a_i$ or $b_i$:
$$
   \{z\in \prod_{i=1}^k\{a_i,{\bar{x}}_i,b_i\}\mid |\{i\in[k]\mid z_i\neq {\bar{x}}_i\}| \leq t\} 
 $$
  Let $y\in E_\cap$.
  We show: (1)~$y\in \prod_{i=1}^k\{a_i,{\bar{x}}_i,b_i\}$ 
  and (2) if $y\in \prod_{i=1}^k\{a_i,{\bar{x}}_i,b_i\}$, then $|\{i\in[k]:y_i\neq {\bar{x}}_i\}| \leq t$.\\
   Proof for (2):
    Since $y\in E_\cap\subseteq \widetilde{\mathcal{B}}_1^t({\bar{x}})$, we get
     $\sum_{i=1}^k\delta_{\bar{x}}^i(y)\leq t$. Since $y\in \prod_{i=1}^k\{a_i,{\bar{x}}_i,b_i\}$, for every $i$, either $y_i={\bar{x}}_i$ and $\delta_{\bar{x}}^i(y)=0$ or 
     $y_i\in \{a_i,b_i\}$ and then $\delta_{\bar{x}}^i(y)=1$. Thus, $|\{i\in[k]:y_i\neq {\bar{x}}_i\}| \leq t$.\\
   Proof for (1): Assume to the contrary that there exists $l$ such that $y_{l}\notin \{a_l,{\bar{x}}_l,b_l\}$. Since $y\in E_\cap\subseteq \mathcal{D}$, we have $\delta_{\bar{x}}^i(y)\in[0,1]$  for every $i\in[k]$. As $y_{l}\notin \{a_l,{\bar{x}}_l,b_l\}$ we also have $\delta_{\bar{x}}^l(y)\in(0,1)$. 
     We show that in this case $y$ is a convex combination of two different inputs $y',y'' \in \mathcal{D}\cap\widetilde{\mathcal{B}}_1^t({\bar{x}})$
     and thus $y$ cannot be an extreme point, in contradiction. 
     We define these two points symmetrically, i.e., $y=0.5(y'+y'')$. 
     We remind that since $y\in E_\cap\subseteq \widetilde{\mathcal{B}}_1^t({\bar{x}})$, we have
     $\sum_{i=1}^k\delta_{\bar{x}}^i(y)\leq t$. Accordingly, we split to cases:
    \begin{itemize}[nosep,nolistsep]
        \item $\sum_{i=1}^k\delta_{\bar{x}}^i(y)<t$: 
        In this case, $y$ is within $\widetilde{\mathcal{B}}_1^t({\bar{x}})$ (not on its boundary). We identify a small value that can be added and subtracted from $y_{l}$ such that the new inputs do not cross ${\bar{x}}_l$ (possible since $y_{l}\neq {\bar{x}}_{l}$ and thus $\delta_{\bar{x}}^l(y)>0$), do not cross the bound that is closer to $y_{l}$ than to ${\bar{x}}_{l}$ (possible since $y_{l}\neq a_l,b_l$ and thus $\delta_{\bar{x}}^l(y)<1$) and do not cross the boundary of $\widetilde{\mathcal{B}}_1^t({\bar{x}})$ (possible since $\sum_{i=1}^k\delta_{\bar{x}}^i(y)<t$).
        Formally, $y'=y+\epsilon\cdot v_{l}$ and $y''=y-\epsilon \cdot v_{l}$ for:
        $$v_l=(\mathbf{1}_{\{y_{l}>{\bar{x}}_{l}\}}(b_{l}-{\bar{x}}_{l})+\mathbf{1}_{\{y_{l}<{\bar{x}}_{l}\}}(a_{l}-{\bar{x}}_{l}))e_l$$
        where $e_{l}$ is the standard unit vector with $1$ at entry $l$ and $0$ elsewhere, and
        \begin{align*}
        \epsilon =\min\{\delta_{\bar{x}}^l(y),1-\delta_{\bar{x}}^l(y),t-\sum_{i=1}^k\delta_{\bar{x}}^i(y)\}
        \end{align*}
        As $\epsilon>0$ and $v_l$ is not the zero vector (since $y_l$ lies strictly between $\bar{x}_l$ and one of the bounds $a_l$ or $b_l$) we have $y'\neq y''$ and by the construction $y',y''\in \mathcal{D}\cap\widetilde{\mathcal{B}}_1^t({\bar{x}})$.
        
        \item $\sum_{i=1}^k\delta_{\bar{x}}^i(y)=t$: 
        In this case, $y$ is on the boundary of $\widetilde{\mathcal{B}}_1^t({\bar{x}})$. We show there is another index $m$ such that $y_{m}\notin \{a_m,{\bar{x}}_m,b_m\}$ and 
        identify small values that can be added and subtracted from $y_{l},y_m$ such that the new inputs do not cross ${\bar{x}}_l,{\bar{x}}_m$ (possible since $y_{l}\neq {\bar{x}}_{l},y_m\neq {\bar{x}}_m$ and thus $\delta_{\bar{x}}^l(y),\delta_{\bar{x}}^m(y)>0$), do not cross the bound that is closer to $y_{l}$ and $y_m$ than to ${\bar{x}}_{l}$ and ${\bar{x}}_m$ (possible since $y_{l}\neq a_l,b_l$ and $y_m \neq a_m,b_m$ and thus $\delta_{\bar{x}}^l(y),\delta_{\bar{x}}^m(y)<1$) and remain on the boundary of $\widetilde{\mathcal{B}}_1^t({\bar{x}})$.  
        We have $\delta_{\bar{x}}^{l}(y)\in(0,1)$ and since $\sum_{i=1}^k\delta_{\bar{x}}^i(y)=t$ for an integer $t$, there must be $m\neq l$ such that $\delta_{\bar{x}}^{m}(y)\in(0,1)$, and $y_{m}\notin \{a_m,{\bar{x}}_m,b_m\}$. 
         We define $y'=y+\epsilon\cdot{v_{lm}}$ and $y''=y-\epsilon\cdot {v_{lm}}$, where ${v_{lm}}$ changes coordinates $l$ and $m$ in a scaled complementary manner to preserve $\sum_{i=1}^k\delta_{\bar{x}}^i(y)=t$, formally $v_{lm}=v_{l}-v_{m}$ for:
         \begin{align*}
             {v_{l}}=(&\mathbf{1}_{\{y_{l}>{\bar{x}}_{l}\}}(b_{l}-{\bar{x}}_{l})\\
             &+\mathbf{1}_{\{y_{l}<{\bar{x}}_{l}\}}(a_{l}-{\bar{x}}_{l})) \cdot e_{l}
         \end{align*}
         and similarly: 
         \begin{align*}
             {v_{m}}=(&\mathbf{1}_{\{y_{m}>{\bar{x}}_{m}\}}(b_{m}-{\bar{x}}_{m})\\
             &+\mathbf{1}_{\{y_{m}<{\bar{x}}_{m}\}}(a_{m}-{\bar{x}}_{m})) \cdot e_{m}
         \end{align*}
        and:
        \begin{align*}
            \epsilon =\min\{\delta_{\bar{x}}^l(y),1-\delta_{\bar{x}}^l(y),\delta_{\bar{x}}^m(y), 1-\delta_{\bar{x}}^m(y)\}
        \end{align*}
        As $\epsilon>0$ and $v_{lm}$ is not the zero vector we have $y'\neq y''$ and by the construction $y',y''\in \mathcal{D}\cap\widetilde{\mathcal{B}}_1^t({\bar{x}})$.
    \end{itemize}
  
\end{proof}

\ftb*
\begin{proof}
    We decompose $\widetilde{\mathcal{B}}_1^t({\bar{x}})$ into $2^k$ orthants:
    \begin{multline*}
        \bigcup_{s_1,\dots,s_k\in\{\pm1\}}{\{y\in \widetilde{\mathcal{B}}_1^t({\bar{x}})\mid \forall i\in[k]:s_i(y_i-{\bar{x}}_i)\geq 0\}}
    \end{multline*}
    where the intersection of difference sets in this union has zero volume. 
    We next determine the volume of a single orthant, with a fixed $s_1,\dots,s_k$, denoted by 
    $\widetilde{\mathcal{B}}_1^t({\bar{x}})_{s_1,\dots,s_k}\triangleq {\{y\in \widetilde{\mathcal{B}}_1^t({\bar{x}})\mid \forall i\in[k]:s_i(y_i-{\bar{x}}_i)\geq 0\}}$.
    We remind that $\widetilde{\mathcal{B}}_1^t({\bar{x}})=\{y\in\mathbb{R}^k \mid \sum_{i=1}^k\delta_{\bar{x}}^i(y)\leq t\}$ and $\delta_{\bar{x}}^{i}(y)=\frac{y_i-{\bar{x}}_i}{\mathbf{1}_{\{y_i>{\bar{x}}_i\}}(b_i-{\bar{x}}_i)+\mathbf{1}_{\{y_i<{\bar{x}}_i\}}({a_i-\bar{x}}_i)}$.
        Consider the set of inputs within the first orthant of the $\ell_1$-ball with radius $t$:
    $\Delta_{k,t}=\{z\in[0,\infty)^k\mid\sum_{i=1}^kz_i\leq t\}$.
    It is known that $\text{vol}(\Delta_{k,t})=\frac{t^k}{k!}$. We next transform $\Delta_{k,t}$ to $\widetilde{\mathcal{B}}_1^t({\bar{x}})_{s_1,\dots,s_k}$ by a change of variables: $y_i=(b_i-{\bar{x}}_i)\cdot z_i + {\bar{x}}_i$, if $s_i=1$, and $y_i=(a_i-{\bar{x}}_i)\cdot z_i + {\bar{x}}_i$, if $s_i=-1$.
The change of variables shows that every axis $z_i$ is scaled by $|b_i-{\bar{x}}_i|$, if $s_i=1$, or by $|a_i-{\bar{x}}_i|$, if $s_i=-1$, thus:
    \begin{multline*}
        \text{vol}(\widetilde{\mathcal{B}}_1^t({\bar{x}})_{s_1,\dots,s_k}) = \\
        \text{vol}(\Delta_{k,t}) \prod_{i=1}^k\left(\mathbf{1}_{\{s_i=1\}}|b_i-{\bar{x}}_i|+\mathbf{1}_{\{s_i=-1\}}|a_i-{\bar{x}}_i|\right)
    \end{multline*}
As mentioned, the total volume is their summation. We observe that the summation of the scaling is $\text{vol}(\mathcal{D})$: 
    \begin{multline*}
    \text{vol}(\mathcal{D})= \prod_{i=1}^k({b_i-a_i})=
     \prod_{i=1}^k{\left(|b_i-{\bar{x}}_i|+|a_i-{\bar{x}}_i|\right)}=\\
        \sum_{s1,\dots,s_k\in\{\pm1\}}\prod_{i=1}^k\left(\mathbf{1}_{\{s_i=1\}}|b_i-{\bar{x}}_i|+\mathbf{1}_{\{s_i=-1\}}|a_i-{\bar{x}}_i|\right)
    \end{multline*}
    Overall, we get $\text{vol}(\widetilde{\mathcal{B}}_1^t({\bar{x}}))=\text{vol}(\mathcal{D})\frac{t^k}{k!}$.\\
    The proof for $\mathcal{D}\cap\widetilde{\mathcal{B}}_1^t({\bar{x}})$ is similar.
First, we split to the orthants: $\widetilde{\mathcal{B}}_1^t({\bar{x}})_{s_1,\dots,s_k}\cap \mathcal{D}$.
Second, we consider $\Delta_{k,t}\cap [0,1]^k$ (since $\delta_{\bar{x}}^i(y)\in [0,1]$ for $y\in \mathcal{D}$).
Third, we use the same variable change and obtain the same scaling factor:
 \begin{multline*}
        \text{vol}(\widetilde{\mathcal{B}}_1^t({\bar{x}})_{s_1,\dots,s_k}\cap\mathcal{D}) = \\
        \text{vol}(\Delta_{k,t}\cap [0,1]^k) \prod_{i=1}^k\left(\mathbf{1}_{\{s_i=1\}}|b_i-{\bar{x}}_i|+\mathbf{1}_{\{s_i=-1\}}|a_i-{\bar{x}}_i|\right)
    \end{multline*}
    Fourth, again, the summation of the scaling is $\text{vol}(\mathcal{D})$, thus we get $\text{vol}(\widetilde{\mathcal{B}}_1^t({\bar{x}})\cap\mathcal{D})=\text{vol}(\mathcal{D})\cdot \text{vol}(\Delta_{k,t}\cap[0,1]^k)$.
Fifth, the second term can be computed by the Irwin–Hall distribution \cite{Irwin, Hall}.
Technically, to compute the volume, we compute the probability that a uniformly sampled input from $[0,1]^k$ lies in $\Delta_{k,t}$, that is its sum of entries is at most $t$. 
The sum of entries of a uniformly sampled input in $[0,1]^k$ is distributed as the Irwin-Hall distribution.
Therefore, the probability we are interested in is obtained by evaluating the Irwin-Hall CDF at $t$.
Since the volume of $[0,1]^k$ is $1$, we get that $\text{vol}(\Delta_{k,t}\cap[0,1]^k)$ is exactly this probability:
$\text{vol}(\Delta_{k,t}\cap[0,1]^k)=\frac{t^k}{k!}\sum_{r=0}^{t-1}(-1)^r\binom{k}{r}(1-\frac{r}{t})^k$.
\end{proof}

\begin{lemma}\label{lem:3}
    $\widetilde{\mathcal{B}}_{1, \infty}^t(\bar{x})$ is convex and compact.
\end{lemma}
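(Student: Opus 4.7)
The plan is to mirror the structure of the proof of Lemma~\ref{lem:1}, lifting each step from single-channel to multi-channel by inserting a $\max_{j\in[d]}$ in the right places and exploiting the fact that each $\delta_{\bar{x}}^{i,j}$ depends on only one coordinate $y_i^{(j)}$.

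\textbf{Convexity.} I would first establish that each $\delta_{\bar{x}}^{i,j}:(\mathbb{R}^d)^k\to[0,\infty]$ is convex. Since $\delta_{\bar{x}}^{i,j}(y)$ depends only on the single coordinate $y_i^{(j)}$, the pointwise inequality $\delta_{\bar{x}}^{i,j}(z)\leq\lambda\,\delta_{\bar{x}}^{i,j}(y)+(1-\lambda)\,\delta_{\bar{x}}^{i,j}(y')$ for $z=\lambda y+(1-\lambda)y'$ is exactly the one-variable argument already carried out in Lemma~\ref{lem:1} (case split on whether $z_i^{(j)}={\bar x}_i^{(j)}$ or whether $y_i^{(j)}$ and ${y'}_i^{(j)}$ lie on the same side of ${\bar x}_i^{(j)}$). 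Once pointwise convexity of every $\delta_{\bar{x}}^{i,j}$ is in hand, convexity propagates through the two outer operations: the pointwise maximum of a finite family of convex functions is convex, and a finite nonnegative sum of convex functions is convex. Hence $y\mapsto\sum_{i=1}^k\max_{j\in[d]}\delta_{\bar{x}}^{i,j}(y)$ is convex, and $\widetilde{\mathcal{B}}_{1,\infty}^t(\bar{x})$ is its $t$-sublevel set, which is therefore convex.

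\textbf{Compactness.} I would again decompose by sign patterns, but now there are $2^{kd}$ orthants, one for each choice of signs $s_i^{(j)}\in\{\pm 1\}$ of $(y_i^{(j)}-\bar{x}_i^{(j)})$. A finite union of compact sets is compact, so it suffices to show every orthant piece is compact. As a compact reference set I would define the multi-channel simplex
\[
\Delta_{k,d,t}=\bigl\{z\in([0,\infty)^d)^k\;\big|\;\textstyle\sum_{i=1}^k\max_{j\in[d]}z_i^{(j)}\leq t\bigr\},
\]
which is closed (as the $t$-sublevel set of the continuous map $z\mapsto\sum_i\max_j z_i^{(j)}$) and bounded (since the constraint forces every $z_i^{(j)}\leq t$), hence compact. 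I would then exhibit each orthant piece of $\widetilde{\mathcal{B}}_{1,\infty}^t(\bar{x})$ as the image of $\Delta_{k,d,t}$ under the affine map that sends $z_i^{(j)}\mapsto (b_i^{(j)}-\bar{x}_i^{(j)})z_i^{(j)}+\bar{x}_i^{(j)}$ when $s_i^{(j)}=+1$ and $z_i^{(j)}\mapsto (a_i^{(j)}-\bar{x}_i^{(j)})z_i^{(j)}+\bar{x}_i^{(j)}$ when $s_i^{(j)}=-1$. Since continuous images of compact sets are compact, each orthant piece is compact, and taking their union finishes the argument.

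\textbf{Anticipated obstacle.} The one technical subtlety, as in Lemma~\ref{lem:1}, is the degenerate situation where $\bar{x}_i^{(j)}$ coincides with $a_i^{(j)}$ or $b_i^{(j)}$, so that $\delta_{\bar{x}}^{i,j}$ is $\infty$ on one side of $\bar{x}_i^{(j)}$. For convexity this is handled by the convention $0\cdot\infty=0$ and a straightforward case check; for compactness the corresponding affine map collapses that coordinate to $\bar{x}_i^{(j)}$ in any orthant whose sign $s_i^{(j)}$ points in the degenerate direction, which still yields an affine image of a compact set. This is the only place where care beyond the single-channel proof is required, since the $\max_{j\in[d]}$ is a convexity-preserving, continuity-preserving operation and contributes nothing new to either bound.
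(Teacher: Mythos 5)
Your proposal is correct and follows essentially the same route as the paper: pointwise convexity of each $\delta_{\bar{x}}^{i,j}$ lifted through the max and the sum, and compactness via the $2^{kd}$ orthant decomposition with each piece realized as an affine image of the reference set $\{z\in([0,\infty)^d)^k\mid\sum_i\max_j z_i^{(j)}\leq t\}$. The only (immaterial) difference is that you justify compactness of that reference set directly by Heine--Borel (closed sublevel set, bounded since each $z_i^{(j)}\leq t$), whereas the paper writes it as a finite union of compact pieces indexed by which channel attains the maximum.
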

\begin{proof}
    
      {\bf Convexity:} 
      Let $y,y'\in \widetilde{\mathcal{B}}_{1, \infty}^t(\bar{x})$ and $\lambda\in(0,1)$. We prove that for $z=\lambda y +(1-\lambda)y'$, we have 
       $z\in \widetilde{\mathcal{B}}_{1, \infty}^t(\bar{x})$. 
       We show it by proving:
       \begin{multline*}
           \sum_{i=1}^k \max_{j\in[d]}\delta_{\bar{x}}^{i,j}(z)\leq \\
           \lambda \sum_{i=1}^k\max_{j\in[d]}\delta_{\bar{x}}^{i,j}(y) + (1-\lambda)\sum_{i=1}^k\max_{j\in[d]}\delta_{\bar{x}}^{i,j}(y')\leq t
       \end{multline*}  
      To this end, we prove that for all $i\in [k]$: 
      \begin{align*}
         \max_{j\in[d]}\delta_{\bar{x}}^{i,j}(z)\leq\lambda \max_{j\in[d]}\delta_{\bar{x}}^{i,j}(y)+(1-\lambda)\max_{j\in[d]}\delta_{\bar{x}}^{i,j}(y') 
      \end{align*}
     We prove it by showing that for all $i\in [k]$ and $j\in [d]$:
         $\delta_{\bar{x}}^{i,j}(z)\leq\lambda \delta_{\bar{x}}^{i,j}(y)+(1-\lambda)\delta_{\bar{x}}^{i,j}(y')$.
         The proof is identical 
         to the proof of the corresponding inequality in \Cref{lem:1} with an additional index.  
     
{\bf Compactness:}
     We decompose $\widetilde{\mathcal{B}}_{1, \infty}^t(\bar{x})$ into $(2^{d})^{k}$ orthants:
    \begin{multline*}
        \bigcup_{s_1,\dots,s_k\in\{\pm1\}^d}\\{\{y\in \widetilde{\mathcal{B}}_{1, \infty}^t(\bar{x})\mid \forall i\in[k], j\in[d]:
        s_i^{(j)}(y_i^{(j)}-{\bar{x}}_i^{(j)})\geq 0\}}
    \end{multline*}
    Since a finite union of compact sets is compact, we prove that every orthant is compact.
     We denote the orthant of fixed $s_1,\dots,s_k$ by:
     \begin{multline*}
         \widetilde{\mathcal{B}}_{1, \infty}^t(\bar{x})_{s_1,\dots,s_k}\triangleq \\ 
    {\{y\in \widetilde{\mathcal{B}}_{1, \infty}^t(\bar{x})\mid \forall i\in[k], j\in[d]:s_i^{(j)}(y_i^{(j)}-{\bar{x}}_i^{(j)})\geq 0\}}
     \end{multline*} 
    We remind that: 
    $$\widetilde{\mathcal{B}}_{1, \infty}^t(\bar{x})=\{y\in(\mathbb{R}^{d})^k \mid \sum_{i=1}^k \max_{j\in[d]}\delta_{\bar{x}}^{i,j}(y)\leq t\}$$
    and
    \begin{align*}
        \delta_{\bar{x}}^{i,j}(y)=\frac{y_i^{(j)}-\bar{x}_i^{(j)}}{\mathbf{1}_{y_i^{(j)}>\bar{x}_i^{(j)}}(b_i^{(j)}-\bar{x}_i^{(j)})+\mathbf{1}_{y_i^{(j)}<\bar{x}_i^{(j)}}(a_i^{(j)}-\bar{x}_i^{(j)})}
    \end{align*}
    Let
    $\Delta_{k,d,t}\triangleq\{z\in([0,\infty)^{ d})^{k}\mid\sum_{i=1}^k\max_{j\in[d]}z_i^{(j)}\leq t\}$.
    We show it is compact as a finite union of compact sets.
   To this end,
    we decompose it into sets based on the channel $j_i\in [d]$ of each entry $i\in [k]$ with the maximum value:
    \begin{multline*}
        \Delta_{k,d,t}=\bigcup_{j_1,\dots,j_k\in[d]}\{z\in([0,\infty)^{d})^{k}\mid \\ \sum_{i=1}^kz_i^{(j_i)}\leq t \wedge \forall i\in[k], j\in[d]:z_i^{(j_i)}\geq z_i^{(j)} \}
    \end{multline*}
    Note that each of these sets is bounded by linear constraints, and is thus compact. Thus, $\Delta_{k,d,t}$ is compact.
    Next, we show that $\Delta_{k,d,t}$ can be transformed to $\widetilde{\mathcal{B}}_{1, \infty}^t(\bar{x})_{s_1,\dots,s_k}$
    with an affine transformation (which in particular is continuous) and thus  $\widetilde{\mathcal{B}}_{1, \infty}^t(\bar{x})_{s_1,\dots,s_k}$ is compact as a continuous image of a compact set.  
    The affine transformation is:  
    $y_i^{(j)}=(b_i^{(j)}-{\bar{x}}_i^{(j)})\cdot z_i^{(j)} + {\bar{x}}_i^{(j)}$, if $s_i^{(j)}=1$, and $y_i^{(j)}=(a_i^{(j)}-{\bar{x}}_i^{(j)})\cdot z_i^{(j)} + {\bar{x}}_i^{(j)}$, if $s_i^{(j)}=-1$.
\end{proof}

\begin{lemma}\label{lem:4}
$E_\cap \subseteq \mathcal{B}_0^t({\bar{x}})$, where $E_\cap$ is the extreme points of $\mathcal{D}\cap\widetilde{\mathcal{B}}_{1, \infty}^t(\bar{x})$. 
\end{lemma}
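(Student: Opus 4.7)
The plan is to adapt the argument of Lemma~\ref{lem:2} to the multi-channel setting. I will prove the stronger statement that every $y\in E_\cap$ satisfies $y_i^{(j)}\in\{a_i^{(j)},\bar{x}_i^{(j)},b_i^{(j)}\}$ for all $i,j$. This immediately yields $y\in\mathcal{B}_0^t(\bar{x})$: each $\max_j\delta_{\bar{x}}^{i,j}(y)$ is then $0$ (if $y_i=\bar{x}_i$) or $1$ (as soon as any channel of pixel $i$ is at one of its bounds), so $\sum_i\max_j\delta_{\bar{x}}^{i,j}(y)\le t$ forces at most $t$ pixels to have $y_i\ne\bar{x}_i$.

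The contrapositive is where all the work lies: assuming some channel $y_l^{(m)}$ satisfies $\delta_{\bar{x}}^{l,m}(y)\in(0,1)$, I will construct a nonzero direction $v$ with $y\pm\epsilon v\in\mathcal{D}\cap\widetilde{\mathcal{B}}_{1,\infty}^t(\bar{x})$ for all sufficiently small $\epsilon>0$. This writes $y$ as the midpoint of two distinct points in the set and contradicts extremality. I split on whether $(l,m)$ attains the pixelwise max. If $\delta_{\bar{x}}^{l,m}(y)<\max_j\delta_{\bar{x}}^{l,j}(y)$, a small perturbation of $y_l^{(m)}$ alone along the scaled standard direction used in Lemma~\ref{lem:2} leaves $\max_j\delta_{\bar{x}}^{l,j}(y)$ unchanged for small $\epsilon$, so the entire constraint sum is unchanged and both $y\pm\epsilon v$ lie in the set.

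If instead $\delta_{\bar{x}}^{l,m}(y)=\max_j\delta_{\bar{x}}^{l,j}(y)$, perturbing $y_l^{(m)}$ alone changes the pixel-$l$ max asymmetrically when there are ties, so I perturb the entire set $M_l$ of channels tied for the max at pixel $l$ in lockstep. The crucial observation that makes this feasible is that whenever $\max_j\delta_{\bar{x}}^{l,j}(y)\in(0,1)$, every $j\in M_l$ also satisfies $\delta_{\bar{x}}^{l,j}(y)\in(0,1)$, so $y_l^{(j)}$ is strictly interior for every $j\in M_l$ and admits two-sided perturbation. Moving all of $M_l$ in lockstep changes $\max_j\delta_{\bar{x}}^{l,j}(y)$ by exactly $\pm\epsilon$. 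When $\sum_i\max_j\delta_{\bar{x}}^{i,j}(y)<t$, the slack absorbs this change and we are done. When $\sum_i\max_j\delta_{\bar{x}}^{i,j}(y)=t$, integrality of $t$ together with $\max_j\delta_{\bar{x}}^{l,j}(y)\in(0,1)$ forces some other pixel $l'\ne l$ with $\max_j\delta_{\bar{x}}^{l',j}(y)\in(0,1)$; applying the same observation at $l'$, I compensate the change at $l$ by perturbing $M_{l'}$ in the opposite direction, keeping the sum exactly at $t$.

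The main obstacle is precisely this non-smoothness of the max: a single-channel perturbation out of a tied maximum is asymmetric (the max rises by $\epsilon$ in one direction but is pinned from below in the other). Moving the entire tied sets $M_l$ and $M_{l'}$ together sidesteps this, and the observation that tied-max channels are automatically interior (because the tied value lies in $(0,1)$, not at $0$ or $1$) is precisely what makes the simultaneous two-sided perturbation feasible.
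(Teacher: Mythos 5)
Your proposal is correct and matches the paper's proof in all essential respects: the same reduction to showing every channel of an extreme point lies in $\{a_i^{(j)},\bar{x}_i^{(j)},b_i^{(j)}\}$, the same midpoint-of-two-points contradiction, the same lockstep perturbation of tied-max channels (all of which are automatically interior since the tied value lies in $(0,1)$) to handle the non-smoothness of $\max_j$, and the same use of the integrality of $t$ to find a compensating pixel when the constraint is tight. The only difference is organizational: the paper first proves the auxiliary claim that all nonzero channel distances at a pixel of an extreme point are equal and then perturbs the set $\mathcal{J}_l$ of nonzero-distance channels, whereas you case-split on whether the offending channel attains the pixel max and perturb only the tied set $M_l$, which slightly streamlines the argument without changing the route.
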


\begin{proof}
We show that $E_\cap$ is contained in a subset of $\mathcal{B}_0^t({\bar{x}})$, consisting of all inputs $z$ with at least $k-t$ entries $z_i={\bar{x}}_i$ and for the other entries all the channels are either $\bar{x}_i^{(j)}$ or the bounds $a_i^{(j)}$ or $b_i^{(j)}$:
\begin{multline*}
    \{z\in \prod_{i=1}^k(\prod_{j=1}^d\{a_i^{(j)},{\bar{x}}_i^{(j)},b_i^{(j)}\})\mid \\ |\{i\in[k]:z_i\neq {\bar{x}}_i\}| \leq t\} 
\end{multline*}
  Let $y\in E_\cap$. We show:
  \begin{enumerate}[nosep,nolistsep,label=\arabic*.]
    \item $y\in \prod_{i=1}^k(\prod_{j=1}^d\{a_i^{(j)},{\bar{x}}_i^{(j)},b_i^{(j)}\})$ and
    \item if $y\in \prod_{i=1}^k(\prod_{j=1}^d\{a_i^{(j)},{\bar{x}}_i^{(j)},b_i^{(j)}\})$, then: \\
     $|\{i\in[k]\mid y_i\neq {\bar{x}}_i\}| \leq t$.
  \end{enumerate} 
   Proof for 2:
Since $y\in E_\cap$ and $E_\cap\subseteq \widetilde{\mathcal{B}}_{1,\infty}^t({\bar{x}})$, we get 
     $\sum_{i=1}^k\max_{j\in[d]}\delta_{\bar{x}}^{i,j}(y)\leq t$. We are given $y\in \prod_{i=1}^k(\prod_{j=1}^d\{a_i^{(j)},{\bar{x}}_i^{(j)},b_i^{(j)}\})$. Thus, for every $i$, either $y_i={\bar{x}}_i$ and thus $\max_{j\in[d]}\delta_{\bar{x}}^{i,j}(y)=0$ or  
     $\max_{j\in[d]}\delta_{\bar{x}}^{i,j}(y)=1$. Thus, $|\{i\in[k]\mid y_i\neq {\bar{x}}_i\}| \leq t$.\\
   Proof for 1: Assume to the contrary that there are $l\in[k]$ and $l'\in[d]$ such that $y_{l}^{(l')}\notin \{a_l^{(l')},{\bar{x}}_l^{(l')},b_l^{(l')}\}$. 
   Since $y\in E_\cap\subseteq \mathcal{D}$, we have $\delta_{\bar{x}}^{i,j}(y)\in[0,1]$ for every $i\in[k]$ and $j\in[d]$. Since $y_{l}^{(l')}\notin \{a_l^{(l')},{\bar{x}}_l^{(l')},b_l^{(l')}\}$, we get $\delta_{\bar{x}}^{l,l'}(y)\in(0,1)$.
     We show that in this case $y$ is a convex combination of two different inputs $y',y'' \in \mathcal{D}\cap\widetilde{\mathcal{B}}_{1,\infty}^t({\bar{x}})$
     and thus $y$ cannot be an extreme point, in contradiction. 
     We define these two points symmetrically, i.e., $y=0.5(y'+y'')$. 
     We remind that since $y\in E_\cap\subseteq \widetilde{\mathcal{B}}_{1,\infty}^t({\bar{x}})$, we have
     $\sum_{i=1}^k\max_{j\in[d]}\delta_{\bar{x}}^{i,j},(y)\leq t$. Accordingly, we split to cases: 
    \begin{itemize}[nosep,nolistsep]
        \item $\sum_{i=1}^k\max_{j\in[d]}\delta_{\bar{x}}^{i,j}(y)<t$: 
        In this case, $y$ is within $\widetilde{\mathcal{B}}_{1,\infty}^t({\bar{x}})$ (not on its boundary). The proof for this case is similar to the proof of the corresponding case in \Cref{lem:2}, with an additional index. We repeat it here for convenience. We identify a small value that can be added and subtracted from $y_{l}^{(l')}$ such that the new inputs do not cross ${\bar{x}}_l^{(l')}$ (possible since $y_{l}^{(l')}\neq {\bar{x}}_{l}^{(l')}$ and thus $\delta_{\bar{x}}^{l,l'}(y)>0$), do not cross the bound that is closer to $y_{l}^{(l')}$ than to ${\bar{x}}_{l}^{(l')}$ (possible since $y_{l}^{(l')}\neq a_l^{(l')},b_l^{(l')}$ and thus $\delta_{\bar{x}}^{l,l'}(y)<1$) and do not cross the boundary of $\widetilde{\mathcal{B}}_{1,\infty}^t({\bar{x}})$ (possible since $\sum_{i=1}^k\max_{j\in[d]}\delta_{\bar{x}}^{i,j}(y)<t$).
        Formally, $y'=y+\epsilon\cdot v_{l}^{(l')}$ and $y''=y-\epsilon \cdot v_{l}^{(l')}$ for:
        \begin{align*}
           v_l^{(l')}=&(\mathbf{1}_{y_{l}^{(l')}>{\bar{x}}_{l}^{(l')}}(b_{l}^{(l')}-{\bar{x}}_{l}^{(l')})\\&+\mathbf{1}_{y_{l}^{(l')}<{\bar{x}}_{l}^{(l')}}(a_{l}^{(l')}-{\bar{x}}_{l}^{(l')}))e_l^{(l')} 
        \end{align*}
        where $e_{l}^{(l')}$ is the standard unit vector with $1$ at the $(l')^{\text{th}}$ channel of entry $l$ and $0$ elsewhere, and:
        \begin{align*}
        \epsilon =\min\{\delta_{\bar{x}}^{l,l'}(y),1-\delta_{\bar{x}}^{l,l'}(y),t-\sum_{i=1}^k\max_{j\in[d]}\delta_{\bar{x}}^{i,j}(y)\}
        \end{align*}
        As $\epsilon>0$ and $v_{l}^{(l')}$ is not the zero vector (since $y_l^{(l')}$ lies strictly between $\bar{x}_l^{(l')}$ and one of the bounds $a_l^{(l')}$ or $b_l^{(l')}$) we have $y'\neq y''$ and by the construction $y',y''\in \mathcal{D}\cap\widetilde{\mathcal{B}}_{1,\infty}^t({\bar{x}})$.
        
        \item $\sum_{i=1}^k\max_{j\in[d]}\delta_{\bar{x}}^{i,j}(y)=t$: 
        In this case, $y$ is on the boundary of $\widetilde{\mathcal{B}}_{1,\infty}^t({\bar{x}})$. Similarly to the respective case in \Cref{lem:2}, we look for another entry and identify small values that can be added and subtracted in a scaled complementary manner to obtain $y'$ and $y''$. Compared to \Cref{lem:2}, this case is more challenging, since $\widetilde{\mathcal{B}}_{1,\infty}^t(\bar{x})$ considers the maximal 
        $\delta_{\bar{x}}^{i,j}(y)$ over all $d$ channels of the $i^\text{th}$ entry. Thus, we may need to add or subtract from multiple channels of the same entry. Since $y$ is an extreme point (by our assumption), for all $i \in [k]$, the non zero values among $\delta_{\bar{x}}^{i,j}(y)$ for $j \in [d]$ are equal (the proof is given at the end of this proof and marked with $\star$).  
        Given $i\in [k]$, we denote by  $\mathcal{J}_i$ the set of all channels $j\in[d]$ such that $\delta_{\bar{x}}^{i,j}(y)\neq 0$.   
        Next, we consider $i=l$.
        Since $\delta_{\bar{x}}^{l,l'}(y)\in(0,1)$, we have $\delta_{\bar{x}}^{l,j}(y)=\delta_{\bar{x}}^{l,l'}(y)\in(0,1)$ for all $j\in \mathcal{J}_l$. Thus, $y_l^{(j)}\notin\{a_l^{(j)},\bar{x}_l^{(j)},b_l^{(j)}\}$ for all $j\in \mathcal{J}_l$. 
         We show that there is another entry $m\neq l$ such that $\max_{j \in [d]} \delta_{\bar{x}}^{m,j}(y)\in(0,1)$ and $y_m^{(j)}\notin\{a_m^{(j)},\bar{x}_m^{(j)},b_m^{(j)}\}$ for all $j\in \mathcal{J}_m$
        % (where $\mathcal{J}_m$ contains all indices $j\in[d]$ such that $\delta_{\bar{x}}^{m,j}(y)\neq 0$)
          and identify small values that can be added and subtracted from $y_{l}^{(j)},y_m^{(j')}$ (for all $j\in\mathcal{J}_l$ and $j'\in\mathcal{J}_m$) such that the new inputs do not cross ${\bar{x}}_l^{(j)},{\bar{x}}_m^{(j')}$(possible since $y_{l}^{(j)}\neq {\bar{x}}_{l}^{(j)},y_m^{(j')}\neq {\bar{x}}_m^{(j')}$ and thus $\delta_{\bar{x}}^{l,j}(y),\delta_{\bar{x}}^{m,j'}(y)>0$, for all $j\in\mathcal{J}_l$ and $j'\in\mathcal{J}_m$), do not cross the bound that is closer to $y_{l}^{(j)}$ and $y_m^{(j')}$ than to ${\bar{x}}_{l}^{(j)}$ and ${\bar{x}}_m^{(j')}$ (possible since $y_{l}^{(j)}\neq a_l^{(j)},b_l^{(j)}$ and $y_m^{(j')} \neq a_m^{(j')},b_m^{(j')}$ and thus $\delta_{\bar{x}}^{l,j}(y),\delta_{\bar{x}}^{m,j'}(y)<1$, for all $j\in\mathcal{J}_l$ and $j'\in\mathcal{J}_m$) and remain on the boundary of $\widetilde{\mathcal{B}}_{1,\infty}^t({\bar{x}})$.
        Since $\sum_{i=1}^k\max_{j\in[d]}\delta_{\bar{x}}^{i,j}(y)=t$ for an integer $t$ and since $\max_{j\in[d]}\delta_{\bar{x}}^{l,j}(y)\in(0,1)$, there must be $m\neq l$ such that $\max_{j\in [d]}\delta_{\bar{x}}^{m,j}(y)\in(0,1)$. As explained before, for all $j\in \mathcal{J}_m$, we have $y_{m}^{(j)}\notin \{a_m^{(j)},{\bar{x}}_m^{(j)},b_m^{(j)}\}$. 
         We define $y'=y+\epsilon\cdot{v_{lm}}$ and $y''=y-\epsilon\cdot {v_{lm}}$, where ${v_{lm}}$ changes coordinates in $l$ and $m$ in a scaled complementary manner to preserve $\sum_{i=1}^k\max_{j\in[d]}\delta_{\bar{x}}^{i,j}(y)=t$. Formally, $v_{lm}=v_l-v_m$ where:
        \begin{align*}
            v_l = \sum_{j\in \mathcal{J}_l}&(\mathbf{1}_{\{y_{l}^{(j)}>{\bar{x}}_{l}^{(j)}\}}(b_{l}^{(j)}-{\bar{x}}_{l}^{(j)})\\&+\mathbf{1}_{\{y_{l}^{(j)}<{\bar{x}}_{l}^{(j)}\}}(a_{l}^{(j)}-{\bar{x}}_{l}^{(j)})) \cdot e_{l}^{(j)}
        \end{align*}
        and similarly:
        \begin{align*}
            v_m = \sum_{j\in \mathcal{J}_m}&(\mathbf{1}_{\{y_{m}^{(j)}>{\bar{x}}_{m}^{(j)}\}}(b_{m}^{(j)}-{\bar{x}}_{m}^{(j)})\\&+\mathbf{1}_{\{y_{m}^{(j)}<{\bar{x}}_{m}^{(j)}\}}(a_{m}^{(j)}-{\bar{x}}_{m}^{(j)})) \cdot e_{m}^{(j)}
        \end{align*}
        and:
        \begin{align*}
        \epsilon =\min[
            &\min_{j\in \mathcal{J}_l}\{\delta_{\bar{x}}^{l,j}(y)\},\min_{j\in \mathcal{J}_l}\{1- \delta_{\bar{x}}^{l,j}(y)\},\\
            &\min_{j\in \mathcal{J}_m}\{\delta_{\bar{x}}^{m,j}(y)\},\min_{j\in \mathcal{J}_m}\{1- \delta_{\bar{x}}^{m,j}(y)\}
            ]
        \end{align*}
        As $\epsilon>0$ and $v_{lm}$ is not the zero vector, we have $y'\neq y''$ and by the construction $y',y''\in \mathcal{D}\cap\widetilde{\mathcal{B}}_{1,\infty}^t({\bar{x}})$.

        Proof for $\star:$ We prove that, if $y$ in an extreme point, then for every $i\in[k]$ all the non zero values of $\delta_{\bar{x}}^{i,j}(y)$ are equal. Assume to the contrary that
        %If this is not the case, 
        there are $l\in[k]$ and $l'\in[d]$ such that $0<\delta_{\bar{x}}^{l,l'}(y)<\max_{j\in[d]}\delta_{\bar{x}}^{l,j}(y)$. We can slightly increase or decrease this channel without crossing $\bar{x}_l^{(l')}$ and the bound that is closer to $y_{l}^{(l')}$ than to ${\bar{x}}_{l}^{(l')}$ and without changing $\max_{j\in[d]}\delta_{\bar{x}}^{l,j}(y)$ to obtain two different inputs $y',y''\in \mathcal{D}\cap\widetilde{\mathcal{B}}_{1, \infty}^t(\bar{x})$ such that $y=0.5(y'+y'')$ and thus $y$ is not an extreme point. Formally, we define $y'=y+\epsilon\cdot v_{l}^{(l')}$ and $y''=y-\epsilon\cdot v_{l}^{(l')}$ for
        \begin{align*}
           v_l^{(l')}=&(\mathbf{1}_{y_{l}^{(l')}>{\bar{x}}_{l}^{(l')}}(b_{l}^{(l')}-{\bar{x}}_{l}^{(l')})\\&+\mathbf{1}_{y_{l}^{(l')}<{\bar{x}}_{l}^{(l')}}(a_{l}^{(l')}-{\bar{x}}_{l}^{(l')}))e_l^{(l')} 
        \end{align*}
        and
       $ \epsilon =\min\{\delta_{\bar{x}}^{l,l'}(y),\max_{j\in[d]}\delta_{\bar{x}}^{l,j}(y)-\delta_{\bar{x}}^{l,l'}(y)\}$. 
        As $\epsilon>0$ and $v_{l}^{(l')}$ is not the zero vector, we have $y'\neq y''$ and by the construction $y',y''\in \mathcal{D}\cap\widetilde{\mathcal{B}}_{1,\infty}^t({\bar{x}})$.
    \end{itemize}
\end{proof}

\begin{restatable}[]{theorem}{mvolumes}\label{thm4}
    \begin{enumerate}[label=\arabic*.]
        \item $\text{vol}(\widetilde{\mathcal{B}}_{1,\infty}^t(\bar{x}))=\text{vol}(\mathcal{D})\frac{t^{dk}(d!)^k}{(dk)!}$.
        \item $\text{vol}(\mathcal{D}\cap\widetilde{\mathcal{B}}_{1,\infty}^t(\bar{x}))=\text{vol}(\mathcal{D})\frac{t^{dk}(d!)^k}{(dk)!}(1+\sum_{r=1}^{t-1}c_{r,k,d,t})$,
    \end{enumerate}
    where \begin{multline*}
        c_{r,k,d,t}=(-1)^{r}\binom{k}{r}\\
        \sum_{m_1,\dots,m_r\in[d]}\frac{(dk)!(1-\frac{r}{t})^{dk}(t-r)^{-dr+\sum_{i=1}^rm_i}}{(d(k-r)+\sum_{i=1}^rm_i)!\prod_{i=1}^r(d-m_i)!}\\
    \end{multline*}
\end{restatable}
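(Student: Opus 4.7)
The plan is to mimic the outline of the proof of Theorem 2, extended to handle the multi-channel maximum inside the defining sum of $\widetilde{\mathcal{B}}_{1,\infty}^t(\bar{x})$. First, I would decompose $\widetilde{\mathcal{B}}_{1,\infty}^t(\bar{x})$ into the $(2^d)^k$ channel-signed orthants introduced in the compactness proof of Lemma 3, and reduce each orthant to a common canonical set by the affine change of variables $y_i^{(j)}=(b_i^{(j)}-\bar{x}_i^{(j)})\,z_i^{(j)}+\bar{x}_i^{(j)}$ when $s_i^{(j)}=1$ (with $a_i^{(j)}$ in place of $b_i^{(j)}$ when $s_i^{(j)}=-1$). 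As in the proof of Theorem 2, summing the Jacobian factors over all sign patterns factors as $\prod_{i,j}(b_i^{(j)}-a_i^{(j)})=\text{vol}(\mathcal{D})$, so both volumes reduce to computing $\text{vol}(\mathcal{A})$ and $\text{vol}(\mathcal{A}\cap[0,1]^{dk})$, where $\mathcal{A}=\{z\in([0,\infty)^d)^k \mid \sum_{i=1}^k\max_{j\in[d]} z_i^{(j)}\leq t\}$.

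Next I would integrate out each entry's $d$ channels. For fixed $u_i\in[0,\infty)$, the Lebesgue measure of $\{(z_i^{(1)},\ldots,z_i^{(d)})\in[0,\infty)^d \mid \max_j z_i^{(j)}\leq u_i\}$ equals $u_i^d$, so by Fubini the push-forward density of $u_i=\max_j z_i^{(j)}$ is $d\,u_i^{d-1}$. This reduces $\text{vol}(\mathcal{A})$ to the Dirichlet integral $d^k\int_{\Delta_{k,t}}\prod_i u_i^{d-1}\,du$, which by the standard formula $\int_{\sum u_i\leq t,\,u_i\geq 0}\prod u_i^{a_i-1}\,du=t^{\sum a_i}\prod\Gamma(a_i)/\Gamma(\sum a_i+1)$ with all $a_i=d$ evaluates to $t^{dk}(d!)^k/(dk)!$, yielding bullet 1.

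For bullet 2, the constraint $z\in[0,1]^{dk}$ becomes $u_i\in[0,1]$ in the reduced picture, so I must evaluate $d^k\int_{[0,1]^k\cap\Delta_{k,t}}\prod_i u_i^{d-1}\,du$. I would apply inclusion-exclusion on the subset $S$ of indices where $u_i>1$: by symmetry only $r=|S|$ matters, and terms with $r\geq t$ vanish because $\sum u_i>r\geq t$ is incompatible with $\sum u_i\leq t$. For each surviving $r$, I substitute $u_i=1+v_i$ for $i\in S$, expand $(1+v_i)^{d-1}$ by the binomial theorem, and apply the Dirichlet integral on the simplex of radius $t-r$ with mixed exponents $v_i^{m_i'}$ and $u_i^{d-1}$. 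Collecting the combinatorial factors $\binom{d-1}{m_i'}\,m_i'!=(d-1)!/(d-1-m_i')!$, reindexing $m_i=m_i'+1\in[d]$ so $(d-m_i)!$ appears in the denominator, and factoring $t^{dk}(d!)^k/(dk)!$ out of the whole sum reproduces the bullet-1 prefactor and leaves exactly the stated expression for $c_{r,k,d,t}$. The main obstacle I expect is the bookkeeping around this inclusion-exclusion combined with the binomial expansion: ensuring that the exponent $-dr+\sum m_i$ and the factorial $d(k-r)+\sum m_i$ emerge in the right places after the $m_i=m_i'+1$ shift, and that the factor $(1-r/t)^{dk}$ is correctly extracted from $(t-r)^{dk-dr+\sum m_i}/t^{dk}$. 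The orthant decomposition and the single-variable Dirichlet computation are direct analogues of the single-channel proof.
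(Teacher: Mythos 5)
Your proposal is correct and follows essentially the same route as the paper's proof: orthant decomposition with the affine change of variables summing to $\text{vol}(\mathcal{D})$, reduction to $\Delta_{k,d,t}$ and $\Delta_{k,d,t}\cap([0,1]^d)^k$, marginalizing the channels to the density $d\,u_i^{d-1}$, the Dirichlet integral, and inclusion-exclusion with the shift $u_i\mapsto u_i-1$ and binomial expansion. The only cosmetic difference is that you obtain $d\,u^{d-1}$ as the push-forward density of the channel maximum, whereas the paper partitions into $d^k$ symmetric cases by which channel attains the maximum and applies Fubini; these are the same computation.
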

Like~\Cref{thm2}, given fixed $d,t$ and $r\in[1,t-1]$, we have that $c_{r,k,d,t}$ exponentially converges to $0$ as $k$ increases.
\begin{proof}
    Like \Cref{thm2}, we decompose $\widetilde{\mathcal{B}}_{1,\infty}^t(\bar{x})$ and $\mathcal{D}\cap\widetilde{\mathcal{B}}_{1,\infty}^t(\bar{x})$ by the $2^{d\cdot k}$ orthants, where each is a scaled version of $\Delta_{k,d,t}=\{z\in([0,\infty)^d)^k \mid \sum_{i=1}^k \max_{j\in [d]} z_i^{(j)}\leq t\}$ for $\widetilde{\mathcal{B}}_{1,\infty}^t(\bar{x})$ and $\Delta_{k,d,t}\cap ([0,1]^d)^k$ for $\mathcal{D}\cap\widetilde{\mathcal{B}}_{1,\infty}^t(\bar{x})$.
    Like \Cref{thm2}, the summation of these scalings is $\text{vol}(\mathcal{D})$, thus
    $\text{vol}(\widetilde{\mathcal{B}}_{1,\infty}^t(\bar{x}))=\text{vol}(\mathcal{D})\cdot \text{vol}(\Delta_{k,d,t})$ and 
    $\text{vol}(\widetilde{\mathcal{B}}_{1,\infty}^t(\bar{x})\cap\mathcal{D})=\text{vol}(\mathcal{D})\cdot \text{vol}(\Delta_{k,d,t}\cap([0,1]^d)^k)$.
    
    Next, we compute $\text{vol}(\Delta_{k,d,t})$: %and $\text{vol}(\Delta_{k,d,t}\cap([0,1]^d)^k)$:
    \begin{multline*}
        \text{vol}(\Delta_{k,d,t}) = \int_{\substack{\sum_{i=1}^k\max_{j\in [d]} z_i^{(j)}\leq t\\
        \forall i \in [k],\, j \in [d]:\, 0\leq z_i^{(j)}}}
        \prod_{i=1}^k\prod_{j=1}^d\mathrm{d}z_i^{(j)}\\
        =d^k\int_{\substack{\sum_{i=1}^k z_i^{(1)} \leq t \\
             \forall i \in [k],\, j \in [d]:\, 0 \leq z_i^{(j)}\leq z_i^{(1)}} }
             \prod_{i=1}^k\prod_{j=1}^d\mathrm{d}z_i^{(j)}\\
        =d^k\int_{\substack{\sum_{i=1}^k z_i^{(1)}\leq t \\ \forall i \in [k]:\,0\leq z_i^{(1)}}} \prod_{i=1}^k\mathrm{d}z_{i}^{(1)}\prod_{j=2}^d\int_{0}^{z_i^{(1)}}\mathrm{d}z_i^{(j)}\\
        =d^k\int_{\substack{\sum_{i=1}^k z_i^{(1)}\leq t \\ \forall i \in [k]:\,0 \leq z_i^{(1)}}}
             \prod_{i=1}^k(z_i^{(1)})^{d-1}\mathrm{d}z_{i}^{(1)}\\
        =d^k\frac{t^{dk}((d-1)!)^k}{(dk)!}=\frac{t^{dk}(d!)^k}{(dk)!}
    \end{multline*}
    The second equality follows by partitioning $\Delta_{k,d,t}$ into $d^k$ symmetric options (one for each possibility of a maximal channel $j\in [d]$ for each $i\in [k]$). Note that the intersection of any two options has a zero volume.
     The third equality follows by Fubini's theorem. 
     The fourth equality follows by computing the inner integrals, each contributing $z_i^{(1)}$.
     % The fourth equality follows by summing $d-1$ times the inner integral, which is $z_i^{(1)}$. 
     The fifth equality follows by \citet[Eq.~4.634]{gradshteyn}.
     The last transition unifies $d^k$ with $(d-1)!^k$.
  
    We continue with $\text{vol}(\Delta_{k,d,t}\cap([0,1]^d)^k)$, which is identical to the first four transitions above, except that $z_i^{(j)}\leq 1$. We continue from the fourth transition, followed by a transition that omits the superscript $^{(1)}$ to simplify notations for the following equalities:
    \begin{multline*}
    \text{vol}(\Delta_{k,d,t}\cap([0,1]^d)^k)\\
        =d^k\int_{\substack{\sum_{i=1}^k z_i^{(1)}\leq t \\ \forall i \in [k]:\, 0\leq z_i^{(1)}\leq 1}}
             \prod_{i=1}^k(z_i^{(1)})^{d-1}\mathrm{d}z_{i}^{(1)}\\
        =d^k\int_{\substack{\sum_{i=1}^k z_i\leq t \\ \forall i \in [k]:\, 0\leq z_i\leq 1}}
             \prod_{i=1}^kz_i^{d-1}\mathrm{d}z_{i}
    \end{multline*}
     To compute this integral, we remove the requirement of $z_i\leq 1$ and then repair it by excluding all the cases of $z_i\geq 1$, using the inclusion-exclusion principle (similarly to a geometric derivation of Irwin-Hall by \citet{Geometric-Irwin-Hall}). 
    \begin{multline*}
%    d^k\int_{\substack{\sum_{i=1}^k z_i\leq t \\ \forall i \in [k]:\, 0\leq z_i\leq 1}}
 %            \prod_{i=1}^kz_i^{d-1}\mathrm{d}z_{i}\\         
    =d^k\int_{\substack{\sum_{i=1}^k z_i\leq t \\ \forall i \in [k]:\, 0 \leq z_i}}
             \prod_{i=1}^kz_i^{d-1}\mathrm{d}z_{i} \\ 
     +d^k\sum_{r=1}^{k}(-1)^{r}\sum_{1\leq i_1<\dots<i_r\leq k}\int_{\substack{\sum_{i=1}^k z_i\leq t \\ \forall i \in [k]:\, 0 \leq z_i \\ 1\leq z_{i_1},\dots,z_{i_r}}}\prod_{i=1}^kz_i^{d-1}\mathrm{d}z_{i}\\
     =\frac{t^{dk}(d!)^k}{(dk)!}(1+\sum_{r=1}^{k}c_{r,k,d,t})
    \end{multline*}
    where the last transition extracts the left term and denotes the rest of the terms $c_{r,k,d,t}$, where:
    \begin{multline*}
        c_{r,k,d,t}=\frac{(-1)^{r}(dk)!}{t^{dk}((d-1)!)^k}\\
        \sum_{1\leq i_1<\dots<i_r\leq k}\int_{\substack{\sum_{i=1}^k z_i\leq t \\ \forall i \in [k]:\, 0 \leq z_i \\ 1\leq z_{i_1},\dots,z_{i_r}}}\prod_{i=1}^kz_i^{d-1}\mathrm{d}z_{i}\\
        =\frac{(-1)^{r}(dk)!}{t^{dk}((d-1)!)^k}
        \binom{k}{r}\int_{\substack{\sum_{i=1}^k z_i\leq t \\ \forall i \in [k]:\, 0 \leq z_i \\ 1\leq z_{1},\dots,z_{r}}}\prod_{i=1}^kz_i^{d-1}\mathrm{d}z_{i}\\
    \end{multline*}
    The above transition follows since the integral is the same for every choice of $i_1,\dots,i_r$. 
    Note that if $r\geq t$, then $c_{r,k,d,t}=0$, because of the conditions $\sum_{i=1}^k z_i\leq t $ and $1\leq z_{1},\dots,z_{r}$. Thus, the sum is up to $t-1$ instead of $k$:
%for any $1\leq i_1<\dots<i_r\leq k$, we have that $\int_{\substack{\sum_{i=1}^k z_i\leq t \\ \forall i \in [k]:\, 0\leq z_i \\ 1\leq z_{1},\dots,z_{r}}}\prod_{i=1}^kz_i^{d-1}\mathrm{d}z_{i} =0$, thus:
    \begin{multline*}
    \text{vol}(\Delta_{k,d,t}\cap([0,1]^d)^k)
        =\frac{t^{dk}(d!)^k}{(dk)!}(1+\sum_{r=1}^{t-1}c_{r,k,d,t})
    \end{multline*}

    Lastly, we wish to compute the integral of $c_{r,k,d,t}$ using ~\citet[Eq.~4.634]{gradshteyn}. To this end, we change variables and rely on the binomial expansion.
    We first change the variables $z_i\mapsto z_i -1$ for $i\in[r]$:
    \begin{multline*}
    \int_{\substack{\sum_{i=1}^k z_i\leq t \\ \forall i \in [k]:\, 0\leq z_i \\ 1\leq z_{1},\dots,z_{r}}}\prod_{i=1}^kz_i^{d-1}\mathrm{d}z_{i}\\
        =\int_{\substack{\sum_{i=1}^k z_i\leq t-r \\ \forall i \in [k]:\, 0\leq z_i}}\left(\prod_{i=1}^r(z_i+1)^{d-1}\mathrm{d}z_{i}\right)\left(\prod_{i=r+1}^kz_i^{d-1}\mathrm{d}z_{i}\right)
    \end{multline*}
    We then rely on the binomial expansion for $\prod_{i=1}^r(z_i+1)^{d-1}$:
    \begin{multline*}
        \prod_{i=1}^r(z_i+1)^{d-1}
        =\prod_{i=1}^r\sum_{m=0}^{d-1}\binom{d-1}{m}z_i^{m}=\\
        \prod_{i=1}^r\sum_{m=1}^{d}\binom{d-1}{m-1}z_i^{m-1}
        =\sum_{m_1,\dots,m_r\in[d]}\prod_{i=1}^r\binom{d-1}{m_i-1}z_i^{m_i-1}
    \end{multline*}
    The second transition follows by changing the sum over $1$ to $d$. The third transition switches the multiplication and the sum.
    Substituting it into the integral:
    \begin{multline*}
    \int_{\substack{\sum_{i=1}^k z_i\leq t \\ \forall i \in [k]:\, 0\leq z_i \\ 1\leq z_{1},\dots,z_{r}}}\prod_{i=1}^kz_i^{d-1}\mathrm{d}z_{i}\\
        =\sum_{m_1,\dots,m_r\in[d]}\left(\prod_{i=1}^r\binom{d-1}{m_i-1}\right)\\
        \int_{\substack{\sum_{i=1}^k z_i\leq t-r \\ \forall i \in [k]:\, 0\leq z_i}}\prod_{i=1}^rz_i^{m_i-1}\mathrm{d}z_{i}\prod_{i=r+1}^kz_i^{d-1}\mathrm{d}z_{i}\\
        =\sum_{m_1,\dots,m_r\in[d]}\left(\prod_{i=1}^r\binom{d-1}{m_i-1}\right)\\
        \frac{(t-r)^{d(k-r)+\sum_{i=1}^rm_i}((d-1)!)^{k-r}\prod_{i=1}^r(m_i-1)!}{(d(k-r)+\sum_{i=1}^rm_i)!}\\
        =\sum_{m_1,\dots,m_r\in[d]}\frac{(t-r)^{d(k-r)+\sum_{i=1}^rm_i}((d-1)!)^k}{(d(k-r)+\sum_{i=1}^rm_i)!\prod_{i=1}^r(d-m_i)!}\\
    \end{multline*}
The second equality follows by~\citet[Eq.~4.634]{gradshteyn}. The last equality is algebraic rewrites.
   We substitute into $c_{r,k,d,t}$ and simplify with algebraic rewrite:
    \begin{multline*}
        c_{r,k,d,t}=\frac{(-1)^{r}(dk)!}{t^{dk}((d-1)!)^k}\binom{k}{r}\\
        \sum_{m_1,\dots,m_r\in[d]}\frac{(t-r)^{d(k-r)+\sum_{i=1}^rm_i}((d-1)!)^k}{(d(k-r)+\sum_{i=1}^rm_i)!\prod_{i=1}^r(d-m_i)!}\\
        =(-1)^{r}\binom{k}{r}\\
        \sum_{m_1,\dots,m_r\in[d]}\frac{(dk)!(1-\frac{r}{t})^{dk}(t-r)^{-dr+\sum_{i=1}^rm_i}}{(d(k-r)+\sum_{i=1}^rm_i)!\prod_{i=1}^r(d-m_i)!}\\
    \end{multline*}
    Note that $(1-\frac{r}{t})^{dk}$ causes the exponential decay of $ c_{r,k,d,t}$, since  $\frac{(dk)!}{(d(k-r)+\sum_{i=1}^rm_i)!}$ and $\binom{k}{r}$ are polynomial in $k$ and the rest are independent of $k$.
\end{proof}

\section{Implementation}
In this section, we describe our extension to GPUPoly to support our bound propagations.
We focus on single-channel inputs. The extension to multi-channel inputs is very similar.
 
Recall that GPUPoly computes bounds over the box domain, where the lower bound is $l=\min_{y\in \mathcal{D}}\sum_{i=1}^kw_iy_i=\sum_{i=1}^kw_i\bar{x}_i+\sum_{i=1}^kd^-_{i}$ (the upper bound is analogous). The CUDA kernel
  distributes the set of indices $[k]$ across threads.
Each thread computes, for every assigned index $i$, the value $\min(w_i a_i, w_i b_i) = w_i \bar{x}_i + d_i^{-}$ and sums these values over its assigned indices. These sums from all threads are then combined using a tree-based reduction scheme to obtain the lower bound. The GPU's shared memory is used to share the intermediate sums between threads.

For $\widetilde{\mathcal{B}}_1^t(x)$, we remind that the lower bound is $l=\min_{y\in \widetilde{\mathcal{B}}_1^t(\bar{x})}\sum_{i=1}^kw_iy_i=\left(\sum_{i=1}^kw_i\bar{x}_i\right)+t\cdot d^-_{l_1}$.
To efficiently compute the sum $w_i \bar{x}_i$ and add $t$ times the smallest $d_i^{-}$, we modify the CUDA kernel so that each thread computes the sum $w_i \bar{x}_i$ over its assigned indices and records the minimum $d_i^{-}$ over its assigned indices. In the tree-based reduction scheme, it combines the sums and propagates the minimum $d_i^{-}$ observed so far. Finally, the lower bound is obtained by adding $t$ times the minimum $d_i^{-}$ to the accumulated sum. Synchronizing the minimum $d_i^{-}$ across threads doubles the required shared memory in the reduction scheme.

For the $\ell_0$-ball, the lower bound is $l=\min_{y\in \mathcal{B}_0^t(\bar{x})}\sum_{i=1}^kw_iy_i=\left(\sum_{i=1}^kw_i\bar{x}_i\right)+(d^-_{l_1}+\dots+ d^-_{l_t})$. 
As in the previous case, each thread records the $t$ smallest $d_i^{-}$ values it encounters and selects the values to propagate at each reduction step.
This computation increases the complexity of each synchronization step to $\mathcal{O}(t)$ (instead of $\mathcal{O}(1)$).
 If we kept double precision, this would increase the shared memory usage by a factor of $t+1$.
Instead, our implementation propagates the $d_i^{-}$ values as floats (instead of doubles).

We note that, besides the aforementioned changes, our implementation does not modify GPUPoly, thus benefiting from its optimizations. 
Additionally, the resource requirements remain comparable to those of GPUPoly.

\begin{figure}[t]
\centering
\includegraphics[width=0.65\columnwidth]{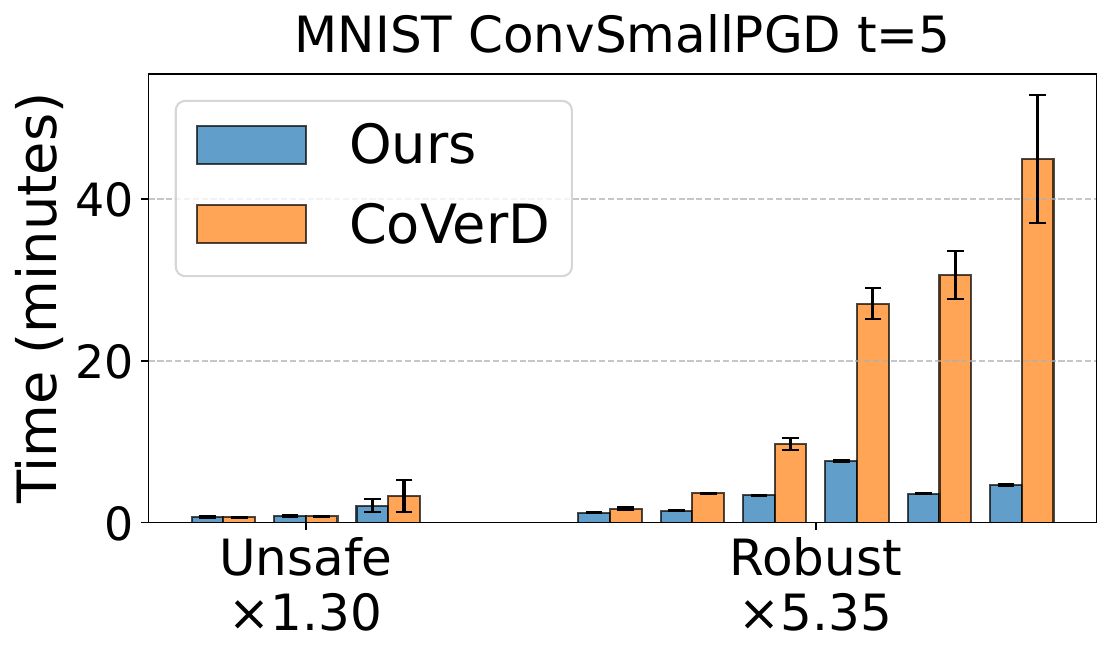}
\caption{{\tool} + \topt compared to \tool on a less challenging benchmark.}
\label{fig:more_comp_5}
\end{figure}

\begin{figure}[t]
\centering
\includegraphics[width=0.95\columnwidth]{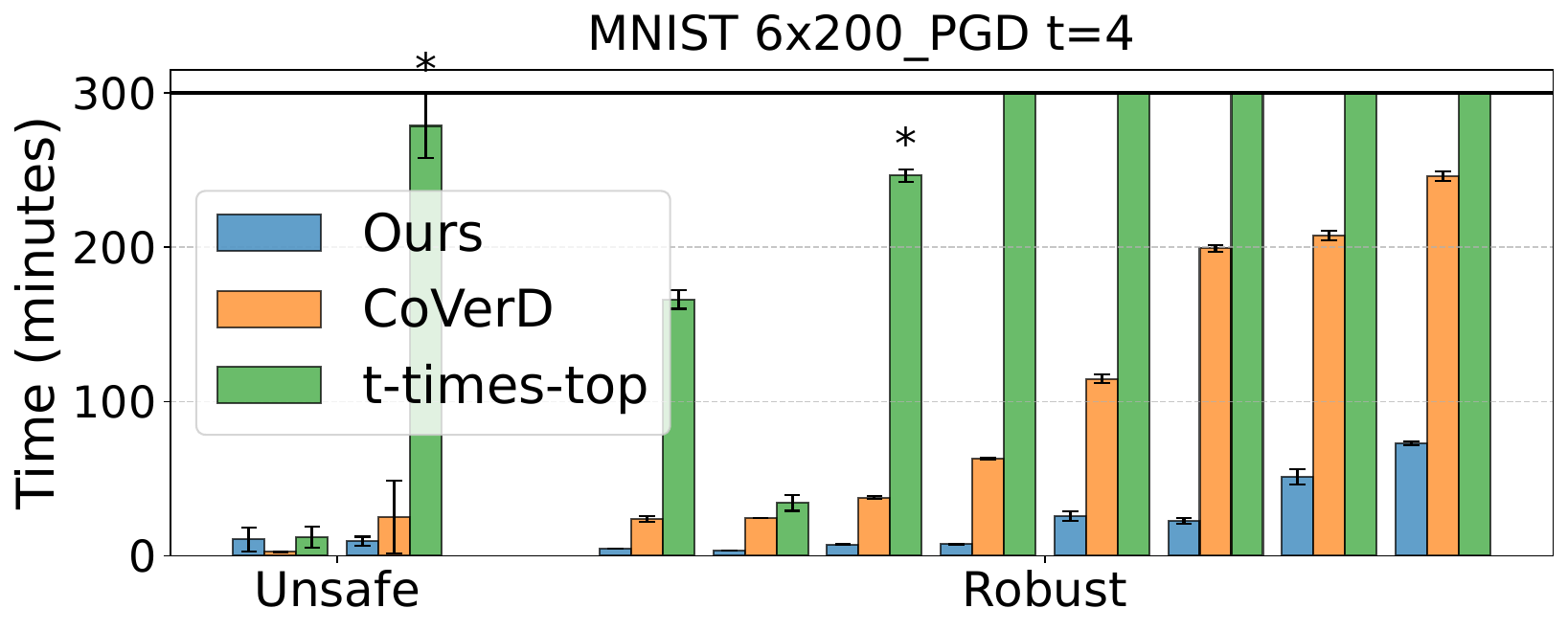}
\caption{\tool with \topo.}
\label{fig:coverd+t-times-top}
\end{figure}

\section{Additional Results}
In this section, we provide additional results.

\begin{figure*}[t]
\centering
\includegraphics[width=0.82\textwidth]{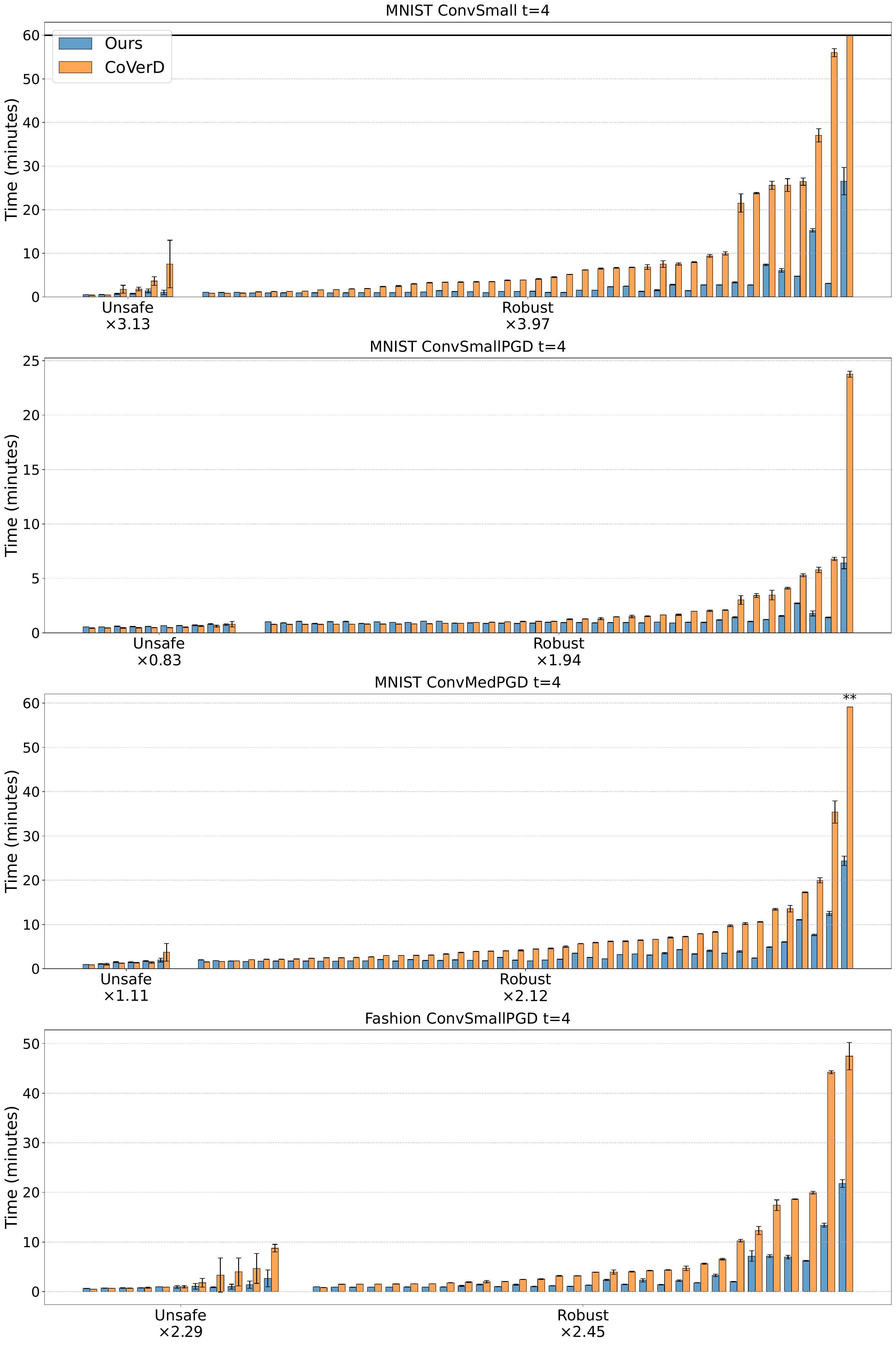}
\caption{{\tool} + \topt compared to \tool on less challenging benchmarks.}
\label{fig:more_comp_4}
\end{figure*} 

\Cref{fig:more_comp_5} and \Cref{fig:more_comp_4} extend \Cref{fig:covev} with \tool's less challenging benchmarks (smaller values of $t$ than in~\Cref{fig:covev}).
For a single $\ell_0$-ball, two runs of \tool out of three reached the timeout (denoted by **) and as in figure \Cref{fig:covev}, these runs are excluded. The figures show that our approach significantly boosts \tool.

\Cref{fig:coverd+t-times-top} compares \tool+\topo with our approach and \tool on the MNIST 6x200\_PGD network. 
For two $\ell_0$-balls, one run of \tool+\topo out of three reached the timeout (denoted by *). 
It shows that not only our approach is significantly better, but also \tool is better than \tool+\topo, which is consistent with~\Cref{fig:bound_prop}.

\begin{figure*}[t]
\centering
\includegraphics[width=0.95\textwidth]{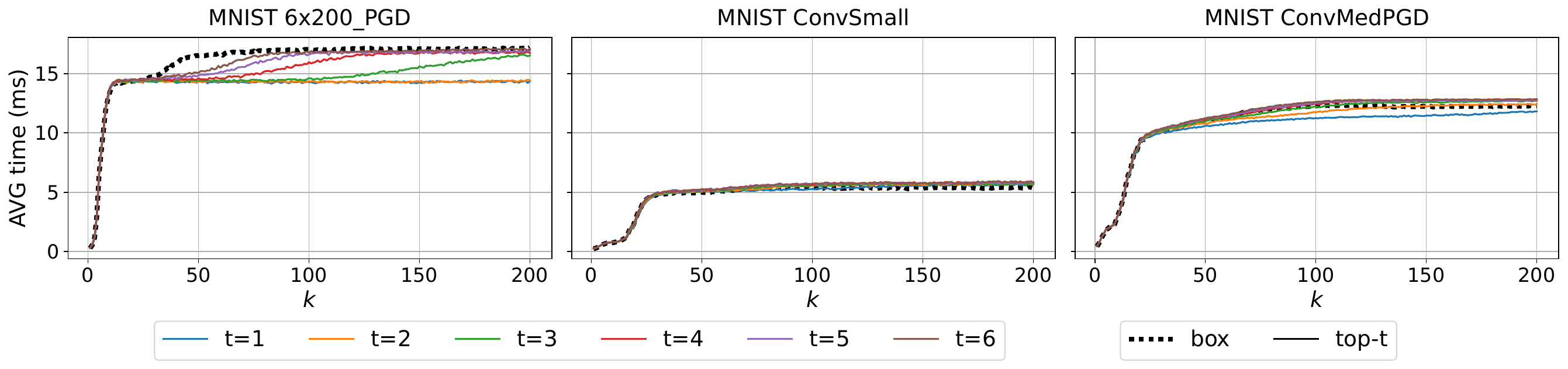}
\caption{The mean running time of GPUPoly and \topt on neighborhoods capturing perturbations to subsets of $k$ pixels.}
\label{fig:bound_prop_times}
\end{figure*}

\begin{table*}[t]
\centering
{\fontsize{9}{11}\selectfont
\setlength{\tabcolsep}{1mm}
\begin{tabular}{llllccccccccccc}
\toprule
\multicolumn{4}{c}{} & \multicolumn{10}{c}{\textbf{Image Index}} \\
\cmidrule(lr){6-15}
\textbf{Dataset} & \textbf{Network} & \textbf{$t$} & \textbf{Seed} & & \textbf{0} & \textbf{1} & \textbf{2} & \textbf{3} & \textbf{4} & \textbf{5} & \textbf{6} & \textbf{7} & \textbf{8} & \textbf{9} \\
\midrule

MNIST & $6\times200$\_PGD & $4$ & 42 & \tool & 24.3 & 36.5 & 205.7 & 23.8 & 3.0* & 62.1 & 112.9 & 246.8 & 198.0 & 12.8* \\
      &                   &     &   & Ours  & 3.4 & 6.8 & 53.5 & 4.8 & 2.2* & 7.8 & 22.0 & 74.4 & 25.5 & 13.5* \\
\cmidrule(lr){4-15}
      &                   &     & 41 & \tool & 24.5 & 38.9 & 212.1 & 21.5 & 2.7* & 63.6 & 112.5 & 241.9 & 202.5 & 4.6* \\
      &                   &     &   & Ours  & 3.5 & 7.6 & 44.3 & 4.8 & 9.0* & 7.7 & 29.8 & 72.2 & 21.4 & 6.7* \\
\cmidrule(lr){4-15}
      &                   &     & 40 & \tool & 24.5 & 37.7 & 205.6 & 26.3 & 2.2* & 63.6 & 118.6 & 249.5 & 197.8 & 58.4* \\
      &                   &     &   & Ours  & 3.4 & 7.5 & 56.2 & 4.9 & 20.7* & 7.3 & 26.1 & 71.7 & 21.6 & 8.3* \\
\cmidrule(lr){2-15}
      & ConvSmall & $5$ & 42 & \tool & 15.1 & 29.4 & TO & 3.8 & 0.9* & TO & TO & 104.7 & - & 104.0 \\
      &           &     &   & Ours  & 3.5 & 3.9 & 72.7 & 1.2 & 0.7* & 128.0 & 117.9 & 12.2 & - & 9.3 \\
\cmidrule(lr){4-15}
      &           &     & 41 & \tool & 15.3 & 30.2 & TO & 4.0 & 0.7* & TO & TO & 87.0 & - & 62.5 \\
      &           &     &   & Ours  & 3.4 & 3.9 & 72.8 & 1.2 & 0.8* & 102.3 & 114.8 & 12.0 & - & 9.5 \\
\cmidrule(lr){4-15}
      &           &     & 40 & \tool & 15.0 & 31.9 & TO & 3.8 & 1.5* & TO & TO & 69.9 & - & 62.1 \\
      &           &     &   & Ours  & 3.5 & 4.0 & 72.3 & 1.2 & 0.9* & 135.8 & 148.4 & 12.1 & - & 9.4 \\
\cmidrule(lr){2-15}
      & ConvSmallPGD & $6$ & 42 & \tool & 47.3 & TO & TO & 27.1 & 4.0* & 213.0 & TO & 2.2* & - & 2.2* \\
      &              &     &   & Ours  & 9.9 & 59.4 & 193.5 & 3.0 & 2.6* & 42.0 & 116.0 & 2.5* & - & 2.6* \\
\cmidrule(lr){4-15}
      &              &     & 41 & \tool & 50.3 & TO & TO & 65.6 & 1.5* & 226.4 & TO & 2.3* & - & 2.0* \\
      &              &     &   & Ours  & 10.5 & 104.7 & 154.8 & 2.9 & 2.0* & 38.7 & 105.5 & 2.4* & - & 2.3* \\
\cmidrule(lr){4-15}
      &              &     & 40 & \tool & 61.9 & TO & TO & 14.6 & 1.6* & TO & TO & 2.4* & - & 2.1* \\
      &              &     &   & Ours  & 9.3 & 104.6 & 163.1 & 2.9 & 4.7* & 38.1 & 201.2 & 2.4* & - & 2.4* \\
\cmidrule(l){2-15}
      & ConvMedPGD & $5$ & 42 & \tool & 37.5 & 91.0 & 111.8 & 8.1 & 1.2* & 38.7 & 136.4 & 213.4 & 1.2* & 100.2 \\
      &            &     &   & Ours  & 14.7 & 15.9 & 58.5 & 2.4 & 1.4* & 17.6 & 47.9 & 62.4 & 1.4* & 21.6 \\
\cmidrule(lr){4-15}
      &            &     & 41 & \tool & 38.7 & 87.0 & 111.3 & 7.8 & 1.2* & 37.8 & 136.5 & 215.6 & 1.3* & 97.8 \\
      &            &     &   & Ours  & 14.2 & 15.8 & 60.2 & 2.3 & 1.4* & 17.6 & 46.2 & 63.0 & 1.3* & 27.3 \\
\cmidrule(lr){4-15}
      &            &     & 40 & \tool & 37.6 & 86.8 & 111.2 & 7.8 & 1.2* & 38.2 & 137.0 & 240.8 & 1.3* & 99.9 \\
      &            &     &   & Ours  & 14.0 & 19.1 & 59.8 & 2.3 & 1.5* & 17.5 & 57.5 & 64.0 & 1.3* & 20.9 \\
\cmidrule(lr){2-15}
      & ConvBig & $3$ & 42 & \tool & 22.9 & 20.8 & 252.0 & 10.5 & 36.7 & 90.8 & 30.4 & 18.7 & - & 13.4 \\
      &         &     &   & Ours  & 18.6 & 14.5 & 216.3 & 14.0 & 19.6 & 72.2 & 17.8 & 13.1 & - & 11.2 \\
\cmidrule(lr){4-15}
      &         &     & 41 & \tool & 22.9 & 20.5 & 250.8 & 10.5 & 36.7 & 90.4 & 31.0 & 18.3 & - & 13.3 \\
      &         &     &   & Ours  & 18.5 & 14.4 & 218.0 & 14.1 & 20.3 & 71.2 & 17.6 & 13.2 & - & 11.3 \\
\cmidrule(lr){4-15}
      &         &     & 40 & \tool & 23.0 & 20.7 & 250.1 & 10.4 & 37.0 & 90.3 & 30.8 & 18.3 & - & 13.3 \\
      &         &     &   & Ours  & 18.7 & 14.4 & 216.6 & 14.1 & 19.6 & 71.7 & 17.9 & 13.2 & - & 11.4 \\

\bottomrule
\end{tabular}
}
\caption{A detailed list of our experimental results on the MNIST dataset.}
\label{table:comparison_to_coverd}
\end{table*}

\begin{table*}[t]
\centering
{\fontsize{9}{11}\selectfont
\setlength{\tabcolsep}{1mm} 
\begin{tabular}{llllccccccccccc}
\toprule
\multicolumn{4}{c}{} & \multicolumn{10}{c}{\textbf{Image Index}} \\
\cmidrule(lr){6-15}
\textbf{Dataset} & \textbf{Network} & \textbf{$t$} & \textbf{Seed} & & \textbf{0} & \textbf{1} & \textbf{2} & \textbf{3} & \textbf{4} & \textbf{5} & \textbf{6} & \textbf{7} & \textbf{8} & \textbf{9} \\
\midrule

Fashion & ConvSmallPGD & $5$ & 42 & \tool & 19.6 & 69.5 & TO & - & 15.0* & - & 21.6 & 0.8* & 22.0 & 25.3 \\
        &              &     &   & Ours  & 3.5 & 30.8 & TO & - & 18.6* & - & 5.0 & 0.9* & 4.7 & 5.5 \\
\cmidrule(lr){4-15}
        &              &     & 41 & \tool & 20.9 & 81.8 & TO & - & 96.8* & - & 21.1 & 0.8* & 20.7 & 25.0 \\
        &              &     &   & Ours  & 3.4 & 37.2 & TO & - & 22.7* & - & 4.9 & 1.0* & 4.1 & 5.5 \\
\cmidrule(lr){4-15}
        &              &     & 40 & \tool & 20.3 & 84.2 & TO & - & 134.5* & - & 20.9 & 0.8* & 21.6 & 24.7 \\
        &              &     &   & Ours  & 3.4 & 34.5 & TO & - & 5.9* & - & 5.0 & 0.9* & 4.1 & 5.5 \\
\cmidrule(lr){2-15}
        & ConvMedPGD & $4$ & 42 & \tool & 17.5 & 8.8 & 98.4 & 3.1* & 54.4 & - & 6.3 & - & 16.4 & 17.4 \\
        &            &     &   & Ours  & 4.4 & 2.8 & 40.7 & 3.9* & 16.6 & - & 1.9 & - & 4.8 & 5.5 \\
\cmidrule(lr){4-15}
        &            &     & 41 & \tool & 17.3 & 8.9 & 98.4 & 2.5* & 60.5 & - & 6.5 & - & 16.3 & 17.7 \\
        &            &     &   & Ours  & 4.4 & 2.8 & 40.1 & 1.2* & 15.2 & - & 1.9 & - & 4.6 & 5.5 \\
\cmidrule(lr){4-15}
        &            &     & 40 & \tool & 17.2 & 8.8 & 99.1 & 2.9* & 58.1 & - & 6.4 & - & 16.1 & 18.1 \\
        &            &     &   & Ours  & 4.4 & 2.8 & 40.3 & 3.9* & 14.8 & - & 1.9 & - & 4.8 & 5.6 \\

\midrule
CIFAR-10 & ConvSmallPGD & $3$ & 42 & \tool & 41.4 & 42.0 & 50.3 & 3.5* & 5.7* & 184.1 & 0.6* & 224.9 & 5.3* & - \\
         &              &     &   & Ours  & 28.3 & 26.3 & 28.5 & 3.9* & 5.1* & 131.5 & 0.5* & 157.8 & 7.8* & - \\
\cmidrule(lr){4-15}
         &              &     & 41 & \tool & 41.7 & 41.9 & 46.9 & 5.7* & 3.5* & 180.6 & 0.4* & 220.7 & 3.3* & - \\
         &              &     &   & Ours  & 26.0 & 25.4 & 27.8 & 0.9* & 3.6* & 135.0 & 0.6* & 155.6 & 9.3* & - \\
\cmidrule(lr){4-15}
         &              &     & 40 & \tool & 41.9 & 41.0 & 47.0 & 6.1* & 7.3* & 178.5 & 0.5* & 220.8 & 3.6* & - \\
         &              &     &   & Ours  & 26.1 & 25.3 & 28.6 & 2.7* & 7.9* & 130.9 & 0.5* & 155.6 & 4.0* & - \\

\bottomrule
\end{tabular}
}
\caption{A detailed list of our experimental results on Fashion-MNIST and CIFAR-10 datasets.}
\label{table:comparison_to_coverd_2}
\end{table*}

\Cref{fig:bound_prop_times} shows the average running time of GPUPoly and \topt for the experiments of~\Cref{fig:bound_prop}.
It shows that our \topt does not introduce significant overhead and is sometimes faster than GPUPoly, for our benchmarks.

\Cref{table:comparison_to_coverd} and \Cref{table:comparison_to_coverd_2} provide a detailed list of our experimental results shown in~\Cref{fig:covev}.
For every dataset, network, $t$, seed (for fixing \tool's random choices), and image index (from the dataset's testset), it shows the analysis time of \tool and \tool+\topt in minutes. 
The notation `–' indicates that an incorrectly classified image (thus not analyzed), `*' denotes that the verification determined that the $\epsilon$-ball is not robust, and `TO' indicates a timeout.

\end{document}